\newtheorem{theorem}{Theorem}
\newtheorem{corollary}{Corollary}%
\newtheorem{lemma}{Lemma}%
\newtheorem*{theorem*}{Theorem}
\newtheorem*{definition*}{Definition}
 \newcommand{\cP}{\mathcal{P}}
\newcommand{\sA}{\mathscr{A}}
 \newcommand{\sR}{\mathscr{R}}
\newcommand{\sS}{\mathscr{S}}
\newcommand\EXP{\mathbf{\mathrm{E}}}
\newcommand\argmax{\mathop{\mathrm{argmax}\,}}
\renewcommand{\leq}{\leqslant}
\newcommand{\xmark}{\ding{53}}%
\newcommand\tikzmark[2]{%
\llap{\tikz[remember picture] 
\node[inner sep=6pt,outer sep=12pt] (#1){#2};%
}}
\newcommand\link[2]{%
\begin{tikzpicture}[overlay, remember picture, cyan, thick]
  \draw[-{Circle[color=magenta]}, line width=2pt] (#1.east) to  (#2.east);
\end{tikzpicture}%
}
\newcolumntype{R}[1]{>{\raggedright\arraybackslash}p{#1}}
\newcolumntype{C}[1]{>{\centering\arraybackslash}p{#1}}
\newcolumntype{L}[1]{>{\raggedleft\arraybackslash}p{#1}}
\definecolor{mColor1}{rgb}{0.95,0.95,0.95}
\definecolor{mydarkblue}{rgb}{0,0.08,0.45}
\newcommand{\stoptocwriting}{%
  \addtocontents{toc}{\protect\setcounter{tocdepth}{-5}}}
\newcommand{\resumetocwriting}{%
  \addtocontents{toc}{\protect\setcounter{tocdepth}{\arabic{tocdepth}}}}
\newcommand{\setword}[2]{%
  \phantomsection
  #1\def\@currentlabel{\unexpanded{#1}}\label{#2}%
}
\icmltitlerunning{Align-RUDDER: Learning From Few Demonstrations by Reward Redistribution}
\begin{document}

\twocolumn[
\icmltitle{Align-RUDDER: Learning From Few Demonstrations by Reward Redistribution}

\icmlsetsymbol{equal}{*}

\begin{icmlauthorlist}
\icmlauthor{Vihang Patil}{equal,jku}
\icmlauthor{Markus Hofmarcher}{equal,jku}
\icmlauthor{Marius-Constantin Dinu}{jku,dtr}
\icmlauthor{Matthias Dorfer}{enlite}
\icmlauthor{Patrick Blies}{enlite}
\icmlauthor{Johannes Brandstetter}{jku,msr}
\icmlauthor{José Arjona-Medina}{jku,dtr}
\icmlauthor{Sepp Hochreiter}{jku,iarai}
\end{icmlauthorlist}

\icmlaffiliation{jku}{ELLIS Unit Linz and LIT AI Lab, Institute for Machine Learning, Johannes Kepler University Linz}
\icmlaffiliation{dtr}{Dynatrace Research}
\icmlaffiliation{enlite}{enliteAI}
\icmlaffiliation{msr}{Now at Microsoft Research}
\icmlaffiliation{iarai}{Institute of Advanced Research in Artificial Intelligence}

\icmlcorrespondingauthor{Vihang Patil}{patil@ml.jku.at}

\icmlkeywords{Machine Learning, ICML}

\vskip 0.3in
]

\printAffiliationsAndNotice{\icmlEqualContribution} %

\begin{abstract}
Reinforcement learning algorithms require many samples when solving complex hierarchical tasks with sparse and delayed rewards. For such complex tasks, the recently proposed RUDDER uses reward redistribution to leverage steps in the $Q$-function that are associated with accomplishing sub-tasks. However, often only few episodes with high rewards are available as demonstrations since current exploration strategies cannot discover them in reasonable time. In this work, we introduce Align-RUDDER, which utilizes a profile model for reward redistribution that is obtained from multiple sequence alignment of demonstrations. Consequently, Align-RUDDER employs reward redistribution effectively and, thereby, drastically improves learning on few demonstrations. Align-RUDDER outperforms competitors on complex artificial tasks with delayed rewards and few demonstrations. On the Minecraft \texttt{ObtainDiamond} task, Align-RUDDER is able to mine a diamond, though not frequently. Code is available at \url{github.com/ml-jku/align-rudder}.
\end{abstract}

\stoptocwriting
\section{Introduction}

Reinforcement learning algorithms struggle with learning complex tasks that
have sparse and delayed rewards \citep{Sutton:18book,Rahmandad:09,Luoma:17}.
For delayed rewards,
temporal difference (TD)
suffers from vanishing information \citep{Arjona:19}.
On the other hand Monte Carlo (MC)
has high variance since it must
average over all possible futures \citep{Arjona:19}. 
Monte-Carlo Tree Search (MCTS), used for Go and chess,
can handle delayed and rare rewards since 
it has a perfect environment model \citep{Silver:16,Silver:17}.
RUDDER \citep{Arjona:19,Arjona:18} has been shown to 
excel in model-free learning of policies when only
sparse and delayed rewards are given. 
RUDDER requires episodes with high rewards to store them in its lessons replay buffer for learning
a reward redistribution model like an LSTM network.
However, for complex tasks,
current exploration strategies find episodes with high rewards 
only after an incommensurate long time.
Humans and animals obtain high reward episodes by teachers,
role models, or prototypes.
Along this line, we assume that episodes with high rewards are given
as demonstrations.
Since generating demonstrations is often tedious for humans and
time-consuming for exploration strategies, typically, only
a few demonstrations are available.
However, RUDDER's LSTM \citep{Hochreiter:91,Hochreiter:97} 
as a deep learning method
requires many examples for learning.
Therefore, we introduce Align-RUDDER,
which replaces RUDDER's LSTM
with a profile model obtained from multiple sequence alignment (MSA) of the demonstrations.
Profile models are well known in bioinformatics. They are used
to score new sequences according to their sequence similarity to the aligned sequences.
Like RUDDER also Align-RUDDER performs reward redistribution ---using an alignment model---, 
which considerably speeds up learning even if only a few demonstrations are available.

Our main contributions are: 
\begin{itemize}
    \item We suggest a reinforcement algorithm that works well for sparse and delayed rewards,
    where standard exploration fails but few demonstrations with high rewards are available.
    \item We adopt multiple sequence alignment from bioinformatics
    to construct a reward redistribution technique that 
    works with few demonstrations. 
    \item We propose a method that uses alignment techniques and reward redistribution 
    for identifying sub-goals and sub-tasks which in turn allow for 
    hierarchical reinforcement learning.
\end{itemize}

\section{Review of RUDDER}
\label{sec:rudder_review}
{\bf Basic insight: $Q$-functions for complex tasks are 
step functions.} 
Complex tasks are typically composed of sub-tasks.
Therefore the $Q$-function of an optimal policy 
resembles a step function. 
The $Q$-function is the expected future return
and it increases (i.e, makes a step) when a sub-task is completed. 
Identifying large steps in the $Q$-function speeds up learning
since it allows us 
(i) to increase the return by performing actions that cause the step
and 
(ii) to sample episodes with a larger return for learning.

An approximation of the $Q$-function
must predict the expected future return for every 
state-action pair. 
However, a $Q$-function that resembles a step-function 
is mostly constant. Therefore 
predictions are only necessary at the steps.
We have to identify the relevant state-actions that
cause the steps and then predict the size of the steps.
An LSTM network \citep{Hochreiter:91,Hochreiter:95,Hochreiter:97,Hochreiter:97e} can identify 
relevant state-actions that open the input gate
to store the size of the steps in the memory cells.
Consequently, an LSTM only updates its states and changes its return 
prediction when a new relevant state-action pair is observed.
Therefore, for an LSTM network that predicts the return of an episode both the change of the prediction and opening input gates indicate $Q$-function steps.

\begin{figure*}[t]
    \includegraphics[width=\linewidth]{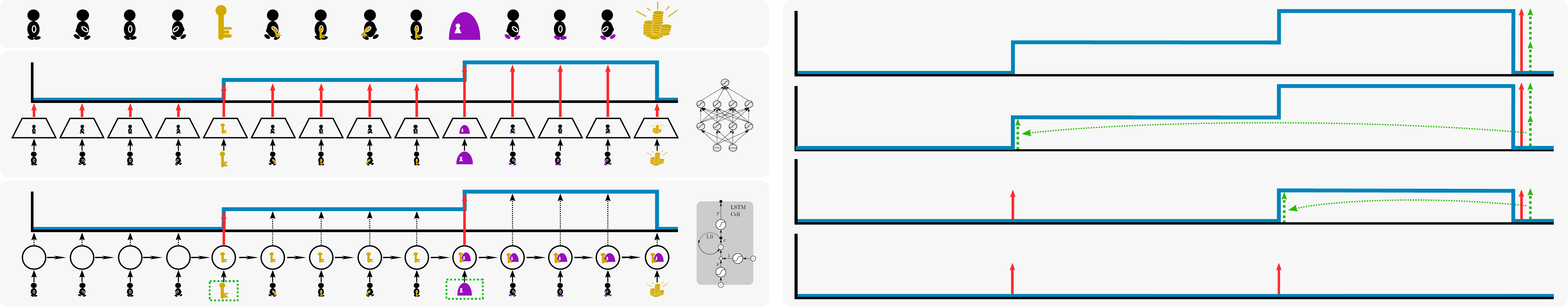}
\caption[]{
    {\bf Basic insight into reward redistribution.} 
    {\bf {Left panel}, Row 1}: An agent has to take a key to unlock a door. 
    Both events increase the probability of receiving the treasure, which the agent always gets
    as delayed reward, when the door is unlocked at sequence end.
    {\bf Row 2}: The $Q$-function approximation 
    typically predicts the expected return at 
    every state-action pair (red arrows).
    {\bf Row 3}: However, the $Q$-function approximation 
    requires only to predict the steps (red arrows). 
    {\bf {Right panel}, Row 1}: The $Q$-function is the future expected return (blue curve). Green arrows indicate $Q$-function steps and 
    the big red arrow the delayed reward at sequence end.
    {\bf Row 2 and 3}: The redistributed rewards  
    correspond to steps in the $Q$-function (small red arrows).
    {\bf Row 4}: After redistributing the reward, only the redistributed
    immediate reward remains (red arrows). Reward is no longer delayed.  \label{fig:StepFunction}
}
\end{figure*}

{\bf Reward Redistribution.}
We consider episodic Markov decision processes (MDPs), i.e., the reward is only given once at the end of the sequence. 
The $Q$-function is assumed to be a step function, that is,
the task can be decomposed into sub-tasks (see previous paragraph).
{\em Reward redistribution} aims at giving the 
differences in the $Q$-function of an optimal policy 
as a new immediate reward.
Since the $Q$-function of an optimal policy is not known,
we approximate it by predicting the expected return by
an LSTM network or by an alignment model in this work.
The differences in predictions determine the 
reward redistribution.
The prediction model will first identify 
the largest steps in the $Q$-function as they decrease 
the prediction error most.
It turns out that just identifying the largest steps even with
poor predictions speeds up learning considerably~\citep{Arjona:19}.
See Figure~\ref{fig:StepFunction} for a description of
the reward redistribution. 

{\bf Learning methods based on reward redistribution.}
\label{subsec:learning_methods}
The redistributed reward serves as reward for
a subsequent learning method:
(A) The $Q$-values can be directly estimated \citep{Arjona:19}, which is
used in the experiments for the artificial tasks and BC pre-training for MineCraft.
(B) Redistributed rewards can serve for learning with policy gradients like
Proximal Policy Optimization (PPO) \citep{Schulman:17}, which is used in the
MineCraft experiments.
(C) Redistributed rewards can serve for temporal difference learning 
like $Q$-learning \citep{Watkins:89}.
Recently, the convergence of RUDDER learning methods has been proven
under commonly used assumptions \citep{Holzleitner2021}.

{\bf LSTM models for reward redistribution.}
RUDDER uses an LSTM model for predicting the future return. 
The reward redistribution is the difference between two subsequent predictions.
If a state-action pair increases the prediction of the return,
then reward is immediately given.
Using state-action sub-sequences $(s,a)_{0:t}=(s_0,a_0,\ldots,s_t,a_t)$,
the redistributed reward is
$R_{t+1}=g((s,a)_{0:t}) - g((s,a)_{0:t-1})$,
where $g$ is an LSTM model that predicts the return of the episode.
The LSTM model learns at first to approximate the largest steps of the
$Q$-function since they reduce the prediction error the most.
Modern Hopfield networks \citep{ramsauer2021hopfield,Widrich:20,furst2022cloob,paischer2022helm, schaefl:22} can also be used to redistribute reward \citep{widrich2021modern}.

\section{Align-RUDDER: RUDDER with Few Demonstrations}
\label{sec:sequence_alignment}

\begin{figure}[t]
\centering
\includegraphics[width=\linewidth]{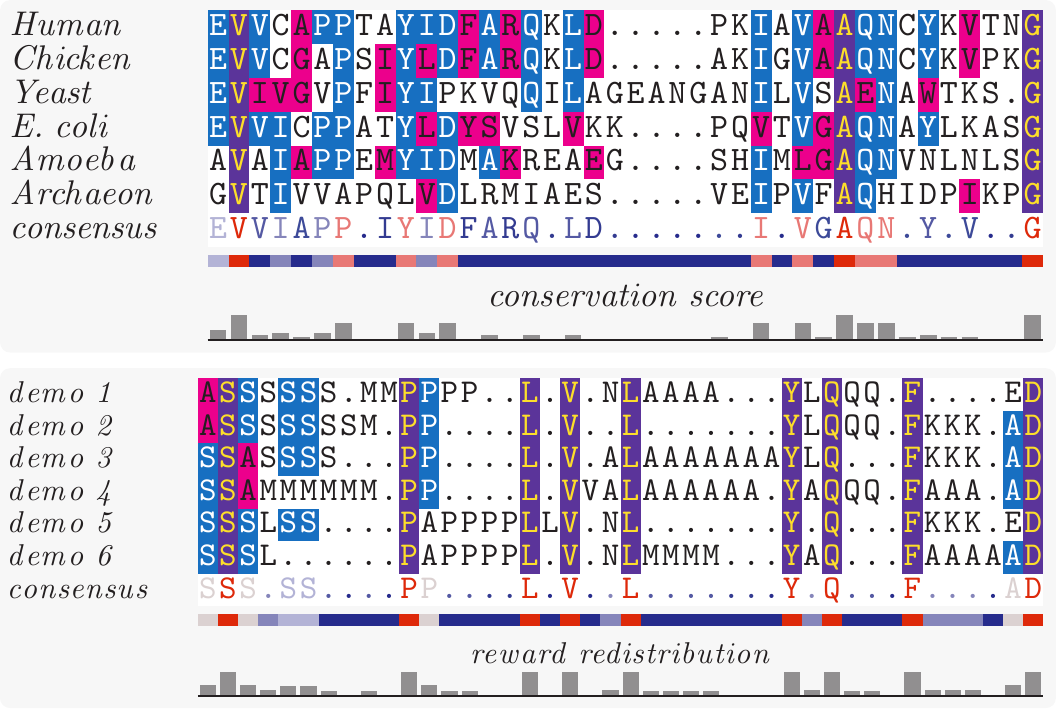}
\caption[]{The function of a protein is largely determined by its structure. 
The relevant regions of this structure are even conserved across organisms, as shown in the top panel.
Similarly, solving a task can often be decomposed into sub-tasks which are conserved across multiple demonstrations. As shown in the bottom panel where events are mapped to the letter code for amino acids. Sequence alignment makes those conserved regions visible and enables redistribution of reward to important events.
  }
  \label{fig:Alignment}
\end{figure}

In bioinformatics, sequence alignment 
identifies similarities between biological sequences 
to determine their evolutionary relationship \citep{Needleman:70,Smith:81}. 
The result of the alignment of multiple sequences is a profile model, i.e.
a consensus sequence, a frequency matrix, or a 
Position-Specific Scoring Matrix (PSSM) \citep{Stormo:82}.
New sequences can be aligned to a profile model and
receive an alignment score that indicates how well
the new sequences agree to the profile model.

Align-RUDDER uses such alignment techniques 
to align two or more high return demonstrations.
For the alignment, we assume that the demonstrations follow 
the same underlying strategy, therefore they
are similar to each other analog to being evolutionary related.
If the agent generates a state-action sequence $(s,a)_{0:t-1}$, then
this sequence is aligned to the profile model $g$ giving a score
$g((s,a)_{0:t-1})$.  
The next action of the agent extends the state-action sequence
by one state-action pair $(s_t,a_t)$. 
The extended sequence $(s,a)_{0:t}$ is also aligned to 
the profile model $g$ giving another score $g((s,a)_{0:t})$.
The redistributed reward $R_{t+1}$ is the difference of these scores: 
$R_{t+1} = g((s,a)_{0:t}) - g((s,a)_{0:t-1})$ (see Eq.~\eqref{eq:RR}).
This difference indicates how much of the return is gained or lost 
by adding another sequence element. 

Align-RUDDER scores how closely an agent follows an underlying strategy, 
which has been extracted by the profile model.
Consequently, the profile model can also be used to explain the strategy of an agent \citep{Dinu:22}.
Similar to the LSTM model, 
we identify the largest steps in the $Q$-function via 
relevant events determined by the profile model.
Therefore, redistributing the reward by sequence alignment 
fits into the RUDDER framework with all its theoretical guarantees.
RUDDER's theory for reward redistribution
is valid for LSTM, other recurrent networks, attention mechanisms, 
or sequence and profile models.

{\bf Advantages of alignment compared to LSTM.}
Learning an LSTM model is severely limited when very few demonstrations
are available. 
First, 
LSTM is known to require a large number of samples to generalize to new sequences. 
In contrast, sequence alignment requires only two examples
to generalize well as known from bioinformatics.
Second, expert demonstrations have high rewards. Therefore random demonstrations with very low rewards have to be generated. 
LSTM does not generalize well when only these extreme reward cases can be observed
in the training set.
In contrast, sequence alignment only uses examples
that are closely related; that is, they belong to
the same category (expert demonstrations).

{\bf Reward Redistribution by Sequence Alignment.}
The new reward redistribution approach 
consists of five steps,
see Fig.~\ref{fig:5steps}: 
(I) Define events to turn episodes of
state-action sequences into sequences of events.
(II) Determine an alignment scoring scheme, 
so that relevant events 
are aligned to each other.
(III) Perform a multiple sequence alignment (MSA) of the demonstrations.
(IV) Compute the profile model like a PSSM. 
(V) Redistribute the reward: Each sub-sequence $\tau_t$ of a new episode 
$\tau$ is aligned to the profile. 
The redistributed reward $R_{t+1}$ is proportional to the difference 
of scores $S$ based on the PSSM given in step (IV), i.e.\ 
$R_{t+1} \propto S(\tau_t)-S(\tau_{t-1})$.

In the following, the five steps of Align-RUDDER's reward redistribution are outlined. For the interested reader, each step is detailed in Sec.~\ref{sec:fiveSteps} in the Appendix. Finally, in Sec.~\ref{sec:five_steps_minecraft} in the Appendix, we illustrate these five steps on the example of Minecraft.

\def\svgwidth{\linewidth}
\begin{figure*}
    \centering
    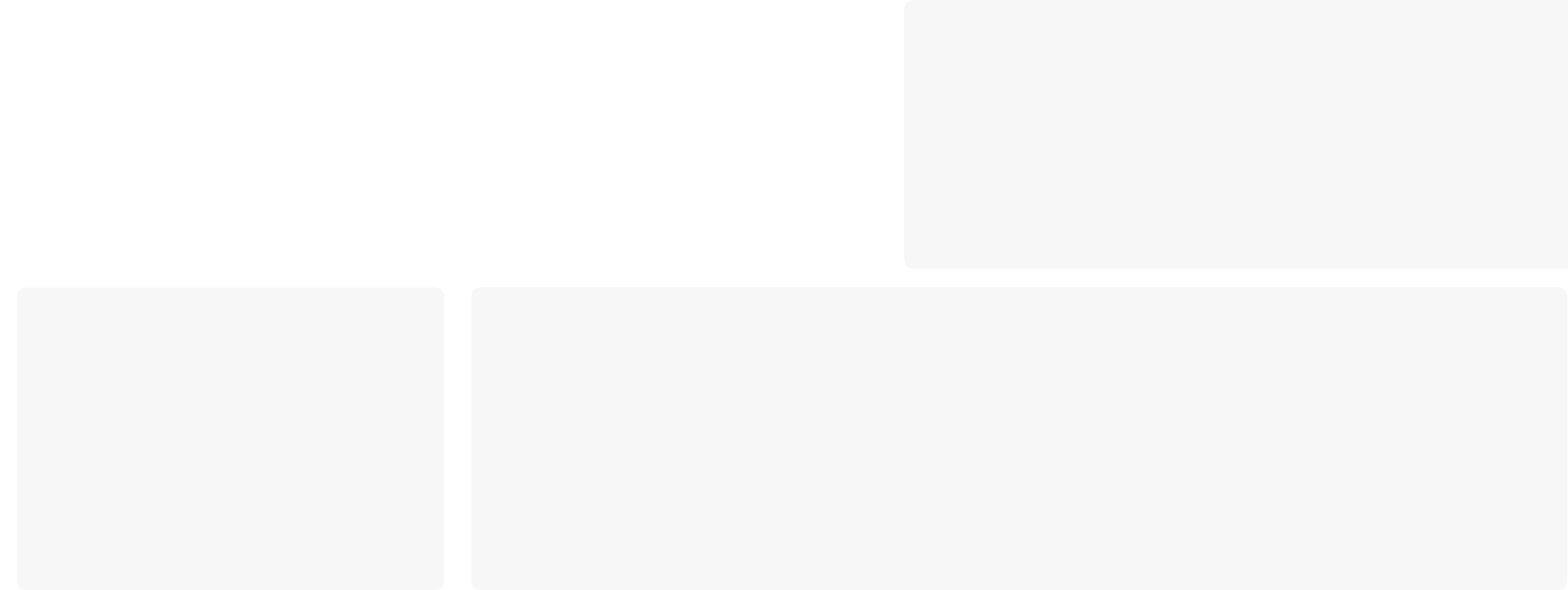
    \caption[]{
    The five steps of Align-RUDDER's reward redistribution. 
    \textbf{(I)} Define events and turn demonstrations into sequences of events. Each block represent an event to which the original state is mapped.
    \textbf{(II)} Construct a scoring matrix using event probabilities from demonstrations for diagonal elements and setting off-diagonal 
    to a constant value.
    \textbf{(III)} Perform an MSA of the demonstrations.
    \textbf{(IV)} Compute a PSSM. 
    Events with highest column scores
    are indicated at the top row.
    \textbf{(V)} Redistribute reward as the difference of scores of sub-sequences aligned to the profile.
    }
    \label{fig:5steps}
\end{figure*}

{\bf (I) Defining Events.}
Instead of states, we consider differences of consecutive
states to detect a change caused by an important event
like achieving a sub-goal.
An {\em event} is defined as a cluster of state differences.
We use similarity-based clustering like 
affinity propagation (AP) \citep{Frey:07}.
If states are only enumerated, 
we suggest to use the ``successor representation'' \citep{Dayan:93}
or ``successor features'' \citep{Barreto:17}.
We use the demonstrations combined with state-action sequences generated by a random policy to construct the successor representation.

A sequence of events is obtained from a state-action sequence 
by mapping states $s$ to its cluster identifier $e$ (the event) and
ignoring the actions.
Alignment techniques from bioinformatics assume sequences composed of a few events,
e.g.\ 20 events. 
If there are too many events, 
good fitting alignments
cannot be distinguished from random alignments. 
This effect is known in bioinformatics as 
``Inconsistency of Maximum Parsimony'' \citep{Felsenstein:78}.

{\bf (II) Determining the Alignment Scoring System. }
A scoring matrix $\mathbbm{S}$ with entries
$\mathbbm{s}_{i,j}$ determines the score for aligning event $i$ with $j$.
A priori, we only know that a relevant event should be aligned to itself
but not to other events.
Therefore, we set $\mathbbm{s}_{i,j} = 1/p_i$ for $i=j$ and $\mathbbm{s}_{i,j}=\alpha$ for $i\not=j$. 
Here, $p_i$ is the relative frequency of event $i$ 
in the demonstrations.
$\alpha$ is a hyper-parameter, 
which is typically a small negative number.
This scoring scheme encourages alignment of rare events,
for which $p_i$ is small.
For more details see Appendix Sec.~\ref{sec:fiveSteps}.

{\bf (III) Multiple sequence alignment (MSA). }
An MSA algorithm maximizes the sum of all pairwise scores  
$S_{\mathrm{MSA}} = \sum_{i,j,i<j} \sum_{t=0}^L \mathbbm{s}_{i,j,t_i,t_j,t}$ 
in an alignment, 
where $\mathbbm{s}_{i,j,t_i,t_j,t}$ is the score at alignment column $t$
for aligning the event at position $t_i$ in sequence $i$ 
to the event at position $t_j$ in sequence $j$. 
$L \geq T$ is the alignment length, since
gaps make the alignment longer than the length of each sequence. 
We use ClustalW \citep{Thompson:94} for MSA.
MSA constructs a guiding tree 
by agglomerative hierarchical clustering of pairwise alignments 
between all demonstrations. 
This guiding tree allows to identify multiple strategies.
For more details see Appendix Sec.~\ref{sec:fiveSteps}.

{\bf (IV) Position-Specific Scoring Matrix (PSSM) and MSA profile model. } 
From the alignment, we construct a profile model as
a) column-wise event probabilities and
b) a PSSM \citep{Stormo:82}.
The PSSM is a column-wise scoring matrix 
to align new sequences to the profile model. 
More details are given in Appendix Sec.~\ref{sec:fiveSteps}.

{\bf (V) Reward Redistribution. }
The reward redistribution is based on the profile model. 
A sequence $\tau=e_{0:T}$ ($e_t$ is event at position $t$)
is aligned to the profile, 
which gives the score $S(\tau) = \sum_{l=0}^L \mathbbm{s}_{l,t_l}$. 
Here, $\mathbbm{s}_{l,t_l}$ is the alignment score for event $e_{t_l}$ 
at position $l$ in the alignment. Alignment gaps are columns to which no
event was aligned, which have $t_l=T+1$ with gap penalty 
$\mathbbm{s}_{l,T+1}$.
If $\tau_t=e_{0:t}$ is the prefix sequence of $\tau$ of length $t+1$,
then the reward redistribution $R_{t+1}$ for $0 \leq \ t \leq T$ is  
\begin{equation}
\label{eq:RR}
\begin{aligned}
    R_{t+1} \ &= \  \left( S(\tau_t)  -  S(\tau_{t-1}) \right) \  C  \\
    &= \ g((s,a)_{0:t}) - g((s,a)_{0:t-1})   , \\ 
    R_{T+2} \ &= \ \tilde{G}_0  -  \sum_{t=0}^T R_{t+1}  ,
\end{aligned}   
\end{equation}
where 
$C =  \EXP_{\mathrm{demo}} \left[\tilde{G}_0 \right] \ / \ 
\EXP_{\mathrm{demo}} \left[ \sum_{t=0}^T S(\tau_t)- S(\tau_{t-1}) \right]$
with $S(\tau_{-1})=0$.
The original return of the sequence $\tau$ is 
$\tilde{G}_0=\sum_{t=0}^T\tilde{R}_{t+1}$.
$\EXP_{\mathrm{demo}} \left[\tilde{G}_0 \right]$ is  
the expectation (average) of the return over demonstrations, while $\EXP_{\mathrm{demo}} \left[ \sum_{t=0}^T S(\tau_t) - S(\tau_{t-1}) \right]$ is the expectation of the sum of unscaled redistributed rewards over demonstrations.
The constant $C$ scales $R_{t+1}$ to the range of $\tilde{G}_0$. 
$R_{T+2}$ is the correction of the redistributed reward \citep{Arjona:19}, 
with zero expectation for demonstrations:
$\EXP_{\mathrm{demo}} \left[ R_{T+2}\right] = 0$.
Since $\tau_t=e_{0:t}$ and $e_t=f(s_t,a_t)$, we can set
$g((s,a)_{0:t})=S(\tau_t) C $.
We ensure strict return equivalence \citep{Arjona:19} by
$G_0=\sum_{t=0}^{T+1} R_{t+1} = \tilde{G}_0$. 
The redistributed reward depends only on the past:
$R_{t+1}=h((s,a)_{0:t})$. 

{\bf Sub-tasks.} The reward redistribution identifies sub-tasks
as alignment positions with high redistributed rewards.
These sub-tasks are indicated by high scores $\mathbbm{s}$ in the 
PSSM.
Reward redistribution also determines the terminal states 
of sub-tasks since it assigns rewards for solving the sub-tasks. 
However, reward redistribution and Align-RUDDER 
cannot guarantee that the redistributed reward is Markov.
For redistributed Markov reward, 
options \citep{Sutton:99},
MAXQ \citep{Dietterich:00},
or recursive option composition \citep{Silver:12}
can be used.

{\bf Higher Order Markov Reward Redistribution.}
Align-RUDDER may lead to higher-order Markov redistribution.
Corollary~\ref{th:higherOptimal} in the Appendix
states that the optimality criterion
from Theorem~2 in \citet{Arjona:19}
also holds for higher-order Markov reward redistribution, 
if the expected redistributed higher-order Markov 
reward is the difference of $Q$-values. 
In that case the redistribution is optimal, and 
there is no delayed reward. Furthermore, the optimal
policies are the same as for the original problem.
This corollary is the motivation for redistributing the
reward to the steps in the $Q$-function. 
In the Appendix, Corollary~\ref{th:higherExist}
states that under a condition, 
an optimal higher-order reward redistribution
can be expressed as the difference of $Q$-values.

\begin{figure*}[t!]
  \centering
   \includegraphics[width=1\textwidth]{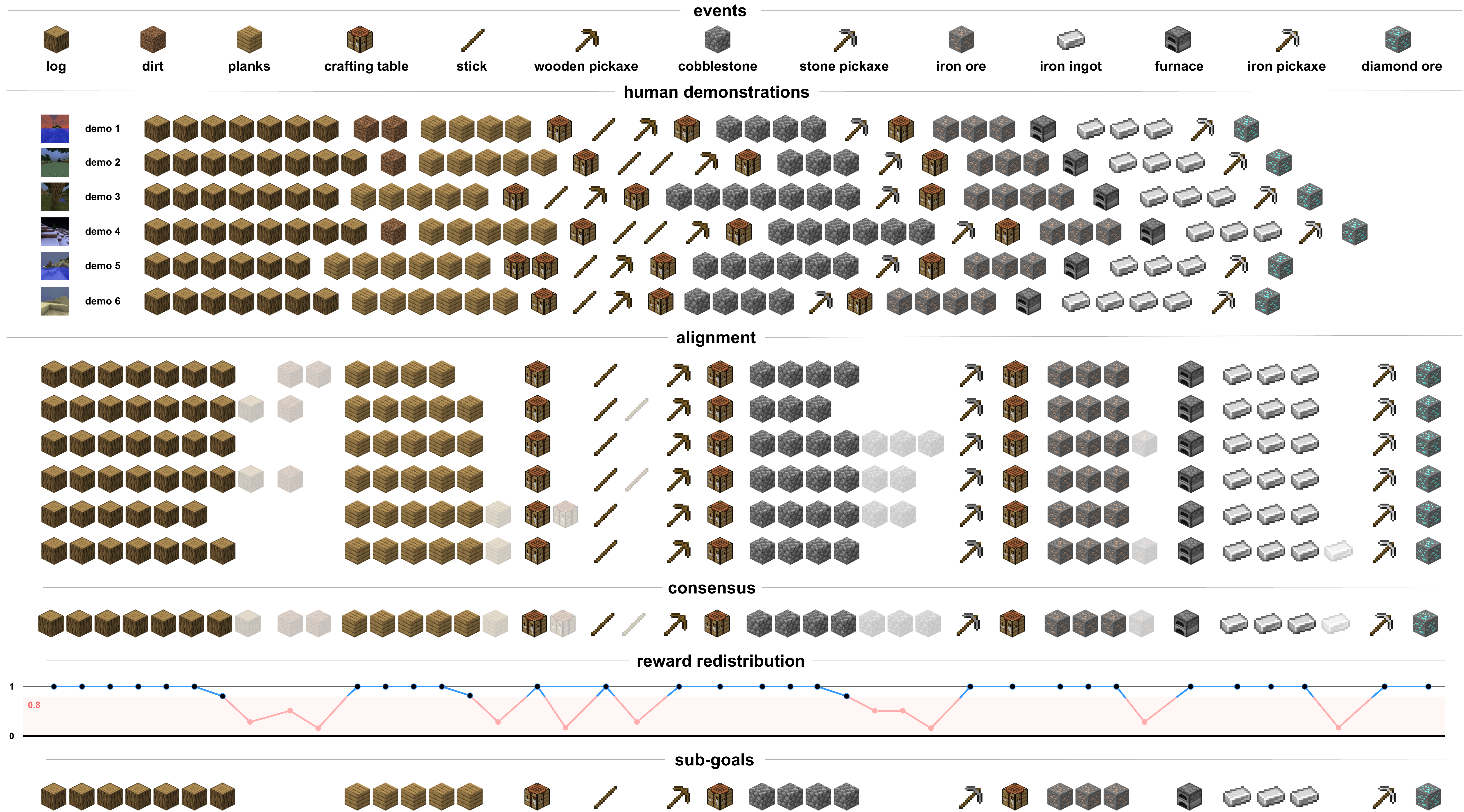}
  \caption[]{
  Example of alignment and identification of sub-goals from demonstrations of \texttt{ObtainDiamond}.
  Thresholding the redistributed reward identifies sub-goals. First, state-action pairs are mapped to events. Then, the demonstrations from human players are mapped to sequences of events. Next, the mapped demonstrations are aligned via MSA. This results in a consensus sequence and profile model (we only depict the consensus sequence here). The consensus sequence represents those events that can be aligned across demonstrations, i.e. the underlying strategy of human demonstrations. By thresholding the redistributed reward we identify sub-goals and extract sub-sequences from demonstrations for pre-training sub-agents using BC. 
  \label{fig:minerl_dem} }
\end{figure*}

\section{Experiments}
\label{sec:experiments}

Align-RUDDER is compared on three 
artificial tasks with sparse \& delayed rewards 
and few demonstrations to 
Behavioral Cloning with $Q$-learning (BC+$Q$),
Soft $Q$ Imitation Learning (SQIL) \citep{Reddy:20}, RUDDER (LSTM), and 
Deep $Q$-learning from Demonstrations (DQfD) \citep{Hester:18}. 
GAIL \citep{Ho:16}
failed to solve the two artificial tasks, 
as reported previously for similar tasks \citep{Reddy:20}.
Then, we test Align-RUDDER on the complex
Minecraft \texttt{ObtainDiamond} task with
episodic rewards \citep{Guss:19}.
All experiments use finite time MDPs with $\gamma =1$
and episodic reward.
More details are in Appendix Sec.~\ref{sec:Experiments}.

{\bf Alignment vs LSTM in 1D key-chest environment.}
\label{sec:1d_env}
We use a \textit{1D key-chest environment}
to show the effectiveness of sequence alignment in a low data regime compared to an LSTM model.
The agent has to collect the key and then open the chest to get a positive reward at the 
last timestep. 
See Appendix Fig.~\ref{fig:1d_key_chest_env} for a schematic representation of the environment. 
As the key-events (important state-action pairs) in this environment are known, 
we can compute the \textit{key-event detection rate} 
of a reward redistribution model.
A key event is detected if the redistributed reward of an 
important state-action pair is larger than the average redistributed reward in the sequence. 
We train the reward redistribution models with $2$, $5$, and $10$ training episodes and test on 1000 test episodes, averaged over ten trials. 
Align-RUDDER significantly outperforms LSTM (RUDDER) for detecting these key events in all cases, with an average key-event detection rate of $0.96$ for sequence alignment
vs. $0.46$ for the LSTM models
overall dataset sizes.
See Appendix Fig.~\ref{fig:key_event} for the detailed results.

{\bf Artificial tasks (I) and (II). }
They are variations of
the gridworld {\em rooms example} \citep{Sutton:99},
where cells are the MDP states.
In our setting, the states do not have to be time-aware 
for ensuring stationary optimal policies, 
but the unobserved used-up time introduces a random effect.
The grid is divided into rooms.
The agent's goal
is to reach a target from an initial state
with the fewest steps.
It has to cross 
different rooms, which are connected by doors, except for the first room, 
which is only connected to the second room by a {\em teleportation portal}.
The portal is introduced to avoid BC initialization
alone, solving the task.
It enforces that going to the portal entry cells is learned
when they are at positions not observed in demonstrations.
At every location, the agent can move 
{\em up, down, left, right}.
The state transitions are stochastic.
An episode ends after $T=200$ time steps. 
Suppose the agent arrives at the target. In that case, it goes into an absorbing state where it stays until $T=200$ without receiving further rewards.
The reward is only given at the end of the episode.
Demonstrations are generated 
by an optimal policy with a $0.2$ exploration rate. 

The five steps of Align-RUDDER's reward redistribution are:
(1) Events are clusters of states 
obtained by Affinity Propagation using as similarity the 
successor representation based on demonstrations. 
(2) The scoring matrix is obtained according to (II),
using $\epsilon=0$ 
and setting all off-diagonal values of the scoring matrix to $-1$.
(3) ClustalW is used for the MSA of the demonstrations
with zero gap penalties and no 
biological options.
(4)
The MSA supplies a profile model and a PSSM as in (IV).
(5) Sequences generated by the agent 
are mapped to sequences of events according to (I).
The reward is redistributed via differences 
of profile alignment scores of consecutive sub-sequences 
according to Eq.~\eqref{eq:RR} using the PSSM.
The reward redistribution determines sub-tasks like doors or
portal arrival. The sub-tasks partition the $Q$-table into sub-tables that represent a sub-agent. However, we optimize a single $Q$-table in these experiments. Defining sub-tasks has no effect on learning in the tabular case. 

All compared methods learn a $Q$-table and 
use an $\epsilon$-greedy policy with $\epsilon=0.2$.
The $Q$-table is initialized by behavioral cloning (BC).
The state-action pairs which are not initialized since they are not 
visited in the demonstrations
get an initialization by drawing a sample from a
normal distribution. 
Align-RUDDER learns the $Q$-table via 
RUDDER's $Q$-value estimation (learning method (A) from Sec.\ref{subsec:learning_methods}).
For BC+$Q$, RUDDER (LSTM), SQIL, and DQfD a $Q$-table is learned by $Q$-learning. 
Hyperparameters are selected via grid search using the same amount of time
for each method.
For different numbers of demonstrations,
performance is measured by the number of episodes to achieve 80\% 
of the average return of the demonstrations.
A Wilcoxon rank-sum test determines the significance 
of performance differences between Align-RUDDER and the other methods.

{\bf Task (I)} environment is a 
$12\times12$ gridworld with four rooms.
The target is in room \#4, and the start is in room \#1 
with 20 portal entry locations.
The state contains the portal entry for each episode.
Fig.~\ref{fig:exp} shows the number of episodes required for achieving 80\% 
of the average reward of the demonstrations
for different numbers of demonstrations.
Results are averaged over 100 trials. 
{\bf Align-RUDDER significantly outperforms all other methods,
for $\leq 10$ demonstrations ($p$-values $< 10^{-10}$).} \newline \noindent
{\bf Task (II)} is a
12$\times$24 gridworld with eight rooms:
target in room \#8, and start in room \#1 
with 20 portal entry locations.
Fig.~\ref{fig:exp} shows the results with settings as in Task (I).
{\bf Align-RUDDER significantly outperforms all other methods,
for $\leq 10$ demonstrations ($p$-values $< 10^{-19}$).} 
We also conduct an ablation study to study performance 
of Align-RUDDER, while changing various parameters, 
like environment stochasticity (see Sec.~\ref{subsec:stochastic_env}) and number of clusters (see Sec.~\ref{subsec:change_num_clust}).

\begin{figure}[t]
  \centering
  \begin{minipage}[t]{0.49\textwidth}
    \centering
    \raisebox{-\height}{\includegraphics[width=1\textwidth]{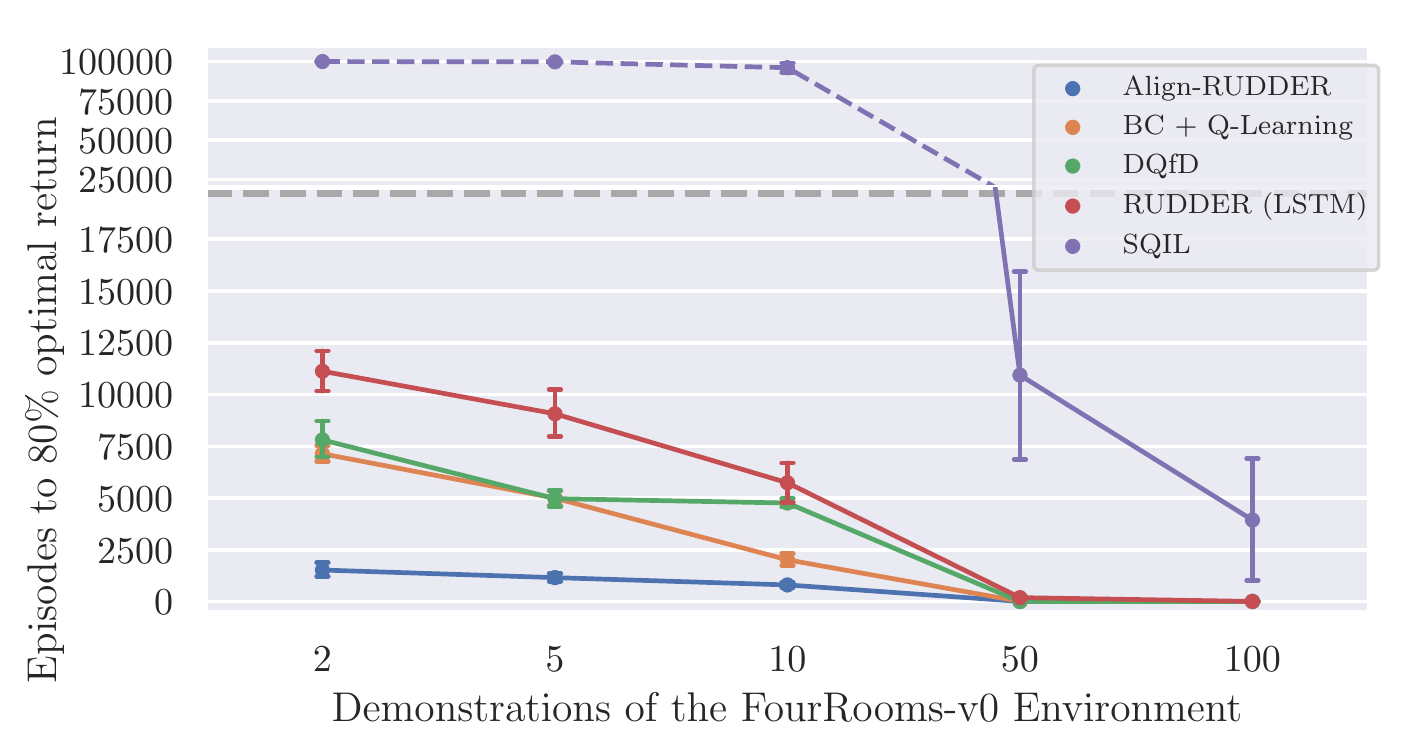}}
  \end{minipage}\hfill
  \begin{minipage}[t]{0.49\textwidth}
    \centering
    \raisebox{-\height}{\includegraphics[width=1\textwidth]{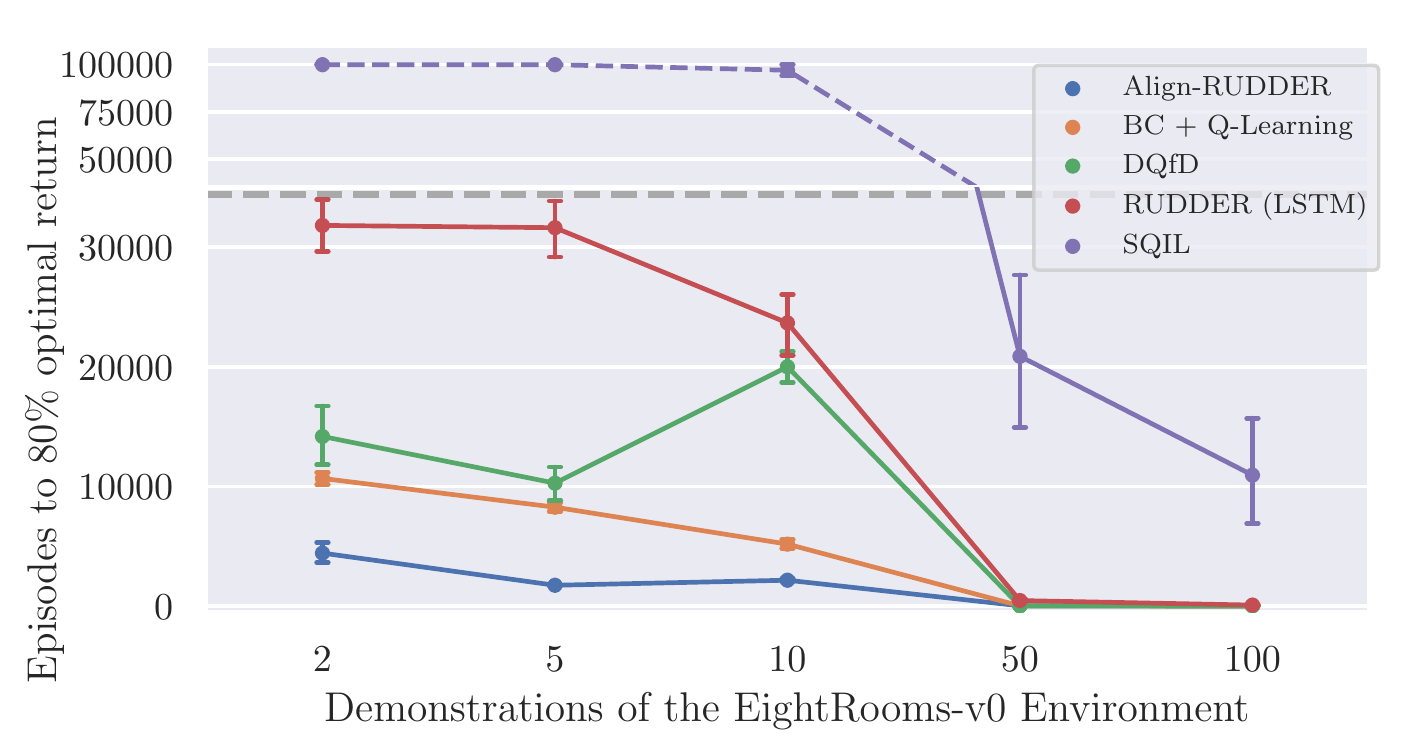}}
  \end{minipage}\hfill
  \caption[]{
    Comparison of Align-RUDDER and other methods on 
    Task (I) (top) and Task (II) (bottom) 
    with respect to the number of episodes required for learning
    on different numbers of demonstrations.
    Results are the average over 100 trials. 
    Align-RUDDER significantly outperforms all other methods.} 
  \label{fig:exp}
\end{figure}

{\bf Minecraft. } 
We further test Align-RUDDER 
on Minecraft \texttt{ObtainDiamond} task
from the MineRL dataset \citep{Guss:19}.
We do not use intermediate rewards given by achieving sub-goals 
from the challenge, since Align-RUDDER is supposed to discover 
such sub-goals automatically via reward redistribution.
We only give a reward for mining the diamond. 
This requires resource gathering and tool building in
a hierarchical way.
To the best of our knowledge, 
no pure learning method (sub-goals are also learned)
has mined a diamond yet \citep{Scheller:20}.
The dataset contains
demonstrations  
which are insufficient
to directly learn a single policy (117 demonstrations, 67 mined a diamond). 

Implementation: 
(1) A state consists of visual input and an inventory (incl.\ equip state).
Both inputs are normalized to the same information, that is, 
the same number of components and the same variance.
We cluster the differences of consecutive states \citep{Arjona:19}.  
Very large clusters are removed, and small merged, giving
19 clusters corresponding to events,
which are characterized by inventory changes.
Finally, demonstrations are mapped to sequences of events.
(2) The scoring matrix is computed according to (II).
(3) The ten shortest demonstrations that obtained a diamond   
are aligned by ClustalW with zero gap penalties
and no biological options.
(4) The multiple alignments gives a 
profile model and a PSSM. 
(5) 
The reward is redistributed via differences 
of profile alignment scores of consecutive sub-sequences
according to Eq.~\eqref{eq:RR} using the PSSM. 
Based on the reward redistribution, we define sub-goals.
Sub-goals are identified as profile model positions 
that obtain an average redistributed reward above a threshold for 
the demonstrations.
Demonstration sub-sequences between sub-goals 
are considered as demonstrations for the sub-tasks.
New sub-sequences generated by the agent 
are aligned to the profile model to determine 
whether a sub-goal is achieved.
The redistributed 
reward between two sub-goals is given 
at the end of the sub-sequence, therefore,
the sub-tasks also have an episodic reward.
Fig.~\ref{fig:minerl_dem} shows how sub-goals 
are identified.
Sub-agents are pre-trained on the demonstrations for
the sub-tasks using BC,
and further trained in the environment using Proximal
Policy Optimization (PPO)~\citep{Schulman:17}.
BC pre-training corresponds to
RUDDER's $Q$-value estimation (learning method (A) from above),
while PPO corresponds to RUDDER's PPO training (learning method (B) from above).

Our main agent can perform all actions
but additionally can execute sub-agents and learns
via the redistributed reward. 
The main agent corresponds to and is treated 
like a Manager module \citep{Vezhnevets:17}.
The main agent is initialized by 
executing sub-agents according to the alignment
but can deviate from this strategy.
When a sub-agent successfully completes its task, the main agent executes the next sub-agent according to the alignment.
More details can be found in Appendix Sec.~\ref{sec:minecraft_appendix}.
Using only ten demonstrations, Align-RUDDER
is able to learn to mine a diamond.
A diamond is obtained in \textbf{0.1\%} of the cases.
With 0.5 success probability for each of the 31 extracted 
sub-tasks (skilled agents not random agents), 
the resulting success rate for mining the diamond 
would be $4.66 \times 10^{-10}$.
Tab.~\ref{tab:minecraft} shows a 
comparison of methods on the Minecraft MineRL dataset by the
maximum item score \citep{Milani:20}. 
Results are taken from \citep{Milani:20}, in particular from Figure~2, 
and completed by \citep{Skrynnik:19,Kanervisto:20,Scheller:20}.
Align-RUDDER was not evaluated during the Minecraft MineRL challenge, 
but it follows the timesteps limit (8 million) imposed by the challenge. 
Align-RUDDER did not receive the intermediate rewards
provided by the challenge that hint at sub-tasks, thus tries to solve a more difficult task.
Recently, ForgER++ \citep{Skrynnik:2020} was able to mine a diamond 
in \textbf{0.0667\%} of the cases. 
We do not include it in Table \ref{tab:minecraft} as it did not
have any limitations on the number of timesteps. 
Also, ForgER++ generates sub-goals for Minecraft using a heuristic, while Align-RUDDER uses redistributed reward to automatically obtain sub-goals.

\begin{table}[]
\centering
\caption[]{Maximum item score of methods on the Minecraft task. 
``Auto'': Sub-goals/sub-tasks are found automatically. 
Demonstrations are used for hierarchical reinforcement learning 
(``HRL'').
Methods: Soft-Actor Critic (SAC, \cite{Haarnoja:18}), 
DQfD, Meta Learning Shared Hierarchies (MLSH, \cite{Frans:18}),
Rainbow \citep{Hessel:17}, PPO, and BC.
\label{tab:minecraft}} 

\resizebox*{1.06\linewidth}{!}{%
\begin{tabular}{|l|l|c|lllllllll|}
\hline \rule{0pt}{14pt}
    
  Method         & Team Name   & HRL/Auto & 
  \centering
  \includegraphics[height=0.15in]{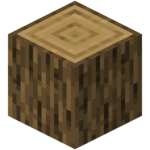} & 
  \includegraphics[height=0.15in]{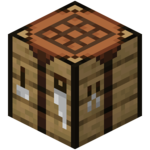} & 
  \includegraphics[height=0.15in]{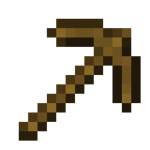} & 
  \includegraphics[height=0.15in]{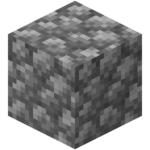} & 
  \includegraphics[height=0.15in]{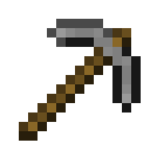} & 
  \includegraphics[height=0.15in]{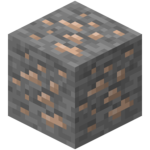} & 
  \includegraphics[height=0.15in]{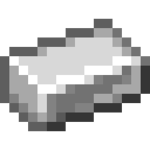} & 
  \includegraphics[height=0.15in]{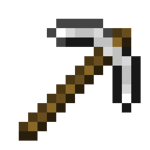} & 
  \includegraphics[height=0.15in]{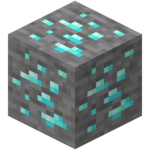} 
  \\ \hline
Align-RUDDER & Ours & \checkmark/\checkmark&\tikzmark{a}{}&&&&&&&&\tikzmark{1}{}
\\\hline
DQfD & CDS & \checkmark/\xmark&\tikzmark{b}{}&&&&&&&\tikzmark{2}{}&\\\hline
BC & MC\_RL & \checkmark/---&\tikzmark{c}{}&&&&&&\tikzmark{3}{}&&\\\hline
CLEAR & I4DS &\xmark/\checkmark&\tikzmark{d}{}&&&&&\tikzmark{4}{}&&&\\\hline
Options\&PPO & CraftRL & \checkmark/\xmark&\tikzmark{e}{}&&&&\tikzmark{5}{}&&&&\\\hline
BC & UEFDRL & \xmark/\checkmark&\tikzmark{f}{}&&&&\tikzmark{6}{}&&&&\\\hline
SAC & TD240 & \xmark/\checkmark&\tikzmark{g}{}&&&\tikzmark{7}{}&&&&&\\\hline
MLSH & LAIR & \checkmark/\checkmark&\tikzmark{h}{}&&&&\tikzmark{8}{}&&&&\\\hline
Rainbow & Elytra & \xmark/\checkmark&\tikzmark{i}{}&&&\tikzmark{9}{}&&&&&\\\hline
PPO & karolisram & \xmark/\checkmark&\tikzmark{j}{}&&\tikzmark{10}{}&&&&&&\\\hline
\end{tabular}
\link{a}{1}
\link{b}{2}
\link{c}{3}
\link{d}{4}
\link{e}{5}
\link{f}{6}
\link{g}{7}
\link{h}{8}
\link{i}{9}
\link{j}{10}
}
\end{table}
{\bf Analysis of Minecraft Agent Behaviour.}
For each agent and its sub-task, 
we estimate the success rate and its improvement during 
fine-tuning by averaging over return of multiple runs 
(see Fig.~\ref{fig:cons_freq}).
For earlier sub-tasks, 
the agent has a relatively higher sub-task success rate. 
This also corresponds to the agent having access 
to much more data for earlier sub-tasks. 
During learning from demonstrations, 
much less data is available for training for later sub-tasks,
as not all expert demonstrations achieve the later tasks. 
During online training using reinforcement learning, 
an agent has to successfully complete all earlier sub-tasks 
to generate trajectories for later sub-tasks. 
This is exponentially difficult. 
Lack of demonstrations and difficulty of the learned agent 
to generate data for later sub-tasks leads 
to degradation of the success rate in Minecraft.

\begin{figure}[ht]
    \includegraphics[width=\linewidth]{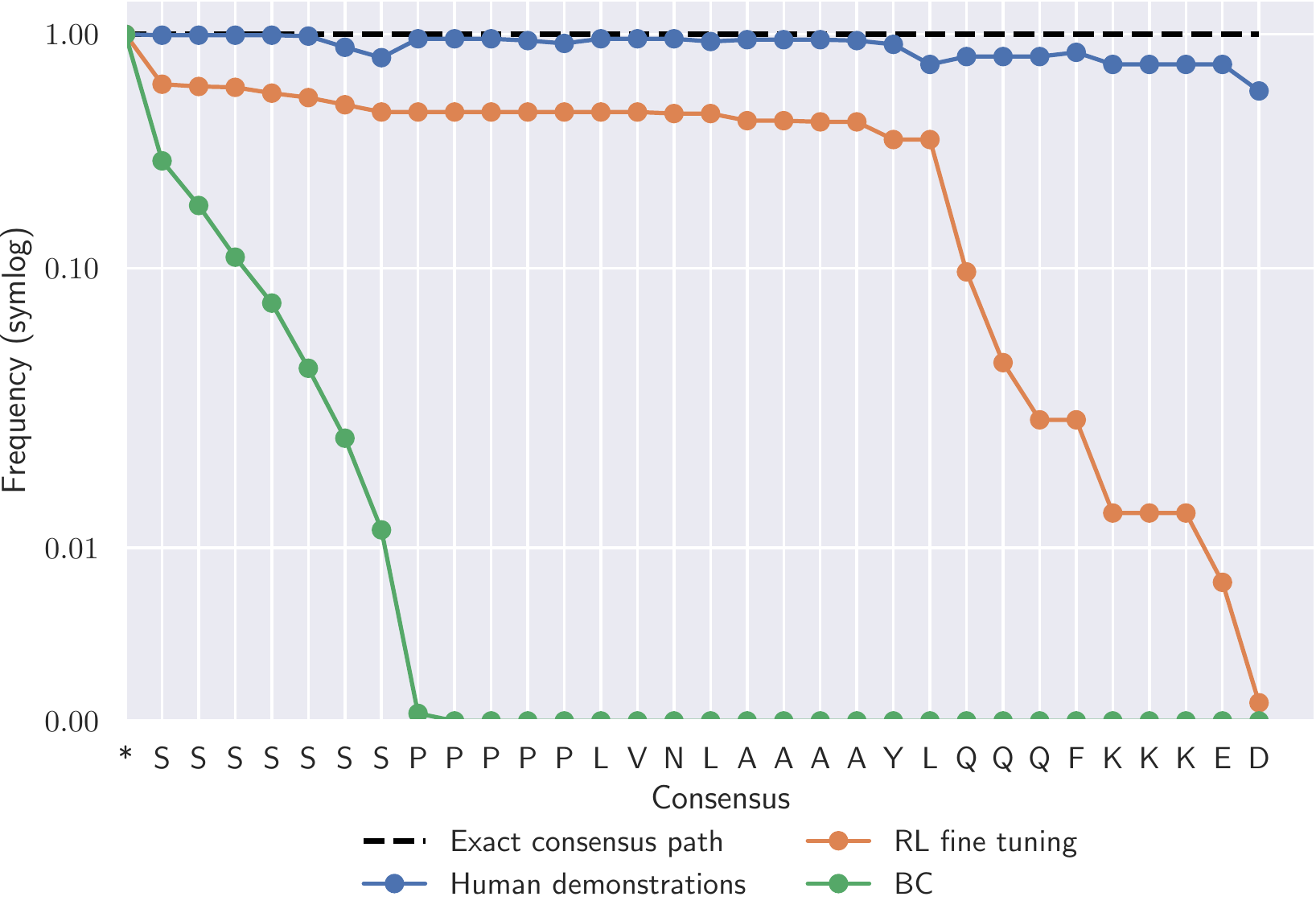}
  \caption[]{Comparison of the frequencies of solving sub-tasks as determined by the consensus sequence of \textit{ObtainDiamond}. The consensus represents the aligned events, where each event is mapped to a letter code (for full mapping see Fig.~\ref{fig:minerl_events}). For example, the letter S corresponds to collecting a log.
  Behavioral cloning (BC, green) represents individual agents trained on demonstrations for the respective sub-task. These agents have been further fine-tuned (orange) via PPO in the MineRL environment. As a comparison, human demonstrations (blue) show how closely human players followed the consensus sequence.
  The plot is in symmetric log scale (symlog in matplotlib).}
  \label{fig:cons_freq}
\end{figure}

\section{Related work} 
\label{sec:related_work}
Learning from demonstrations 
has been widely studied over the last 50 years \citep{Billard:08}.
An example is imitation learning,
which uses supervised techniques 
when the number of demonstrations is large enough
\citep{Michie:90,Pomerleau:91, Michie:94,Schaal:96,Kakade:02}.
However, policies trained with imitation learning
tend to drift away from demonstration trajectories
due to a distribution shift \citep{Ross:10}. This effect can be mitigated \citep{Daume:09,Ross:10,Ross:11,Judah:14,Sun:17,Sun:18}.
Many approaches use demonstrations for initialization, e.g.\ of
policy networks \citep{Taylor:11,Silver:16},
value function networks \citep{Hester:17,Hester:18}, 
both networks \citep{Zhang:18,Nair:18},
or an experience replay buffer \citep{Hosu:16}.
Beyond initialization, demonstrations are used
to define constraints \citep{Kim:13,Diouane:22},
generate sub-goals \citep{Eysenbach:19},
enforce regularization \citep{Reddy:20},
guide exploration \citep{Subramanian:16,Jing:19},
offline learning \citep{Schweighofer:21} or
shape rewards \citep{Judah:14,Brys:15,Suay:16}.
Demonstrations may serve for 
inverse reinforcement learning \citep{Ng:00IRL,Abbeel:04,Ho:16},
which aims at learning a (non-sparse) reward function 
that best explains the demonstrations.
Learning reward functions 
requires a large number of demonstrations \citep{Syed:07,Ziebart:08,Silva:19}.
Some approaches rely on few-shot or/and meta learning \citep{Duan:17,Finn:17,Zhou:20, adler:21, gauch:22}.
However, few-shot and 
meta learning demand a large set of auxiliary tasks or prerecorded data.
Concluding, most methods that learn from demonstrations rely
on the availability of many demonstrations \citep{Khardon:99,Lopes:09},
in particular, if using deep learning methods \citep{Bengio:07scaling,Lakshminarayanan:16}.
Some methods can learn on few demonstrations like
Soft $Q$ Imitation Learning (SQIL) \citep{Reddy:20},
Generative Adversarial Imitation Learning (GAIL) \citep{Ho:16}, and
Deep $Q$-learning from Demonstrations (DQfD) \citep{Hester:18}.

Align-RUDDER allows to identify sub-goals and sub-tasks, 
therefore it is related to hierarchical reinforcement learning (HRL)
approaches
like the option framework \citep{Sutton:99},
the MAXQ framework \citep{Dietterich:00},
or the recursive composition of option models \citep{Silver:12}.
However, these methods do not address the problem 
of finding good options, good sub-goals, or good sub-tasks.
Methods to learn good options have been proposed.
Frequently observed states in solutions are chosen as targets \citep{Stolle:02}.
Gradient-based approaches improving 
the termination function for options \citep{Comanici:10, Mankowitz:16}.
Policy gradient optimized a unified policy 
consisting of intra-option policies, 
option termination conditions, and an option selection policy \citep{Levy:12}.
Parametrized options are learned 
by treating the termination functions as hidden variables and using 
expectation maximization \citep{Daniel:16}.
Intrinsic rewards are used to learn the policies within options, 
and extrinsic rewards to learn the policy over options \citep{Kulkarni:16}.
Options have been jointly learned 
with an associated policy using the policy gradient
theorem for options \citep{Bacon:17}.
A slow time-scale manager module learns sub-goals 
that are achieved by fast time-scale worker \citep{Vezhnevets:17}.

Next, we relate Align-RUDDER to imitation learning and trajectory matching. 
Imitation learning aims at learning a behavior 
close to the data generating policy 
by matching the trajectories of single demonstrations.
In contrast, Align-RUDDER does not try to match single trajectories
but identifies relevant events that are shared among successful demonstrations.
In complex tasks like Minecraft trajectory matching fails,
since large state spaces do not allow to match one of the few demonstrations. 
However, relevant events can still be matched 
as they appear in most demonstrations, therefore Align-RUDDER excels
in such complex tasks.

\section{Discussion and Conclusion}
\label{sec:conclusions}

{\bf Discussion.}
Firstly, reward redistributions
do not change the optimal policies (see Theorem~\ref{th:EquivT} in Appendix).
Thus, suboptimal reward redistributions due to alignment errors or choosing 
events that are non-essential for reaching the goal might not speed up learning, but also do not change the optimal policies.
Secondly, while Align-RUDDER can speed up learning even in complex environments, the resulting performance depends on the quality of the alignment model. A low quality alignment model can arise from multiple factors, one of which is having large number ($\gg$ 20) of distinct events. Clustering can be used to reduce the number of events, which could also lead to a low quality alignment model if too many relevant events are clustered together. While the optimal policy is not changed by poor demonstration alignment, the benefit of employing reward redistribution based on it diminishes. 
Thirdly, the alignment could fail if the demonstrations have different underlying strategies i.e no events are common in the demonstrations.
We assume that the demonstrations follow 
the same underlying strategy, therefore they
are similar to each other and can be aligned.
However, if no underlying strategy exists, 
then identifying those relevant events via alignment may fail.
In this case, reward is given at sequence end, 
when the redistributed reward is corrected, which
leads to an episodic reward without 
reducing the delay of the rewards.%
In general, we feel Align-RUDDER is a significant step forward 
in settings with sparse and delayed rewards.

{\bf Conclusions.} 
We have introduced Align-RUDDER to solve highly complex tasks with delayed and sparse reward from few demonstrations. We have shown experimentally that Align-RUDDER outperforms state of the art methods designed for learning from demonstrations in the regime of few demonstrations. On the Minecraft \texttt{ObtainDiamond} task, 
Align-RUDDER is, to the best of our knowledge, the first pure learning method to mine a diamond.

\section*{Acknowledgements}
The ELLIS Unit Linz, the LIT AI Lab, the Institute for Machine Learning, are supported by the Federal State Upper Austria. IARAI is supported by Here Technologies. We thank the projects AI-MOTION (LIT-2018-6-YOU-212), AI-SNN (LIT-2018-6-YOU-214), DeepFlood (LIT-2019-8-YOU-213), Medical Cognitive Computing Center (MC3), INCONTROL-RL (FFG-881064), PRIMAL (FFG-873979), S3AI (FFG-872172), DL for GranularFlow (FFG-871302), AIRI FG 9-N (FWF-36284, FWF-36235), ELISE (H2020-ICT-2019-3 ID: 951847). We thank Audi.JKU Deep Learning Center, TGW LOGISTICS GROUP GMBH, Silicon Austria Labs (SAL), FILL Gesellschaft mbH, Anyline GmbH, Google, ZF Friedrichshafen AG, Robert Bosch GmbH, UCB Biopharma SRL, Merck Healthcare KGaA, Verbund AG, Software Competence Center Hagenberg GmbH, T\"{U}V Austria, Frauscher Sensonic and the NVIDIA Corporation.

\newpage

\bibliography{alignrudder}
\bibliographystyle{icml2022}

\newpage
\onecolumn
\appendix

\section{Appendix}
\resumetocwriting

\setcounter{tocdepth}{3}
\setcounter{secnumdepth}{4}
\renewcommand\thefigure{\thesection.\arabic{figure}}    
\setcounter{figure}{0}    
\renewcommand\thetable{\thesection.\arabic{table}}    
\setcounter{table}{0}

\renewcommand{\contentsname}{Contents of the appendix}
\tableofcontents

\renewcommand{\listfigurename}{List of figures}
\listoffigures

\newpage
\subsection{Introduction to the Appendix}

This is the appendix to the paper
``Align-RUDDER: Learning from few Demonstrations by Reward Redistribution''. 
The appendix aims at supporting the main document and provides more detailed information about the implementation of our method for different tasks. The content of this document is summarized as follows:

$\bullet$ Section~\ref{sec:fiveSteps} describes the five steps
of Align-RUDDER's reward redistribution in more detail. In particular,
the scoring systems are described in more detail. 
$\bullet$ Section~\ref{sec:seqalig} provides a brief overview of sequence alignment methods and the hyperparameters used in our experiments.
$\bullet$ Section~\ref{sec:Experiments} provides figures and tables to support the results of the experiments in Artificial Tasks (I) and (II).
$\bullet$ Section~\ref{sec:minecraft} explains in detail the experiments conducted in the Minecraft \textit{ObtainDiamond} task.

\subsection{Review Reward Redistribution}
\label{sec:reward_redistribution}

Reward redistribution and return decomposition 
are concepts introduced in RUDDER but also apply to
Align-RUDDER as it is a variant of RUDDER.
Reward redistribution based on return decomposition
eliminates -- or at least mitigates --
delays of rewards while preserving the same optimal policies. 
Align-RUDDER is justified by the theory of return decomposition
and reward redistribution
when using multiple sequence alignment for constructing a reward redistribution model.
In this section, we review the concepts of return decomposition 
and reward redistribution.

{\bf Preliminaries. } 
We consider a finite MDP defined by the 5-tuple 
$\cP=(\sS,\sA,\sR,p,\gamma )$ 
where the state space $\sS$ and 
the action space $\sA$ are sets of
finite states $s$ and actions $a$
and $\sR$ the set of bounded rewards $r$.
For a given time step $t$, the 
corresponding random variables are 
$S_t$, $A_t$ and $R_{t+1}$.
Furthermore, $\cP$ has
transition-reward distributions
$p(S_{t+1}=s',R_{t+1}=r \mid S_t=s,A_t=a)$, 
and a discount factor $\gamma \in (0, 1]$, which 
we keep at $\gamma=1$.
A Markov policy $\pi(a \mid s)$ 
is a probability of an action $a$ given a state $s$.
We consider 
MDPs with finite time horizon 
or with an absorbing state.
The discounted return of a sequence of length $T$ at time $t$ is 
$G_t = \sum_{k=0}^{T-t}  \gamma^k R_{t+k+1}$.
As usual, the $Q$-function for a given policy $\pi$ is
$q^{\pi}(s,a) = \EXP_{\pi} \left[ G_t \mid S_t=s, A_t=a \right]$.
$\EXP_{\pi}[ x \mid s, a]$ is the expectation of $x$, where the random 
variable is a sequence of states, actions, and rewards 
that is generated with transition-reward distribution $p$,
policy $\pi$, and starting at $(s,a)$.
The goal is to find an optimal policy 
$\pi^* = \argmax_{\pi} \EXP_{\pi} [G_0]$ maximizing the expected return
at $t=0$. We assume that the states $s$ are time-aware 
(time $t$ can be extracted from each state)
in order to assure stationary optimal policies.
According to Proposition~4.4.3 in \citep{Puterman:05},
a deterministic optimal policy $\pi^*$ exists.

{\bf Definitions. }
A {\em sequence-Markov decision process} (SDP) is defined 
as a decision process that has Markov transition probabilities
but a reward probability that is not required to be Markov.
Two SDPs $\tilde{\cP}$ and $\cP$ with different reward probabilities
are {\em return-equivalent} if they
have the same expected return at $t=0$ for each policy $\pi$,
and {\em strictly return-equivalent} if they additionally have
the same expected return for every episode.
Since for every $\pi$ the expected return at $t=0$ is the same,
return-equivalent SDPs have the same optimal policies.
A {\em reward redistribution} is a procedure that
---for a given sequence of a delayed reward SDP $\tilde{\cP}$---
redistributes the realization or expectation of its return $\tilde{G}_0$
along the sequence.
This yields a new SDP $\cP$ with 
$R$ as random variable for the redistributed reward and
the same optimal policies as $\tilde{\cP}$:
\begin{theorem}[\cite{Arjona:19}]
\label{th:EquivT}
Both the SDP $\tilde{\cP}$ 
with delayed reward $\tilde{R}_{t+1}$
and the SDP $\cP$ with redistributed reward $R_{t+1}$ 
have the same optimal policies.
\end{theorem}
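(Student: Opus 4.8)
The plan is to show that $\tilde{\cP}$ and $\cP$ are return-equivalent SDPs, and then invoke the fact (already noted in the excerpt) that return-equivalent SDPs share the same optimal policies. The key observation is that a reward redistribution is designed precisely so that the realized return $G_0 = \sum_{t=0}^{T+1} R_{t+1}$ equals the original delayed return $\tilde G_0 = \sum_{t=0}^{T} \tilde R_{t+1}$ for every episode — this is the strict return equivalence established in Eq.~\eqref{eq:RR} via the correction term $R_{T+2}$. I would first record that strict return equivalence (same return on every episode, for every policy) immediately implies plain return equivalence (same expected return at $t=0$ for every policy), since the expectation of equal random variables is equal.

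Next I would make the reduction explicit: for any fixed Markov policy $\pi$, the state–action trajectory distributions induced by $\tilde{\cP}$ and $\cP$ coincide, because both are SDPs with the \emph{same} Markov transition probabilities $p(S_{t+1}\mid S_t,A_t)$; only the reward kernels differ. Hence for each episode realization the two processes assign the same probability, and on that realization $G_0 = \tilde G_0$. Taking expectations over trajectories gives $\EXP_{\pi}[G_0] = \EXP_{\pi}[\tilde G_0]$ for all $\pi$, which is exactly return equivalence at $t=0$.

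Finally, I would close the argument by applying the definition/observation from the excerpt: if two SDPs have the same expected return $\EXP_{\pi}[G_0]$ for every policy $\pi$, then $\argmax_{\pi}\EXP_{\pi}[G_0]$ is the same set for both, so they have identical optimal policies. This yields the theorem.

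The main obstacle — more a subtlety than a difficulty — is being careful that the object $R_{t+1}=h((s,a)_{0:t})$ depends on the whole past sub-sequence rather than just the current state, so $\cP$ is genuinely an SDP and not an MDP; one must check that the return-equivalence argument never used the Markov property of the reward, only of the transitions. As long as the redistribution is defined as a (measurable) function of the past trajectory and the correction $R_{T+2}$ enforces $\sum R_{t+1} = \tilde G_0$ pathwise, the expectation computation goes through unchanged, and no regularity issue arises because rewards are bounded and the horizon $T$ is finite.
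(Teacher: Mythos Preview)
Your argument is correct and is exactly the intended one: strict return equivalence (enforced pathwise by the correction term) plus identical transition dynamics gives $\EXP_\pi[G_0]=\EXP_\pi[\tilde G_0]$ for every $\pi$, hence identical $\argmax$. Note, however, that the paper itself does not supply a proof for this theorem but simply cites \cite{Arjona:19}; your reconstruction matches the standard argument behind that citation and relies only on the definitions already present in the excerpt.
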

\begin{proof}
The proof can be found in \citep{Arjona:19}.
\end{proof}
The delay of rewards is captured by
the {\em expected future rewards} $\kappa(m,t-1)$ at time $(t-1)$.
$\kappa$ is defined as $\kappa(m,t-1)  :=  \EXP_{\pi} \left[ \sum_{\tau=0}^{m} R_{t+1+\tau}  
  \mid s_{t-1}, a_{t-1} \right]$, that is,
at time $(t-1)$ the expected sum of future rewards from $R_{t+1}$ to $R_{t+1+m}$
but not the immediate reward $R_t$.
A reward redistribution is defined to be {\em optimal},  
if $\kappa(T-t-1,t) = 0$ for $0\leq t \leq T-1$, which is equivalent to
$\EXP_{\pi} \left[ R_{t+1} \mid s_{t-1},a_{t-1},s_t,a_t \right] = \ \tilde{q}^\pi(s_t,a_t) \ - \
    \tilde{q}^\pi(s_{t-1},a_{t-1})$:
\begin{theorem}[\cite{Arjona:19}]
\label{th:zeroExp}
We assume a delayed reward MDP $\tilde{\cP}$, 
with episodic reward.
A new SDP $\cP$ is obtained by a 
second order Markov reward redistribution,
which ensures that $\cP$ is return-equivalent to $\tilde{\cP}$.
For a specific $\pi$, the following two
statements are equivalent:\\
\text{(I)~~}  $\kappa(T-t-1,t) = 0$, i.e.\ the reward redistribution is optimal, 
\begin{flalign}
\label{eq:diffQ}
\text{(II)~~} 
  \EXP_{\pi} \left[ R_{t+1} \mid s_{t-1},a_{t-1},s_t,a_t \right] \ = \
  \tilde{q}^\pi(s_t,a_t) \ - \ \tilde{q}^\pi(s_{t-1},a_{t-1}) \ && 
\end{flalign}
An optimal reward redistribution
fulfills for $1 \leq t\leq T$ and $0\leq m\leq T-t$:
  $\kappa(m,t-1)= 0$. 
\end{theorem}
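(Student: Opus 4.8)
The plan is to work with the enlarged state $\bar{s}_t = (s_{t-1}, a_{t-1}, s_t)$, for which the redistributed process becomes a genuine MDP: since the redistribution is second-order Markov, the reward $R_{t+1}$ depends only on $(\bar{s}_t, a_t)$, so $\EXP_\pi[R_{t+1}\mid (s,a)_{0:t}] = \EXP_\pi[R_{t+1}\mid s_{t-1},a_{t-1},s_t,a_t]$. Two tools then suffice. First, because $\tilde{\cP}$ has episodic reward, $\tilde{R}_\tau = 0$ for $\tau\leq T$, so the Bellman equation for $\tilde{q}^\pi$ degenerates to $\tilde{q}^\pi(s_{t-1},a_{t-1}) = \EXP_\pi[\tilde{q}^\pi(s_t,a_t)\mid s_{t-1},a_{t-1}]$, with $\tilde{q}^\pi(s_t,a_t) = \EXP_\pi[\tilde{R}_{T+1}\mid s_t,a_t]$. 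Second, (strict) return-equivalence of $\cP$ and $\tilde{\cP}$ gives, after conditioning, $\EXP_\pi[\sum_\tau R_{\tau+1}\mid s_t,a_t] = \tilde{q}^\pi(s_t,a_t)$, and likewise conditioned on $(\bar{s}_t,a_t)$ since transitions are Markov. I will also use the elementary recursion $\kappa(m,t-1) = \EXP_\pi[R_{t+1}\mid s_{t-1},a_{t-1}] + \EXP_\pi[\kappa(m-1,t)\mid s_{t-1},a_{t-1}]$, obtained by peeling off the first reward and applying the tower rule together with the forward Markov property.

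For $(\mathrm{II})\Rightarrow(\mathrm{I})$ and the closing claim, I would show $\kappa(m,t-1)=0$ for all admissible $m,t$ by backward induction on $m$. Base case $m=0$: take $\EXP_\pi[\,\cdot\mid s_{t-1},a_{t-1}]$ of $(\mathrm{II})$; the right-hand side becomes $\EXP_\pi[\tilde{q}^\pi(s_t,a_t)\mid s_{t-1},a_{t-1}] - \tilde{q}^\pi(s_{t-1},a_{t-1})$, which is $0$ by the degenerate Bellman equation, so $\kappa(0,t-1)=\EXP_\pi[R_{t+1}\mid s_{t-1},a_{t-1}]=0$. Step: substituting $\kappa(0,t-1)=0$ and the inductive hypothesis $\kappa(m-1,t)=0$ into the recursion gives $\kappa(m,t-1)=0$. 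Since this holds for every value of the second argument, in particular $\kappa(T-t-1,t)=0$, which is $(\mathrm{I})$, and simultaneously the stated fact $\kappa(m,t-1)=0$ for $1\leq t\leq T$, $0\leq m\leq T-t$.

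The substantive direction is $(\mathrm{I})\Rightarrow(\mathrm{II})$. I would decompose the total redistributed return as $\sum_{j\leq t} R_j + R_{t+1} + \sum_{j\geq t+2} R_j$ and take $\EXP_\pi[\,\cdot \mid \bar{s}_t, a_t]$. Return-equivalence identifies the expectation of the whole sum with $\tilde{q}^\pi(s_t,a_t)$; the forward Markov property identifies the expectation of the tail $\sum_{j\geq t+2}R_j$ with $\kappa(T-t-1,t)$, which vanishes by hypothesis; and, since a second-order-Markov reward $R_j$ with $j\leq t$ never depends on $s_t$ or $a_t$ while $(s_t,a_t)$ is conditionally independent of $(s_{0:t-2},a_{0:t-2})$ given $(s_{t-1},a_{t-1})$, the expectation of the head $\sum_{j\leq t}R_j$ reduces to $\EXP_\pi[\sum_{j\leq t}R_j\mid s_{t-1},a_{t-1}]$. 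Applying return-equivalence at time $t-1$ and invoking the re-indexed hypothesis $\kappa(T-t,t-1)=0$ (the expected reward from $R_{t+1}$ to the end, given $(s_{t-1},a_{t-1})$, is zero) shows this head expectation equals $\tilde{q}^\pi(s_{t-1},a_{t-1})$. Combining the three contributions yields $\EXP_\pi[R_{t+1}\mid \bar{s}_t,a_t] = \tilde{q}^\pi(s_t,a_t) - \tilde{q}^\pi(s_{t-1},a_{t-1})$, which is $(\mathrm{II})$ once we recall that for second-order-Markov redistribution this conditional expectation equals $\EXP_\pi[R_{t+1}\mid s_{t-1},a_{t-1},s_t,a_t]$.

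I expect the bookkeeping of conditioning sets to be the main obstacle: at several points one must check that a conditional expectation on $(\bar{s}_t,a_t)$ collapses either to one on $(s_t,a_t)$ (for rewards in the future of $t$, by forward Markovianity) or to one on $(s_{t-1},a_{t-1})$ (for rewards in the past of $t$, because a second-order-Markov $R_j$ with $j\leq t$ cannot depend on $(s_t,a_t)$), and the argument hinges on the second-order structure being exactly what makes these reductions valid. A secondary nuisance is boundary handling — the first and last time steps and the correction reward $R_{T+2}$ with $\EXP_{\mathrm{demo}}[R_{T+2}]=0$ — which affects only the indexing, not the logic.
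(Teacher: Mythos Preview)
Your proof is correct. Both directions differ in organization from the paper's argument (reproduced in this paper as the proofs of Corollaries~\ref{th:higherOptimal} and~\ref{th:higherExist}, which are explicitly labelled PART~(II) and PART~(I) of this theorem's proof), but not in substance.

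For $(\mathrm{II})\Rightarrow(\mathrm{I})$, the paper telescopes directly rather than inducting on $m$: summing the $Q$-value differences from $\tau=t$ to $t+m$ collapses $\kappa(m,t-1)$ to $\EXP_\pi[\tilde{q}^\pi(s_{t+m},a_{t+m})\mid s_{t-1},a_{t-1}] - \tilde{q}^\pi(s_{t-1},a_{t-1})$, and both terms equal $\EXP_\pi[\tilde{R}_{T+1}\mid s_{t-1},a_{t-1}]$ because the only nonzero reward in $\tilde{\cP}$ is terminal. Your recursion $\kappa(m,t-1)=\kappa(0,t-1)+\EXP_\pi[\kappa(m-1,t)\mid s_{t-1},a_{t-1}]$ is valid and reaches the same conclusion; the telescoping just shows in one line why all the $\kappa(m,t-1)$ vanish together. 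For $(\mathrm{I})\Rightarrow(\mathrm{II})$, the paper conditions on the full history $(s,a)_{0:t}$ rather than on $(\bar{s}_t,a_t)$: since each past reward $R_{\tau+1}$ with $\tau\leq t$ has conditional expectation $h_\tau$ given the full history (second-order Markov), and the future tail is $\kappa(T-t-1,t)=0$, strict return-equivalence gives the cumulative identity $\tilde{q}^\pi(s_t,a_t)=\sum_{\tau=0}^{t}h_\tau$, and differencing yields $(\mathrm{II})$. Your head/middle/tail split conditioned on $(\bar{s}_t,a_t)$ reaches the same endpoint but invokes return-equivalence and optimality twice (at $t$ and at $t-1$) and needs the extra reduction $\EXP_\pi[\sum_{j\leq t}R_j\mid\bar{s}_t,a_t]=\EXP_\pi[\sum_{j\leq t}R_j\mid s_{t-1},a_{t-1}]$, which you justify correctly via conditional independence of $(s_t,a_t)$ and the past given $(s_{t-1},a_{t-1})$. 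The paper's route produces a slightly stronger intermediate statement (the cumulative identity); yours isolates $R_{t+1}$ more directly. Either is fine.

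One minor remark: your reference to the correction reward $R_{T+2}$ is extraneous here. That term belongs to Align-RUDDER's specific redistribution in Eq.~\eqref{eq:RR}, not to the abstract second-order-Markov redistribution assumed in this theorem, so no boundary adjustment for it is needed.
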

\begin{proof}
The proof can be found in \citep{Arjona:19}.
\end{proof}
This theorem shows that an optimal reward redistribution
relies on steps 
$\tilde{q}^\pi(s_t,a_t) - \tilde{q}^\pi(s_{t-1},a_{t-1})$
of the $Q$-function.
Identifying the largest steps in the $Q$-function
detects the largest rewards that have to be redistributed,
which makes the largest progress towards obtaining an optimal reward redistribution.

\begin{corollary}[Higher order Markov reward redistribution optimality conditions]
\label{th:higherOptimal}
We assume a delayed reward MDP $\tilde{\cP}$, 
with episodic reward.
A new SDP $\cP$ is obtained by a 
higher order Markov reward redistribution.
The reward redistribution ensures that $\cP$ is return-equivalent to $\tilde{\cP}$.
If for a specific $\pi$ 
\begin{align} 
  \EXP_{\pi} \left[ R_{t+1} \mid s_{t-1},a_{t-1},s_t,a_t \right] \ &= \
  \tilde{q}^\pi(s_t,a_t) \ - \ \tilde{q}^\pi(s_{t-1},a_{t-1}) 
\end{align}
holds, then the higher order reward redistribution $R_{t+1}$ is optimal,
that is, $\kappa(T-t-1,t) = 0$.

\end{corollary}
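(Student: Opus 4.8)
The plan is to prove the corollary by re-running the proof of the implication ``(II)$\Rightarrow$(I)'' from Theorem~\ref{th:zeroExp} and observing that nothing in it uses that the redistributed reward is \emph{second-order} Markov --- it uses the reward only through the quantity $\EXP_{\pi}[R_{t+1}\mid s_{t-1},a_{t-1},s_t,a_t]$, which the hypothesis here still assumes equals $\tilde q^\pi(s_t,a_t)-\tilde q^\pi(s_{t-1},a_{t-1})$. The other ingredients of that argument --- return-equivalence of $\cP$ and $\tilde\cP$ (Theorem~\ref{th:EquivT}), the Markov property of the shared transition kernel, the episodic form of $\tilde R$, and the tower rule --- are all insensitive to whether $R_{t+1}$ additionally depends on the earlier history $(s,a)_{0:t-2}$. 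So at the top level this is a reduction: one states that Theorem~\ref{th:zeroExp} of \citet{Arjona:19} is proven using the reward only through that coarse conditional expectation, hence the same proof already covers the higher-order redistribution once the corollary's hypothesis is in force.

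For a self-contained argument I would proceed as follows. Since $\tilde\cP$ has episodic reward, $\tilde R_t=0$ for $t\le T$, so the Bellman equation for $\tilde q^\pi$ collapses to $\tilde q^\pi(s_{t-1},a_{t-1})=\EXP_\pi[\tilde q^\pi(s_t,a_t)\mid s_{t-1},a_{t-1}]$ for $1\le t\le T$. Applying the tower rule to the hypothesis then gives
\begin{align*}
\EXP_\pi[R_{t+1}\mid s_{t-1},a_{t-1}]
 &= \EXP_\pi\!\big[\tilde q^\pi(s_t,a_t)-\tilde q^\pi(s_{t-1},a_{t-1}) \,\big|\, s_{t-1},a_{t-1}\big] \\
 &= \tilde q^\pi(s_{t-1},a_{t-1})-\tilde q^\pi(s_{t-1},a_{t-1}) = 0 ,
\end{align*}
i.e.\ every redistributed reward has zero mean one step in the past, which is $\kappa(0,t-1)=0$. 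I would then push these identities forward, summing over $\tau$ and using the tower rule together with the Markov property of the transition kernel (past and future are conditionally independent given the present state--action pair) --- and, where needed, return-equivalence to balance the total redistributed mass against $\tilde{G}_0$ --- to obtain $\EXP_\pi[\sum_{\tau=0}^{m}R_{t+1+\tau}\mid s_{t-1},a_{t-1}]=0$ for all $m$, hence $\kappa(T-t-1,t)=0$, which is precisely the claimed optimality. Invariance of the optimal policies then follows from Theorem~\ref{th:EquivT}.

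The step I expect to be the real obstacle --- and the only place the higher-order case could a priori diverge from the second-order one --- is this last propagation: checking that the telescoping and tower-rule manipulations borrowed from Theorem~\ref{th:zeroExp} still close up when $R_{t+1}$ is a function of the whole prefix $(s,a)_{0:t}$. The delicate point is that one must pass from the \emph{given} coarse quantity $\EXP_\pi[R_{t+1}\mid s_{t-1},a_{t-1},s_t,a_t]$ to the still coarser $\EXP_\pi[\,\cdot\mid s_t,a_t]$ that appears inside $\kappa$, and this requires the Markov structure of the transitions and the episodic form of $\tilde R$ to do the work rather than any Markov property of the reward itself. Making that passage airtight --- and thereby verifying that ``higher-order optimality follows from a second-order-looking condition'' is not vacuous --- is where the care of the proof goes.
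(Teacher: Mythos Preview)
Your proposal is correct and matches the paper's approach almost exactly: the paper states outright that ``the proof is just PART (II) of the proof of Theorem~\ref{th:zeroExp}'' and then does precisely what you outline --- the $m=0$ case via the collapsed Bellman equation and tower rule, followed by a telescoping argument for general $m$.

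Two small remarks. First, return-equivalence is not used anywhere in the paper's argument; you can drop it. After telescoping one has
\[
\kappa(m,t-1)=\EXP_\pi\!\left[\tilde q^\pi(s_{t+m},a_{t+m})-\tilde q^\pi(s_{t-1},a_{t-1})\,\middle|\,s_{t-1},a_{t-1}\right],
\]
and both terms equal $\EXP_\pi[\tilde R_{T+1}\mid s_{t-1},a_{t-1}]$ by the Markov property of $\tilde\cP$ and the definition of $\tilde q^\pi$, so the difference vanishes directly. Second, the ``delicate passage'' you flag --- going from conditioning on $(s_{t-1},a_{t-1},s_t,a_t)$ to conditioning on $(s_t,a_t)$ --- is sidestepped in the paper by proving $\kappa(m,t-1)=0$ for \emph{all} $m$ and $t$ (always conditioning on $(s_{t-1},a_{t-1})$) and then specializing to $\kappa(T-\tau-1,\tau)$ at the very end; the paper simply replaces $R_{t+1+\tau}$ by $h_{t+\tau}$ under $\EXP_\pi[\cdot\mid s_{t-1},a_{t-1}]$ via the tower rule without further comment, so your level of care already matches (and arguably exceeds) the paper's.
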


\begin{proof}

The proof is just PART (II) of the proof of
Theorem~\ref{th:zeroExp} in \citep{Arjona:19}.
We repeat it here for completeness.

We assume that
\begin{align}
 &\EXP_{\pi} \left[ R_{t+1} \mid s_{t-1},a_{t-1},s_t,a_t \right] 
    \ = \ h_t  \ = \ \tilde{q}^\pi(s_t,a_t) \ - \ \tilde{q}^\pi(s_{t-1},a_{t-1}) \ ,
\end{align}
where we abbreviate the expected $R_{t+1}$ by $h_t$:
 \begin{align}
    \EXP_{\pi} \left[ R_{t+1} \mid s_{t-1},a_{t-1},s_t,a_t \right] 
    \ &= \ h_t \ .
\end{align}

The expectations
$\EXP_{\pi}\left[.\mid  s_{t-1},a_{t-1}\right]$
like
$\EXP_{\pi}\left[\tilde{R}_{T+1} \mid s_{t-1},a_{t-1}\right]$
are expectations over all episodes that contain the state-action pair
$(s_{t-1},a_{t-1})$ at time $t-1$.
The expectations
$\EXP_{\pi}\left[.\mid  s_{t-1},a_{t-1},s_t,a_t\right]$
like
$\EXP_{\pi}\left[\tilde{R}_{T+1} \mid s_{t-1},a_{t-1},s_t,a_t\right]$
are expectations over all episodes that contain the state-action pairs
$(s_{t-1},a_{t-1})$ at time $t-1$ and $(s_t,a_t)$ at time $t$.
The $Q$-values are defined as
\begin{align}
  \tilde{q}^{\pi}(s_t,a_t) \ &= \ \EXP_{\pi} \left[\sum_{k=0}^{T-t} 
   \ \tilde{R}_{t+k+1}  \mid s_t, a_t \right]
   \ = \  \EXP_{\pi} \left[
   \ \tilde{R}_{T+1}  \mid s_t, a_t \right] \ , \\
  q^{\pi}(s_t,a_t) \ &= \ 
  \EXP_{\pi} \left[\sum_{k=0}^{T-t} 
   \ R_{t+k+1}  \mid s_t, a_t \right] \ ,  
\end{align}
which are expectations over all
trajectories that contain $(s_t,a_t)$ at time $t$.
Since $\tilde{\cP}$ is Markov,
for $\tilde{q}^{\pi}$ only the suffix trajectories beginning
at $(s_t,a_t)$ enter the expectation.

The definition of $\kappa(m,t-1)$
for $1 \leq t\leq T$ and $0\leq m\leq T-t$ was 
$\kappa(m,t-1)  = \EXP_{\pi} \left[ \sum_{\tau=0}^{m} R_{t+1+\tau}  
\mid s_{t-1}, a_{t-1} \right]$.
{\bf We have to prove $\kappa(T-t-1,t) = 0$.}

First, we consider $m=0$ and  $1 \leq t\leq T$, therefore
$\kappa(0,t-1)=\EXP_{\pi} \left[R_{t+1}  \mid s_{t-1}, a_{t-1} \right]$.
Since the original MDP $\tilde{\cP}$ has episodic reward,
we have $\tilde{r}(s_{t-1},a_{t-1})=\EXP
\left[ \tilde{R}_t \mid s_{t-1},a_{t-1}\right]=0$ for $1 \leq t\leq T$.
Therefore, we obtain:
\begin{align}
  \tilde{q}^\pi(s_{t-1},a_{t-1}) \ &= \  
  \tilde{r}(s_{t-1},a_{t-1})\ + \ \sum_{s_t,a_t} p(s_t,a_t \mid s_{t-1}, a_{t-1}) \ 
  \tilde{q}^\pi(s_t,a_t) \\ \nonumber
  &= \ 
  \sum_{s_t,a_t} p(s_t,a_t \mid s_{t-1}, a_{t-1}) \ \tilde{q}^\pi(s_t,a_t) \ .
\end{align} 
Using this equation we obtain for $1 \leq t\leq T$:
\begin{align}
  \kappa(0,t-1) \ &= \
  \EXP_{\pi} \left[ R_{t+1}  \mid s_{t-1}, a_{t-1} \right] \\ \nonumber
  &= \ \EXP_{s_t,a_t} \left[ \tilde{q}^\pi(s_t,a_t) \ - \  
  \tilde{q}^\pi(s_{t-1},a_{t-1})  \mid s_{t-1}, a_{t-1} \right] \\ \nonumber
  &= \ \sum_{s_t,a_t} p(s_t,a_t \mid s_{t-1}, a_{t-1}) \ 
  \left( \tilde{q}^\pi(s_t,a_t) \ - \  
  \tilde{q}^\pi(s_{t-1},a_{t-1}) \right) \\ \nonumber
  &= \ \tilde{q}^\pi(s_{t-1},a_{t-1}) \ - \ 
  \sum_{s_t,a_t} p(s_t,a_t \mid s_{t-1}, a_{t-1}) \  
  \tilde{q}^\pi(s_{t-1},a_{t-1}) \\ \nonumber
  &= \ \tilde{q}^\pi(s_{t-1},a_{t-1}) \ - \ \tilde{q}^\pi(s_{t-1},a_{t-1})  
  \ = \  0 \ .
\end{align} 

Next, we consider the expectation of $\sum_{\tau=0}^{m} R_{t+1+\tau}$
for $1 \leq t\leq T$ and $1\leq m\leq T-t$ (for $m>0$)
\begin{align}
  \kappa(m,t-1) \ &= \ 
  \EXP_{\pi} \left[ \sum_{\tau=0}^{m} R_{t+1+\tau}  
  \mid s_{t-1},a_{t-1}\right]\\ \nonumber
  &= \
  \EXP_{\pi} \left[\sum_{\tau=0}^{m}
  \left(\tilde{q}^\pi(s_{\tau+t},a_{\tau+t}) \ - \
    \tilde{q}^\pi(s_{\tau+t-1},a_{\tau+t-1}) \right)  \mid
  s_{t-1},a_{t-1}\right] \\ \nonumber
  &= \
  \EXP_{\pi} \left[\tilde{q}^\pi(s_{t+m},a_{t+m})  \ - \
    \tilde{q}^\pi(s_{t-1},a_{t-1})
    \mid s_{t-1},a_{t-1}\right]\\ \nonumber
   &= \ \EXP_{\pi}\left[\EXP_{\pi}\left[ \sum_{\tau=t+m}^{T}
       \tilde{R}_{\tau+1} \mid s_{t+m},a_{t+m} \right]  \mid 
  s_{t-1},a_{t-1}\right]  \\ \nonumber
  &- \
   \EXP_{\pi}\left[\EXP_{\pi}\left[ \sum_{\tau=t-1}^{T}
       \tilde{R}_{\tau+1}  \mid s_{t-1},a_{t-1}\right] \mid
       s_{t-1},a_{t-1}\right] \\ \nonumber
 &= \ \EXP_{\pi}\left[ \tilde{R}_{T+1} \mid 
  s_{t-1},a_{t-1}\right]   \ - \
   \EXP_{\pi}\left[ \tilde{R}_{T+1}  \mid s_{t-1},a_{t-1}\right] \\ \nonumber
  &= \ 0 \ .
\end{align}
We used that $\tilde{R}_{t+1}=0$ for $t <T$.

For the particualr cases
$t=\tau+1$ and $m=T-t=T-\tau-1$ we have
\begin{align}
  \kappa(T-\tau-1,\tau) \ &= \ 0 \ .
\end{align}
That is exactly what we wanted to proof.

\end{proof}
Corollary~\ref{th:higherOptimal} explicitly
states that the optimality criterion
ensures an optimal reward redistribution even
if the reward redistribution is higher order 
Markov.
For Align-RUDDER we may obtain a higher order
Markov reward redistribution due to the
profile alignment of the sub-sequences.

\begin{corollary}[Higher order Markov reward redistribution optimality representation]
\label{th:higherExist}
We assume a delayed reward MDP $\tilde{\cP}$, 
with episodic reward and
that a new SDP $\cP$ is obtained by a 
higher order Markov reward redistribution.
The reward redistribution ensures that $\cP$ is strictly return-equivalent to $\tilde{\cP}$.
We assume that the reward redistribuition is optimal,
that is, $\kappa(T-t-1,t) = 0$.
If the condition
\begin{align}
\label{eq:condR}
\EXP_{\pi} \left[
 \sum_{\tau=0}^{T-t-1} R_{t+2+\tau} \mid s_t,a_t \right] \ &= \ 
 \EXP_{\pi} \left[
 \sum_{\tau=0}^{T-t-1} R_{t+2+\tau} \mid s_0,a_0,\ldots,s_t,a_t \right] 
\end{align}
holds, then
\begin{align}
  \EXP_{\pi} \left[ R_{t+1} \mid s_{t-1},a_{t-1},s_t,a_t \right] \ &= \
  \tilde{q}^\pi(s_t,a_t) \ - \ \tilde{q}^\pi(s_{t-1},a_{t-1}) \ .
\end{align}
\end{corollary}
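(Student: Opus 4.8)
The plan is to follow PART~(I) of the proof of Theorem~\ref{th:zeroExp} in \citet{Arjona:19}, using the extra hypothesis~\eqref{eq:condR} precisely to absorb the history dependence that a higher-order redistribution introduces. Write $g_t := g((s,a)_{0:t}) = \sum_{\tau=0}^{t} R_{\tau+1}$ for the accumulated redistributed reward (so $g_{-1}=0$ and $R_{t+1}=g_t-g_{t-1}$); since the redistributed reward depends only on the past, each prefix sum $g_{t-1}$ is a measurable function of $(s,a)_{0:t-1}$. First I would condition $g_T$ on the history $(s,a)_{0:t}$: by strict return-equivalence $\EXP_\pi\left[g_T\mid(s,a)_{0:t}\right]=\EXP_\pi\left[\tilde G_0\mid(s,a)_{0:t}\right]$, and since $\tilde{\cP}$ is Markov with episodic reward this equals $\EXP_\pi\left[\tilde R_{T+1}\mid s_t,a_t\right]=\tilde q^\pi(s_t,a_t)$ --- exactly the identity used in the proof of Corollary~\ref{th:higherOptimal}. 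Splitting $g_T = g_{t-1}+R_{t+1}+\sum_{\tau=0}^{T-t-1}R_{t+2+\tau}$ and taking that conditional expectation gives
\begin{align*}
 \tilde q^\pi(s_t,a_t) \ = \ g_{t-1} \ + \ \EXP_\pi\left[ R_{t+1} \ \middle|\ (s,a)_{0:t} \right] \ + \ \EXP_\pi\left[ \textstyle\sum_{\tau=0}^{T-t-1} R_{t+2+\tau} \ \middle|\ (s,a)_{0:t} \right] .
\end{align*}

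Next I would eliminate the last term: by hypothesis~\eqref{eq:condR} it equals $\EXP_\pi\left[\sum_{\tau=0}^{T-t-1} R_{t+2+\tau}\mid s_t,a_t\right]$, which by optimality is $\kappa(T-t-1,t)=0$. Hence $\EXP_\pi\left[R_{t+1}\mid(s,a)_{0:t}\right] = \tilde q^\pi(s_t,a_t) - g_{t-1}$, i.e.\ $\EXP_\pi\left[g_t\mid(s,a)_{0:t}\right] = \tilde q^\pi(s_t,a_t)$; for $t<T$ the random variable $g_t$ is itself $\sigma((s,a)_{0:t})$-measurable, so this upgrades to $g_t = \tilde q^\pi(s_t,a_t)$ on every trajectory (the boundary index $t=T$ needs no separate treatment, $\tilde q^\pi(s_T,a_T)$ being just the expected terminal reward). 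Running this one step earlier --- legitimate because optimality and~\eqref{eq:condR} are assumed at every time step --- yields $g_{t-1}=\tilde q^\pi(s_{t-1},a_{t-1})$, whence $\EXP_\pi\left[R_{t+1}\mid(s,a)_{0:t}\right] = \tilde q^\pi(s_t,a_t)-\tilde q^\pi(s_{t-1},a_{t-1})$. Finally, the right-hand side depends only on $(s_{t-1},a_{t-1},s_t,a_t)$, so applying $\EXP_\pi\left[\,\cdot\mid s_{t-1},a_{t-1},s_t,a_t\right]$ to both sides (the tower rule over the remaining, earlier, history) gives exactly~\eqref{eq:diffQ}.

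The step I expect to be the real obstacle is this elimination of the tail sum together with the ensuing collapse of $g_{t-1}$ --- a priori a function of the whole trajectory $(s,a)_{0:t-1}$ --- down to the Markov quantity $\tilde q^\pi(s_{t-1},a_{t-1})$. Without~\eqref{eq:condR} optimality only controls $\EXP_\pi[\sum_{\tau}R_{t+2+\tau}\mid s_t,a_t]$ and not the full-history expectation, so the displayed identity cannot be solved for $\EXP_\pi[R_{t+1}\mid(s,a)_{0:t}]$; this is precisely where higher-order Markov redistribution departs from the second-order case of Theorem~\ref{th:zeroExp}. A minor, purely clerical source of friction is keeping the reward indices aligned (the redistribution produces $R_1,\dots,R_{T+1}$, whereas $\kappa(m,t-1)$ is indexed from $R_{t+1}$), and taking some care that the last redistributed reward need not be a deterministic function of the past.
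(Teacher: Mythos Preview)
Your argument is correct and runs along the same backbone as the paper's proof: condition the total redistributed return on the history $(s,a)_{0:t}$, identify it with $\tilde q^\pi(s_t,a_t)$ via strict return-equivalence (Lemma~\ref{th:AreturnSub}) and the Markov property of $\tilde{\cP}$, split off the tail $\sum_{\tau\ge 0}R_{t+2+\tau}$, kill that tail using condition~\eqref{eq:condR} together with optimality $\kappa(T-t-1,t)=0$, and then difference two consecutive time indices. The only real difference is bookkeeping for the ``past'' rewards. The paper writes the past contribution as $\sum_{\tau=0}^{t}h_\tau$ with $h_\tau=\EXP_\pi[R_{\tau+1}\mid s_{\tau-1},a_{\tau-1},s_\tau,a_\tau]$ and derives $\tilde q^\pi(s_t,a_t)=\sum_{\tau\le t}h_\tau$; you instead keep the actual accumulated reward $g_{t-1}$, use that it is $(s,a)_{0:t-1}$-measurable (since in this paper's setting the redistributed reward is a deterministic function of the past, $R_{t+1}=h((s,a)_{0:t})$), and obtain the pointwise identity $g_t=\tilde q^\pi(s_t,a_t)$ before towering down to the second-order conditioning. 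Your route is arguably cleaner for the higher-order case: the paper's step $\EXP_\pi\big[\sum_{\tau\le t}R_{\tau+1}\mid (s,a)_{0:t}\big]=\sum_{\tau\le t}h_\tau$ is copied verbatim from the second-order proof of Theorem~\ref{th:zeroExp} and, taken literally, would need the full-history expectation $\EXP_\pi[R_{\tau+1}\mid(s,a)_{0:\tau}]$ rather than the two-step $h_\tau$ when the reward depends on the whole prefix --- a wrinkle your measurability argument sidesteps at the cost of the explicit determinism assumption.
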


\begin{proof}
By and large, the proof is PART (I) of the proof of
Theorem~\ref{th:zeroExp} in \citep{Arjona:19}.
We repeat it here for completeness.

We assume that the reward redistribution is optimal, that is,
\begin{align}
  \kappa(T-t-1,t) \ &= \  0 \ .
\end{align} 
We abbreviate the expected $R_{t+1}$ by $h_t$:
 \begin{align}
    \EXP_{\pi} \left[ R_{t+1} \mid s_{t-1},a_{t-1},s_t,a_t \right] 
    \ &= \ h_t \ .
\end{align} 
  
In \citep{Arjona:19}  
Lemma~A4 is as follows.
\begin{lemma}
\label{th:AreturnSub}
Two strictly return-equivalent SDPs $\tilde{\cP}$ and $\cP$
have the same expected return for each 
start state-action sub-sequence
$(s_0,a_0,\ldots,s_t,a_t)$, $0 \leq t \leq T$:
\begin{align}
   \EXP_{\pi} \left[
    \tilde{G}_0 \mid s_0,a_0,\ldots,s_t,a_t \right] \ &= \
   \EXP_{\pi} \left[
    G_0 \mid s_0,a_0,\ldots,s_t, a_t\right] \ .
  \end{align}
\end{lemma}

The assumptions of Lemma~\ref{th:AreturnSub} hold for 
for the delayed reward MDP $\tilde{\cP}$ and
the redistributed reward SDP $\cP$,
since a reward redistribution ensures strictly return-equivalent SDPs.
Therefore for a given state-action sub-sequence
$(s_0,a_0,\ldots,s_t,a_t)$, $0 \leq t \leq T$:
\begin{align}
   \EXP_{\pi} \left[
    \tilde{G}_0 \mid s_0,a_0,\ldots,s_t,a_t \right] \ &= \
   \EXP_{\pi} \left[
    G_0 \mid s_0,a_0,\ldots,s_t, a_t\right] 
  \end{align}
with
$G_0=\sum_{\tau=0}^{T}  R_{\tau+1}$ and $\tilde{G}_0=\tilde{R}_{T+1}$.
The Markov property of the MDP $\tilde{\cP}$
ensures that the future reward  from $t+1$ on is independent of
the past sub-sequence $s_0,a_0,\ldots,s_{t-1},a_{t-1}$:
\begin{align}
 \EXP_{\pi} \left[
 \sum_{\tau=0}^{T-t} \tilde{R}_{t+1+\tau} \mid s_t,a_t \right] \ &= \ 
 \EXP_{\pi} \left[
 \sum_{\tau=0}^{T-t} \tilde{R}_{t+1+\tau} \mid 
 s_0,a_0,\ldots,s_t,a_t \right] \ .
 \end{align}

According to Eq.~\eqref{eq:condR},
the future reward from $t+2$ on is independent of
the past sub-sequence $s_0,a_0,\ldots,s_{t-1},a_{t-1}$:
\begin{align}
\EXP_{\pi} \left[
 \sum_{\tau=0}^{T-t-1} R_{t+2+\tau} \mid s_t,a_t \right] \ &= \ 
 \EXP_{\pi} \left[
 \sum_{\tau=0}^{T-t-1} R_{t+2+\tau} \mid s_0,a_0,\ldots,s_t,a_t \right] \ .
\end{align}

Using these properties we obtain
\begin{align}
 \tilde{q}^{\pi}(s_t,a_t) \ 
 &= \ \EXP_{\pi} \left[
 \sum_{\tau=0}^{T-t} \tilde{R}_{t+1+\tau} \mid s_t,a_t \right] \\ \nonumber
 &= \ \EXP_{\pi} \left[
 \sum_{\tau=0}^{T-t} \tilde{R}_{t+1+\tau} \mid s_0,a_0,\ldots,s_t,a_t \right] \\ \nonumber
 &= \ \EXP_{\pi} \left[
 \tilde{R}_{T+1} \mid s_0,a_0,\ldots,s_t,a_t \right] \\ \nonumber
 &= \ \EXP_{\pi} \left[
 \sum_{\tau=0}^{T} \tilde{R}_{\tau+1} \mid s_0,a_0,\ldots,s_t,a_t \right] \\ \nonumber
  &= \ \EXP_{\pi} \left[
 \tilde{G}_0 \mid s_0,a_0,\ldots,s_t,a_t \right] \\ \nonumber
 &= \ \EXP_{\pi} \left[
 G_0 \mid s_0,a_0,\ldots,s_t, a_t\right] \\ \nonumber
  &= \ \EXP_{\pi} \left[
 \sum_{\tau=0}^{T} R_{\tau+1}  \mid s_0,a_0,\ldots,s_t, a_t\right] \\ \nonumber
   &= \ \EXP_{\pi} \left[
 \sum_{\tau=0}^{T-t-1} R_{t+2+\tau}  \mid s_0,a_0,\ldots,s_t, a_t\right] \ + \ 
 \sum_{\tau=0}^{t}  h_{\tau} \\ \nonumber
   &= \ \EXP_{\pi} \left[
 \sum_{\tau=0}^{T-t-1} R_{t+2+\tau}  \mid s_t, a_t\right] \ + \ 
 \sum_{\tau=0}^{t}  h_{\tau} \\ \nonumber
  &= \ \kappa(T-t-1,t)  \ + \ 
 \sum_{\tau=0}^{t} h_{\tau} \\ \nonumber
  &= \ \sum_{\tau=0}^{t}  h_{\tau} \ .
\end{align}
We used the optimality condition
\begin{align}
 \kappa(T-t-1,t) \ = \ \EXP_{\pi} \left[ \sum_{\tau=0}^{T-t-1} R_{t+2+\tau}  
  \mid s_t, a_t \right]  \ = \ 0 \ .
\end{align}

It follows that 
\begin{align}
 &\EXP_{\pi} \left[ R_{t+1} \mid s_{t-1},a_{t-1},s_t,a_t \right] 
    \ = \ h_t  \ = \ \tilde{q}^\pi(s_t,a_t) \ - \ \tilde{q}^\pi(s_{t-1},a_{t-1}) \ .
\end{align} 
This is exactly what we wanted to proof.

\end{proof}
This corollary shows that optimal 
reward redistributions can be expressed
as difference of $Q$-values if 
Eq.~\eqref{eq:condR} holds.
Eq.~\eqref{eq:condR} states that 
the past can be averaged out.
However, there may exist optimal reward
redistributions for which Eq.~\eqref{eq:condR} does not hold.

If the reward redistribution is optimal, 
the $Q$-values of $\cP$ are given by 
$q^\pi(s_t,a_t) = \tilde{q}^\pi(s_t,a_t) \ - \ 
    \psi^\pi(s_t)$
and therefore $\tilde{\cP}$ and $\cP$ have the same advantage function: 
\begin{theorem}[\cite{Arjona:19}]
\label{th:OptReturnDecomp}
If the reward redistribution is 
optimal, then the $Q$-values 
of the SDP $\cP$ are $q^\pi(s_t,a_t) = r(s_t,a_t) $ and
\begin{align}
\label{eq:qvalue}
   q^\pi(s_t,a_t) \ &= \  
   \tilde{q}^\pi(s_t,a_t) \ - \ 
   \EXP_{s_{t-1},a_{t-1}} \left[ \tilde{q}^\pi(s_{t-1},a_{t-1}) \mid s_t \right] \
   = \ \tilde{q}^\pi(s_t,a_t) \ - \ \psi^\pi(s_t) \ .
\end{align} 
The SDP $\cP$ and the original MDP $\tilde{\cP}$ 
have the same advantage function.
\end{theorem}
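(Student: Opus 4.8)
The plan is to read the $Q$-values of the redistributed SDP $\cP$ straight off the optimality condition, substitute the characterization of the redistributed reward as a step of the original $Q$-function, and then obtain the advantage-function claim by a cancellation. First I would use the last assertion of Theorem~\ref{th:zeroExp}: an optimal reward redistribution satisfies $\kappa(m,t-1)=0$ for all $1\le t\le T$, $0\le m\le T-t$. Since $\cP$ has Markov transitions by construction its $Q$-function is well defined, and with $\gamma=1$
\[
q^\pi(s_t,a_t)\ =\ \EXP_\pi\!\left[\textstyle\sum_{k=0}^{T-t} R_{t+k+1}\,\middle|\,s_t,a_t\right]\ =\ \EXP_\pi\!\left[R_{t+1}\mid s_t,a_t\right]\ +\ \kappa(T-t-1,t)\ .
\]
The second term vanishes, so $q^\pi(s_t,a_t)=\EXP_\pi[R_{t+1}\mid s_t,a_t]=:r(s_t,a_t)$, which is the first claim. (In the higher-order case $R_{t+1}$ may depend on the whole past; then $r(s_t,a_t)$ is this past-averaged immediate reward, and everything below uses only the Markov transition structure of $\cP$ and the Markov property of $\pi$.)

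Second, I would invoke the difference-of-$Q$-values representation of the redistributed reward: under optimality, $\EXP_\pi[R_{t+1}\mid s_{t-1},a_{t-1},s_t,a_t]=\tilde q^\pi(s_t,a_t)-\tilde q^\pi(s_{t-1},a_{t-1})$, which is part~(II) of Theorem~\ref{th:zeroExp} in the (second-order Markov) RUDDER setting, and Corollary~\ref{th:higherExist} under condition~\eqref{eq:condR} in the higher-order setting. Taking the expectation over $(s_{t-1},a_{t-1})$ conditioned on $(s_t,a_t)$ and using the previous step,
\[
q^\pi(s_t,a_t)\ =\ \EXP_\pi[R_{t+1}\mid s_t,a_t]\ =\ \tilde q^\pi(s_t,a_t)\ -\ \EXP_{s_{t-1},a_{t-1}}\!\left[\tilde q^\pi(s_{t-1},a_{t-1})\mid s_t,a_t\right]\ .
\]
Because $\pi$ is Markov, $a_t\sim\pi(\cdot\mid s_t)$ is drawn independently of the history given $s_t$, so $(s_{t-1},a_{t-1})$ is conditionally independent of $a_t$ given $s_t$ and the last conditional expectation equals $\EXP_{s_{t-1},a_{t-1}}[\tilde q^\pi(s_{t-1},a_{t-1})\mid s_t]=\psi^\pi(s_t)$; this yields the second displayed equality of the theorem.

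Finally, for the advantage function I would average $q^\pi(s_t,a_t)=\tilde q^\pi(s_t,a_t)-\psi^\pi(s_t)$ over $a_t\sim\pi(\cdot\mid s_t)$; since $\psi^\pi(s_t)$ is independent of $a_t$ this gives $v^\pi(s_t)=\tilde v^\pi(s_t)-\psi^\pi(s_t)$, and subtracting, the $\psi^\pi(s_t)$ terms cancel, so $q^\pi(s_t,a_t)-v^\pi(s_t)=\tilde q^\pi(s_t,a_t)-\tilde v^\pi(s_t)$, i.e.\ $\cP$ and $\tilde{\cP}$ have the same advantage function.

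The main obstacle I expect is not any single computation but keeping the conditioning manipulations clean in the non-Markov-reward setting: verifying that $q^\pi$ of $\cP$ is well defined, that "the past averages out" (both when reducing $\kappa(T-t-1,t)$ to an expectation given $(s_t,a_t)$ and when deleting $a_t$ from the conditioning that defines $\psi^\pi$), and that the difference-of-$Q$ representation is actually available --- which in the higher-order case is precisely what Corollary~\ref{th:higherExist} (hence condition~\eqref{eq:condR}) supplies.
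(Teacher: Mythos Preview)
Your argument is correct and follows the natural route: split $q^\pi(s_t,a_t)$ into the expected immediate redistributed reward plus $\kappa(T-t-1,t)$, kill the latter by optimality, then rewrite $\EXP_\pi[R_{t+1}\mid s_t,a_t]$ via the tower property and the difference-of-$Q$ characterization from Theorem~\ref{th:zeroExp}, drop $a_t$ from the backward conditioning using that $\pi$ is Markov, and cancel $\psi^\pi(s_t)$ to match the advantage functions. The paper itself does not give a self-contained proof of this statement---it simply defers to \cite{Arjona:19}---so there is no alternative approach to compare against here; what you wrote is exactly the derivation one extracts from the RUDDER framework as summarized in Section~\ref{sec:reward_redistribution}, and in particular your use of the last clause of Theorem~\ref{th:zeroExp} (that optimality gives $\kappa(m,t-1)=0$ for all admissible $m$) and of part~(II) for the $Q$-difference form is precisely how the result is meant to be read off. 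Your closing caveat is also well placed: the statement is cited in the second-order Markov setting of \cite{Arjona:19}, and in the higher-order case one indeed needs condition~\eqref{eq:condR} (Corollary~\ref{th:higherExist}) to recover the $Q$-difference representation, and the very definition of $q^\pi(s_t,a_t)$ for an SDP already builds in the ``past averages out'' convention you flag.
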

\begin{proof}
The proof can be found in \citep{Arjona:19}.
\end{proof}
For an optimal reward redistribution only the expectation of 
the immediate reward $r(s_t,a_t) = \EXP_{\pi} \left[ R_{t+1} \mid s_t,a_t\right]$
must be estimated. This considerably simplifies learning. 

{\bf Learning methods according to \citet{Arjona:19}.}
The redistributed reward serves as reward for 
a subsequent learning method, which can be Type A, B, and C as described in
\citet{Arjona:19}.
Type A methods estimate the $Q$-values.
They can be estimated directly according to Eq.~\eqref{eq:qvalue} assuming 
an optimal redistribution (Type A variant~i).
$Q$-values can be corrected for a non-optimal reward redistribution
by additionally estimating $\kappa$ (Type A variant~ii).
$Q$-value estimation can use eligibility traces (Type A variant~iii).
Type B methods use the redistributed rewards for policy gradients like
Proximal Policy Optimization (PPO) \cite{Schulman:17}.
Type C methods use TD learning like $Q$-learning \cite{Watkins:89}, 
where immediate and future reward must be drawn together as typically done.
For all these learning methods, demonstrations can be used
for initialization (e.g.\ experience replay buffer)
or pre-training (e.g.\ policy network with behavioral cloning).
Recently, the convergence of RUDDER learning methods has been proven
under commonly used assumptions \citep{Holzleitner:20}.

{\bf Non-optimal reward redistribution and Align-RUDDER.}
According to Theorem~\ref{th:EquivT}, non-optimal reward redistributions
do not change the optimal policies.
The value $\kappa(T-t-1,t)$ measures the remaining delayed reward.
The smaller $\kappa$ is, the faster is the learning process.
For Monte Carlo (MC) estimates, smaller $\kappa$ reduces the variance of
the future rewards, and, therefore the variance of the estimation.
For temporal difference (TD) estimates, smaller $\kappa$ reduces the
amount of information that has to flow back.
Align-RUDDER dramatically reduces the amount of delayed rewards by
identifying key events via multiple sequence alignment, to which
reward is redistributed. For an episodic MDP,
a reward that is redistributed to time $t$ reduces
all $\kappa(m,\tau)$ with $t \leq \tau <T$ by the expectation of the reward. 
Therefore, in most cases Align-RUDDER makes $\kappa$-values much smaller.

\subsection{The Five Steps of Align-RUDDER's Reward Redistribution}
\label{sec:fiveSteps}

The new reward redistribution approach 
consists of five steps,
see Fig.~\ref{fig:5steps1}:
(I) Define events to turn episodes of
state-action sequences into sequences of events.
(II) Determine an alignment scoring scheme, 
so that relevant events 
are aligned to each other.
(III) Perform a multiple sequence alignment (MSA) of the demonstrations.
(IV) Compute the profile model and the PSSM. 
(V) Redistribute the reward: Each sub-sequence $\tau_t$ of a new episode 
$\tau$ is aligned to the profile. 
The redistributed reward $R_{t+1}$ is proportional to the difference 
of scores $S$ based on the PSSM given in step (IV), i.e.\ 
$R_{t+1} \propto S(\tau_t)-S(\tau_{t-1})$.

\def\svgwidth{\linewidth}
\begin{figure}
    \centering
    \input{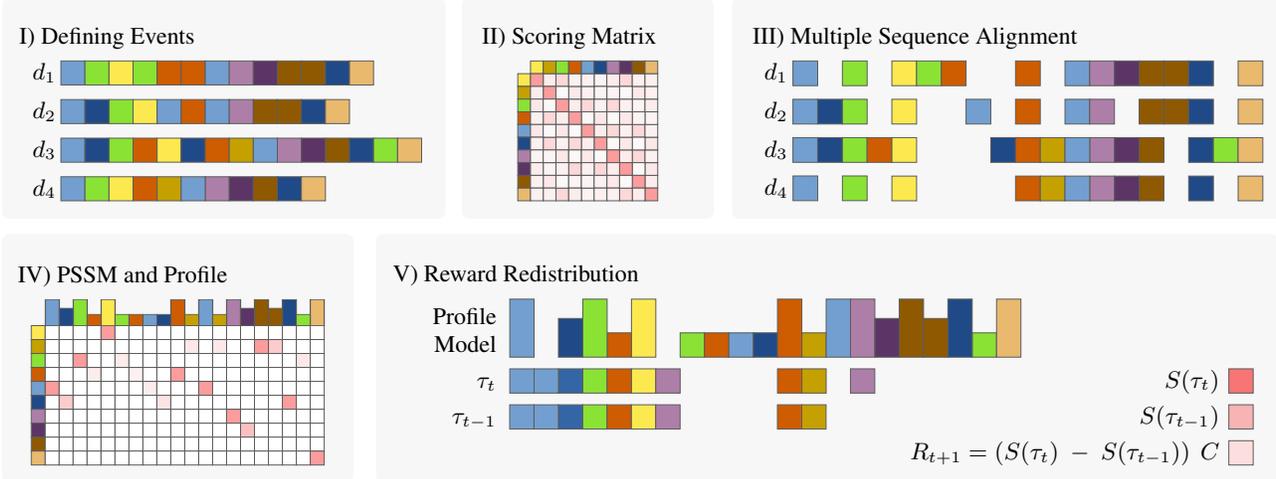}
     \caption[]{
    The five steps of Align-RUDDER's reward redistribution. 
    \textbf{(I)} shows several demonstrations, where each demonstration is composed of a sequence of events. Events are defined as difference of state-actions or clusters thereof.
    \textbf{Step (II)} depicts a scoring matrix, which we construct using event probabilities from demonstrations. 
    With demonstrations and the scoring matrix we then perform MSA in \textbf{step (III)}.
    In \textbf{step (IV)} to construct the profile model and PSSM from the alignment. 
    These are then used to align a new sequence to the model in \textbf{step (V)}, one timestep at a time. The differences in alignment score are then used to   redistribute reward for this new sequence.
    }
    \label{fig:5steps1}
\end{figure}

{\bf (I) Defining Events. }
Alignment techniques assume that sequences consist 
of few symbols, e.g.\ about 20 symbols, the events. 
It is crucial to keep the number of events small 
in order to increase the difference between 
a random alignment and an alignment of demonstrations. 
If there are many events, then two demonstrations might 
have few events that can be matched, 
which cannot be well distinguished from random alignments. 
This effect is known in bioinformatics as 
“Inconsistency of Maximum Parsimony” \citep{Felsenstein:78}.
The events can be the original state-action pairs, clusters thereof,
or other representations of state-action pairs, e.g.\ indicating changes of
inventory, health, energy, skills etc.
In general, we define events as a cluster of states or state-actions.
A sequence of events is obtained from a state-action sequence 
by substituting states or state-actions by their cluster identifier. 
In order to cluster states, 
a similarity measure between them is required.
We suggest to use the ``successor representation'' \citep{Dayan:93} of the
states, which gives a similarity matrix based on 
how connected two states are given a policy.
Successor representation have been used before \citep{Machado:17succ, Ramesh:19}
to obtain important events, for option learning.
For computing the successor representation,
we use the demonstrations combined with state-action sequences
generated by a random policy.
For high dimensional state spaces ``successor features'' \citep{Barreto:17}
can be used.
We use similarity-based clustering methods like 
affinity propagation (AP) \citep{Frey:07}.
For AP the similarity matrix does not have to be symmetric and
the number of clusters need not be known.
State action pairs $(s,a)$ are mapped to events $e$.

{\bf (II) Determining the Alignment Scoring System. }
Alignment algorithms
distinguish similar 
sequences from dissimilar sequences
using a scoring system.
A scoring matrix $\mathbbm{S}$ has entries
$\mathbbm{s}_{i,j}$ that give the score for aligning event $i$ with $j$.
The MSA score $S_{\mathrm{MSA}}$ of a multiple sequence alignment is
the sum of all pairwise scores:
$S_{\mathrm{MSA}} = \sum_{i,j,i<j} \sum_{t=0}^L \mathbbm{s}_{x_{i,t},x_{j,t}}$, 
where $x_{i,t}$ means that event $x_{i,t}$ is at position $t$ for
sequence $\tau_i=e_{i,0:T}$ in the alignment, analog for $x_{j,t}$ and the
sequence $\tau_j=e_{j,0:T}$, and $L$ is the alignment length.
Note that $L \geq T$ and $x_{i,t} \not= e_{i,t}$, since gaps are present in the alignment. 
In the alignment, events should have the same 
probability of being aligned as they would have
if we know the strategy and align demonstrations accordingly.
The theory of high scoring segments 
gives a scoring scheme
with these alignment probabilities \citep{Karlin:90,Karlin:90a,Altschul:90}.
Event $i$ is observed with probability $p_i$ in the 
demonstrations, therefore a random alignment
aligns event $i$ with $j$ 
with probability $p_i p_j$.
An alignment algorithm maximizes
the MSA score $S_{\mathrm{MSA}}$ and, thereby, aligns
events $i$ and $j$ with probability $q_{ij}$ for demonstrations.
High values of $q_{ij}$ means that the MSA
often aligns events $i$ and $j$ in the demonstrations using the scoring matrix $\mathbbm{S}$ with
entries $\mathbbm{s}_{i,j}$.
According to Theorem~2 and Equation~[3] in \citet{Karlin:90},
asymptotically with the sequence length,
we have $\mathbbm{s}_{i,j} = \ln(q_{ij}/ (p_i p_j) ) /  \lambda^*$,
where $\lambda^*$ is the unique positive root of
\mbox{$\sum_{i=1,j=1}^{n,n} p_i p_j \exp(\lambda \mathbbm{s}_{i,j}) =1$}
(Equation~[4] in \citet{Karlin:90}).\\
We can now choose a desired probability $q_{ij}$ and then
compute the scoring matrix $\mathbbm{S}$ with
entries $\mathbbm{s}_{i,j}$.
High values of $q_{ij}$ should indicate relevant events for 
the strategy. 
A priori, we only know that a relevant event should
be aligned to itself, while we do not know which events are
relevant.
Therefore we set $q_{ij}$ to large values for every $i=j$ 
and to low values for $i\not=j$.
Concretely, we set
$q_{ij}=p_i-\epsilon$ for $i=j$ and $q_{ij}=\epsilon/(n-1)$ for $i\not=j$,
where $n$ is the number of different possible events.
Events with smaller $p_i$
receive a higher score $\mathbbm{s}_{i,i}$ when aligned to themselves since
this self-match is less often observed when randomly matching events
($p_i p_i$ is the probability of a random self-match).
Any prior knowledge about events  
should be incorporated into $q_{ij}$. 

{\bf (III) Multiple sequence alignment (MSA). }
MSA first produces pairwise alignments 
between all demonstrations.
Then, a guiding tree (agglomerative hierarchical clustering) is produced via hierarchical clustering
sequences,
according to their pairwise alignment scores.
Demonstrations which follow the same strategy
appear in the same cluster in the guiding tree.
Each cluster is aligned separately via MSA to
address different strategies.
However, if there is not a cluster of demonstrations, 
then the alignment will fail. 
MSA methods like
ClustalW \citep{Thompson:94} or MUSCLE \citep{Edgar:04}
can be used.

{\bf (IV) Position-Specific Scoring Matrix (PSSM) and Profile. } 
From the final alignment, we construct 
a) an MSA profile (column-wise event frequencies $q_{i,j}$) and
b) a PSSM \citep{Stormo:82} which is used for aligning new sequences to the profile of the MSA.
To compute the PSSM (column-wise scores $\mathbbm{s}_{i,t}$), 
we apply Theorem~2 and Equation~[3] in \citet{Karlin:90}.
Event $i$ is observed with probability $p_i$ in the data.
For each position $t$ in the alignment, 
we compute $q_{i,t}$, which indicates the frequency
of event $i$ at position $t$.
The PSSM is $\mathbbm{s}_{i,t}  =  \ln(q_{i,t}/ p_i ) /  \lambda^*_t$, 
where 
$\lambda^*_t$ is the single unique positive root of
$\sum_{i=1}^n p_i  \exp(\lambda \mathbbm{s}_{i,t}) =1$
(Equation~[1] in \cite{Karlin:90}).
If we align a new sequence that follows the underlying 
strategy (a new demonstration) to the profile model, 
we would see that event $i$ is aligned to position $t$ in the
profile with probability $q_{i,t}$.

{\bf (V) Reward Redistribution. }
The reward redistribution is based on the profile model. 
A sequence $\tau=e_{0:T}$ ($e_t$ is event at position $t$)
is aligned to the profile, 
which gives the score $S(\tau) = \sum_{t=0}^L \mathbbm{s}_{x_t,t}$. 
Here, $\mathbbm{s}_{i,t}$ is the alignment score for event $i$ 
and $x_t$ is the event of $\tau$ at position $t$ in the alignment.
$L$ is the profile length, where 
$L \geq T$ and $x_t \not= e_t$, because of gaps in the alignment.
If $\tau_t=e_{0:t}$ is the prefix sequence of $\tau$ of length $t+1$,
then the reward redistribution $R_{t+1}$ for $0 \leq \ t \leq T$ is  
\begin{align}
\label{eq:RRa}
    R_{t+1} \ = \  \left( S(\tau_t) \ - \ S(\tau_{t-1}) \right)  C
    \ = \ g((s,a)_{0:t}) - g((s,a)_{0:t-1})   , \   
    R_{T+2} \ = \ \tilde{G}_0  -  \sum_{t=0}^T R_{t+1}  ,
\end{align}
where $C =  \EXP_{\mathrm{demo}} \left[\tilde{G}_0 \right] \ / \   
    \EXP_{\mathrm{demo}} \left[ \sum_{t=0}^T S(\tau_t)- S(\tau_{t-1}) \right]$ and $\tilde{G}_0=\sum_{t=0}^T\tilde{R}_{t+1}$ 
is the original return of the sequence $\tau$ and $S(\tau_{t-1})=0$. 
$\EXP_{\mathrm{demo}}$ is the expectation over demonstrations,
and $C$ scales $R_{t+1}$ to the range of $\tilde{G}_0$. 
$R_{T+2}$ is the correction of the redistributed reward \citep{Arjona:19}, 
with zero expectation for demonstrations:
$\EXP_{\mathrm{demo}} \left[ R_{T+2}\right] = 0$.
Since $\tau_t=e_{0:t}$ and $e_t=f(s_t,a_t)$, we can set
$g((s,a)_{0:t})=S(\tau_t) C $.
We ensure strict return equivalence, since
$G_0=\sum_{t=0}^{T+1} R_{t+1} = \tilde{G}_0$. 
The redistributed reward depends only on the past,
that is, $R_{t+1}=h((s,a)_{0:t})$.
For computational efficiency, the alignment of $\tau_{t-1}$
can be extended to one for $\tau_t$, like exact matches are 
extended to high-scoring sequence pairs with the BLAST algorithm \citep{Altschul:90,Altschul:97}.

{\em Sub-tasks.} The reward redistribution identifies sub-tasks, 
which are alignment positions with high redistributed reward.
It also determines the terminal states and 
automatically assigns reward for solving the sub-tasks. 
However, reward redistribution and Align-RUDDER 
cannot guarantee that the reward is Markov.
For redistributed reward that is Markov, 
the option framework \citep{Sutton:99},
the MAXQ framework \citep{Dietterich:00},
or recursive composition of option models \citep{Silver:12}
can be used as subsequent approaches to hierarchical reinforcement
learning.

\subsection{Sequence Alignment}
\label{sec:seqalig}
In bioinformatics, sequence alignment identifies regions of significant similarity among different biological sequences to establish evolutionary relationships between those sequences. In 1970, Needleman and Wunsch proposed a global alignment method based on dynamic programming \citep{Needleman:70}. This approach ensures the best possible alignment given a substitution matrix, such as PAM \citep{Dayhoff:78} or BLOSUM\citep{Henikoff:92}, and other parameters to penalize gaps in the alignment. The method of Needlemann and Wunsch is of $O(mn)$ complexity both in memory and time, which could be prohibitive in long sequences like genomes.  An optimization of this method by \citet{Hirschberg:75}, reduces memory to $O(m+n)$, but still requires $O(mn)$ time.

Later, Smith and Waterman developed a local alignment method for sequences \citep{Smith:81}. It is a variation of Needleman and Wunsch's method, keeping the substitution matrix and the gap-scoring scheme but setting cells in the similarity matrix with negative scores to zero. The complexity for this algorithm is of $O(n^2 M)$. Osamu Gotoh published an optimization of this method, running in $O(mn)$ runtime \citep{Gotoh:82}.

The main difference between both methods is the following:  
\begin{itemize}
	\item The global alignment method by Needleman and Wunsch aligns the sequences fixing the first and the last position of both sequences. It attempts to align every symbol in the sequence, allowing some gaps, but the main purpose is to get a global alignment. This is especially useful when the two sequences are highly similar. For instance:
	    \begin{verbatim}
        ATCGGATCGACTGGCTAGATCATCGCTGG
        CGAGCATC-ACTGTCT-GATCGACCTTAG
        * *** **** ** ****  *  * *
        \end{verbatim}
\end{itemize}

\begin{itemize}	
	\item As an alternative to global methods, the local method of Smith and Waterman aligns the sequences with a higher degree of freedom, allowing the alignment to start or end with gaps. This is extremely useful when the two sequences are substantially dissimilar in general but suspected of having a highly related sub region. 
    	\begin{verbatim}
        ATCAAGGAGATCATCGCTGGACTGAGTGGCT----ACGTGGTATGT
        ATC----CGATCATCGCTGG-CTGATCGACCTTCTACGT-------
        ***     ************ ****  * *     ****
        \end{verbatim}
\end{itemize}

\paragraph{Multiple Sequence Alignment algorithms.}

The sequence alignment algorithms by Needleman and Wunsch and Smith and Waterman are limited to aligning two sequences. The approaches for generalizing these algorithms to multiple sequences can be classified into four categories:
\begin{itemize}
	\item Exact methods~\citep{Wang:94}.
	\item Progressive methods: ClustalW~\citep{Thompson:94}, Clustal Omega~\citep{Sievers:14}, T-Coffee~\citep{Notredame:00}.
	\item Iterative and search algorithms: DIALIGN~\citep{Morgenstern:04}, MultiAlign~\citep{Corpet:88}.
	\item Local methods: eMOTIF~\citep{Mccammon:98}, PROSITE~\citep{Bairoch:94}.
\end{itemize}

For more details, visit \textit{Sequence Comparison: Theory and methods}~\citep{Chao2009}.

In our experiments, we use ClustalW from Biopython~\citep{Cock:09} with the following parameters:

\begin{verbatim}
clustalw2 -ALIGN -CLUSTERING=UPGMA -NEGATIVE " \
        "-INFILE={infile} -OUTFILE={outfile} " \
        "-PWMATRIX={scores} -PWGAPOPEN=0 -PWGAPEXT=0 " \
        "-MATRIX={scores} -GAPOPEN=0 -GAPEXT=0 -CASE=UPPER " \
        "-NOPGAP -NOHGAP -MAXDIV=0 -ENDGAPS -NOVGAP " \
        "-NEWTREE={outputtree} -TYPE=PROTEIN -OUTPUT=GDE
\end{verbatim}
where the \texttt{PWMATRIX} and \texttt{MATRIX} are computed according to step (II) in Sec.~\ref{sec:sequence_alignment} of the main paper.

\newpage
\subsection{Artificial Task Experiments}
\label{sec:Experiments}

This section provides additional information that supports the results reported in the main paper for Artificial Tasks (I) and (II).

\subsubsection{Hyperparameter Selection}

For (BC)+$Q$-Learning and Align-RUDDER, we performed a grid search to select the learning rate from the following values $[0.1, 0.05, 0.01]$. We used $20$ different seeds for each value and each number of demonstrations and then selected the setting with the highest success for all number of demonstrations. The final learning rate for (BC)+$Q$-Learning and DQfD is $0.01$ and for Align-RUDDER it is $0.1$. 

For DQfD, we set the experience buffer size to $30,000$ and the number of experiences sampled at every timestep to $10$. The DQfD loss weights are set to $0.01$, $0.01$ and $1.0$ for the $Q$-learning loss term,   $n$-step loss term and the expert loss respectively during pre-training. During online learning, we change the loss terms to $1.0$, $1.0$ and $0.01$ for the $Q$-learning loss term, $n$-step loss term and the expert loss term. This was necessary to enable faster learning for DQfD. The expert action margin is $0.8$.

For successor representation, we use a learning rate of $0.1$ and a gamma of $0.99$. 
We update the successor table multiple times using the same transition (state, action, next state) 
from the demonstration. 

For affinity propagation, we use a damping factor of $0.5$ and set the maximum number of iterations to $1000$. 
Furthermore, if we obtain more than $15$ clusters, then we combine clusters based on the similarity 
of the cluster centers. 

\subsubsection{Figures}
Figure~\ref{fig:sample_trajectories} shows sample trajectories in the FourRooms and EightRooms environment, with the initial and target positions marked in red and green respectively.
Figure~\ref{fig:sample_clusters} shows the clusters after performing clustering with Affinity Propagation using the successor representation with 25 demonstrations and an environment with 1\% stochasticity on the transitions. Different colors indicate different clusters.
Figures~\ref{fig:sample_clusters_10} and \ref{fig:sample_clusters_v1} show clusters for different environment settings. Figure~\ref{fig:sample_clusters_10} shows clusters when using 10 demonstrations and for Figure~\ref{fig:sample_clusters_v1} environments with 5\% stochastictiy on transitions was used.
Figure~\ref{fig:sample_reward_redistribution} shows the reward redistribution for the given example trajectories in the FourRooms and EightRooms environments.

\begin{figure*}[!ht]
  \centering
    \includegraphics[width=6cm,height=6cm]{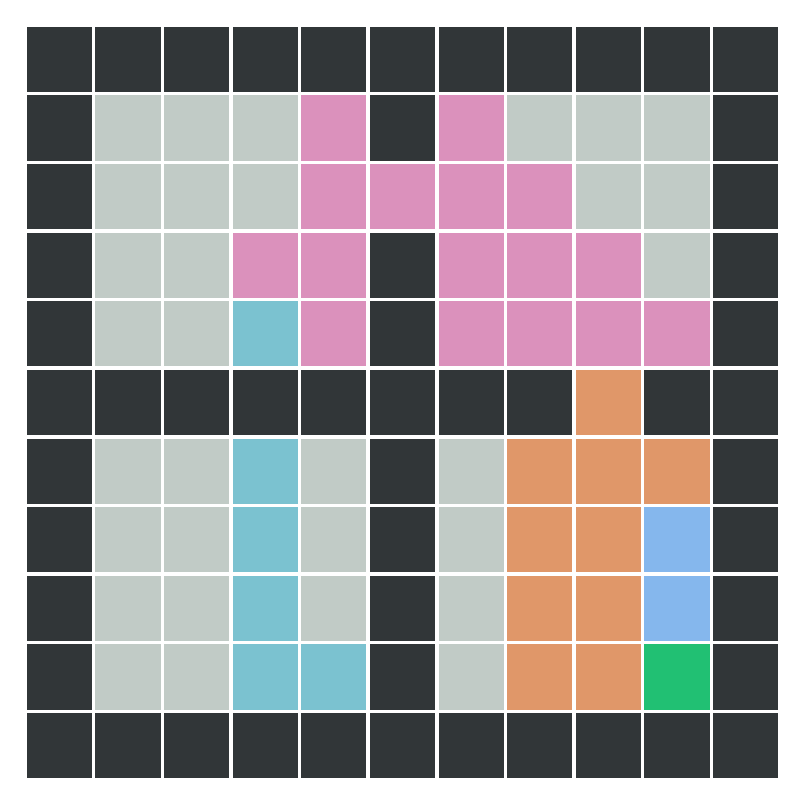}
    \includegraphics[width=0.6\textwidth,height=6cm]{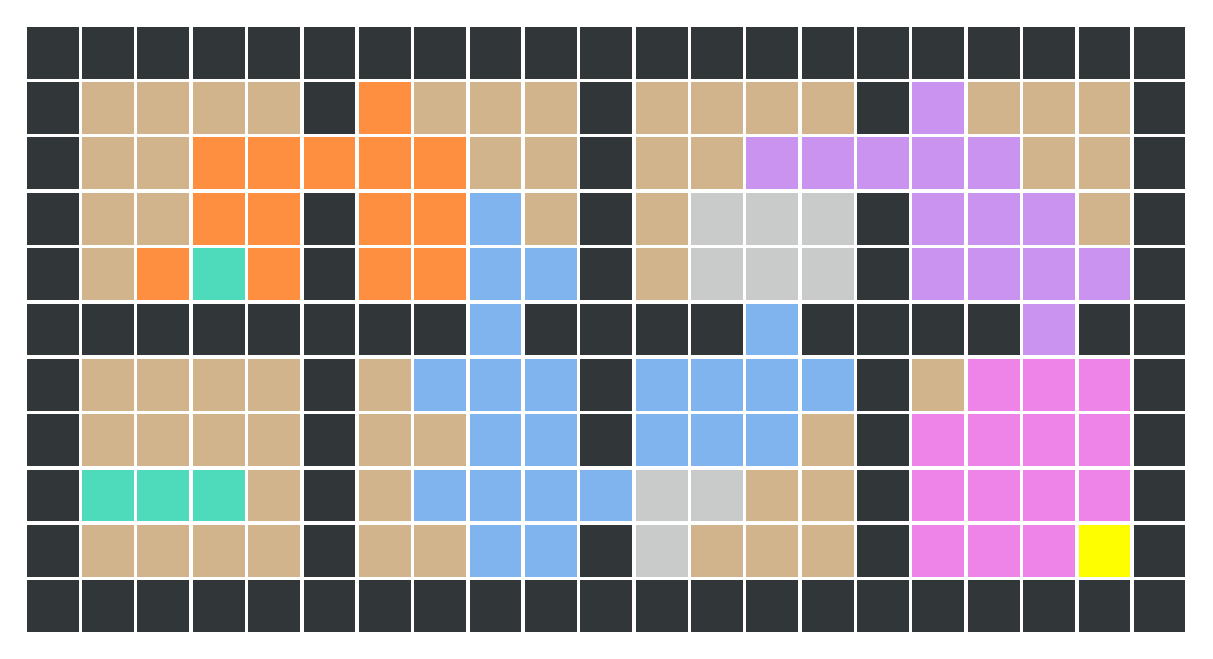}
  \caption[Clusters formed in the FourRooms and EightRooms environment]{Examples of clusters formed in the FourRooms (left) and EightRooms (right) environment with 1\% stochasticity on the transitions after performing clustering with Affinity Propagation using the successor representation with 25 demonstrations. Different colors represent different clusters.}
  \label{fig:sample_clusters}
\end{figure*}

\begin{figure*}[!ht]
  \centering
    \includegraphics[width=6cm,height=6cm]{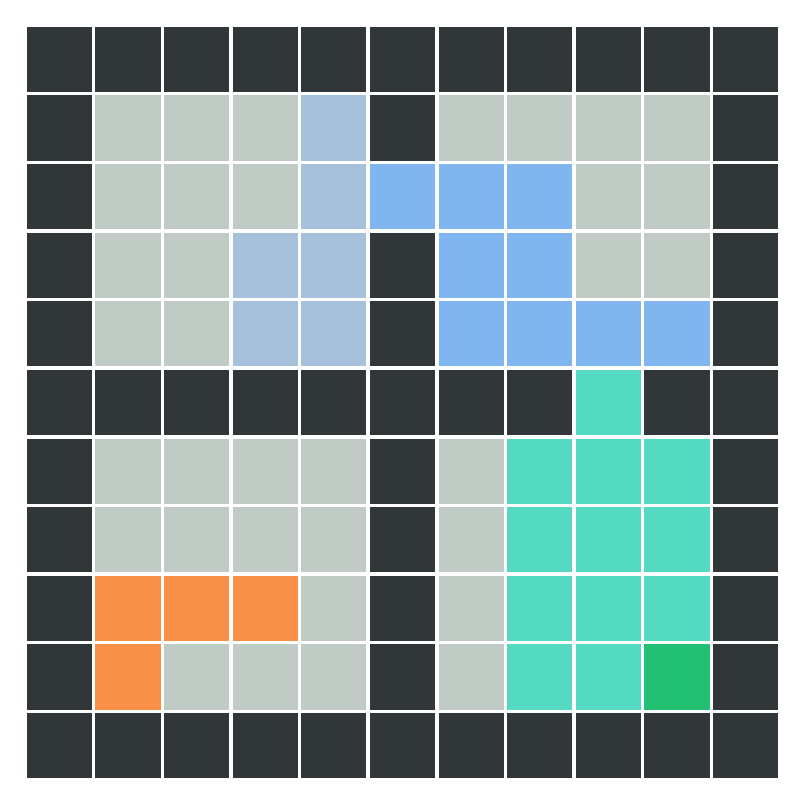}
    \includegraphics[width=0.6\textwidth,height=6cm]{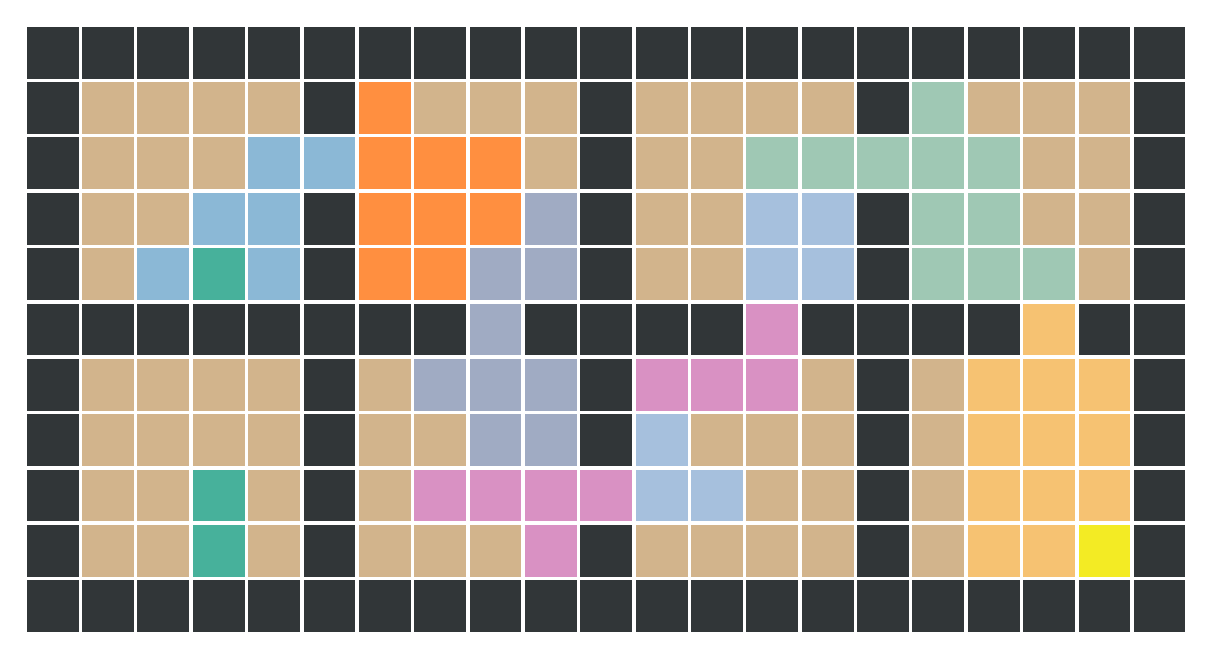}
  \caption[Clusters formed in the FourRooms and EightRooms environment]{Examples of clusters formed in the FourRooms (left) and EightRooms (right) environment with 1\% stochasticity on the transitions after performing clustering with Affinity Propagation using the successor representation with 10 demonstrations. Different colors represent different clusters.}
  \label{fig:sample_clusters_10}
\end{figure*}

\begin{figure*}[!ht]
  \centering
    \includegraphics[width=6cm,height=6cm]{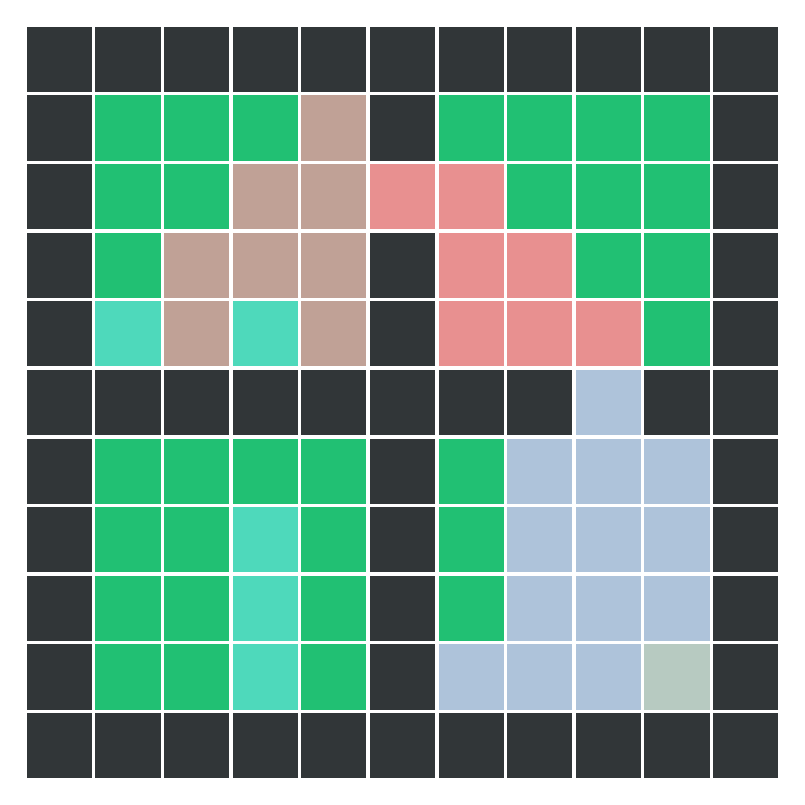}
    \includegraphics[width=0.6\textwidth,height=6cm]{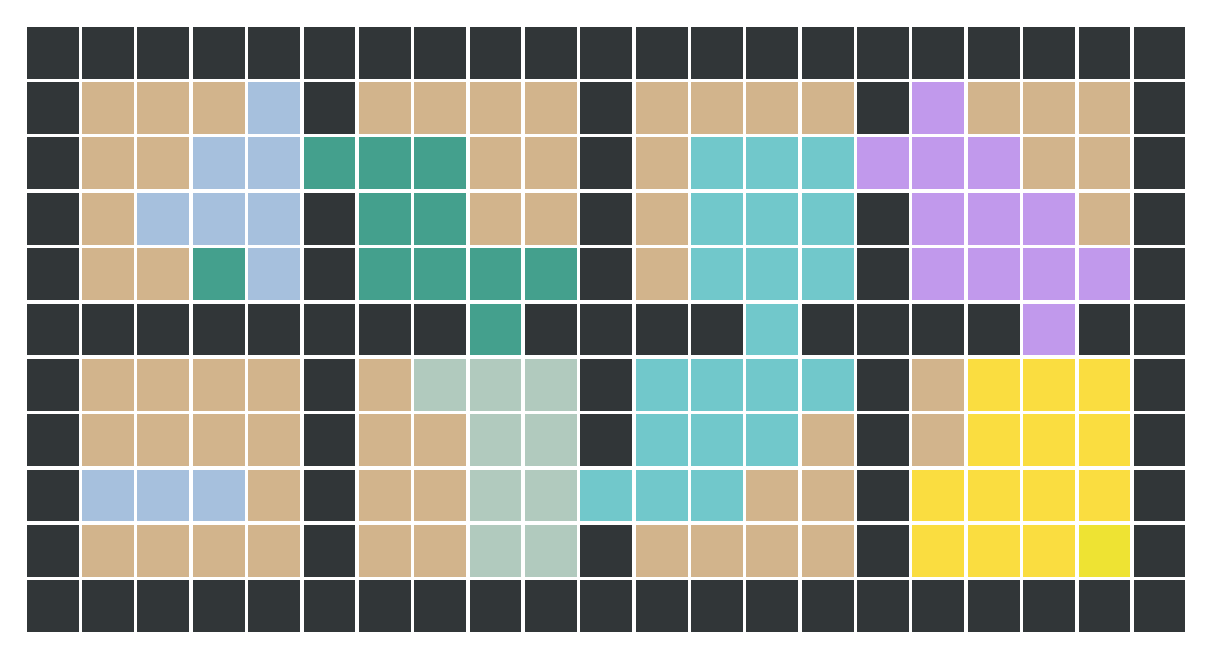}
  \caption[Clusters formed in the FourRooms and EightRooms environment]{Examples of clusters formed in the FourRooms (left) and EightRooms (right) environment with 5\% stochasticity on the transitions after performing clustering with Affinity Propagation using the successor representation with 25 demonstrations. Different colors represent different clusters.}
  \label{fig:sample_clusters_v1}
\end{figure*}

\begin{figure*}[!ht]
  \centering
    \includegraphics[width=6cm,height=6cm]{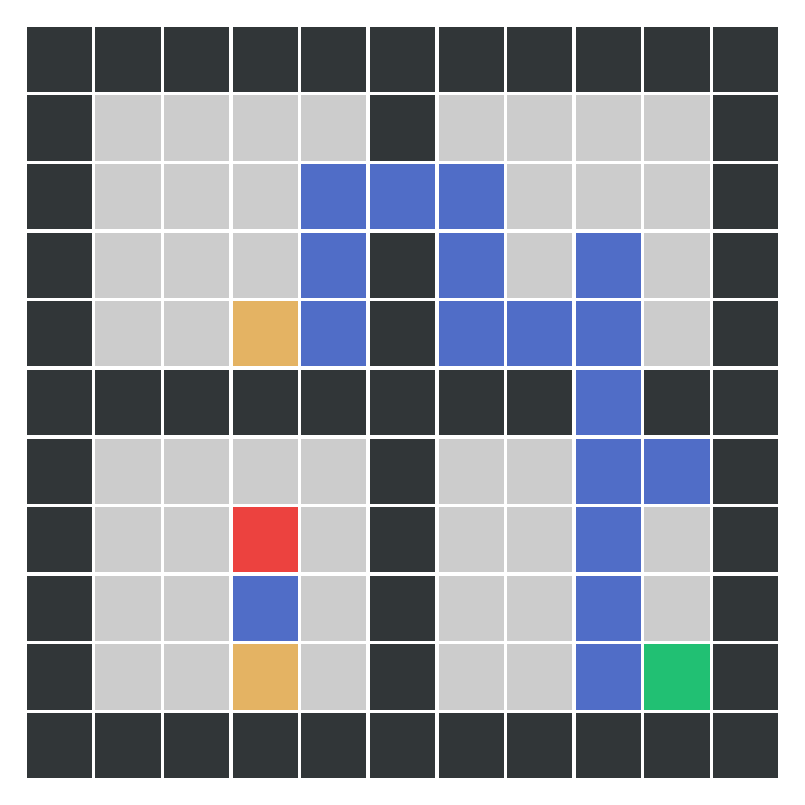}
    \includegraphics[width=0.6\textwidth,height=6cm]{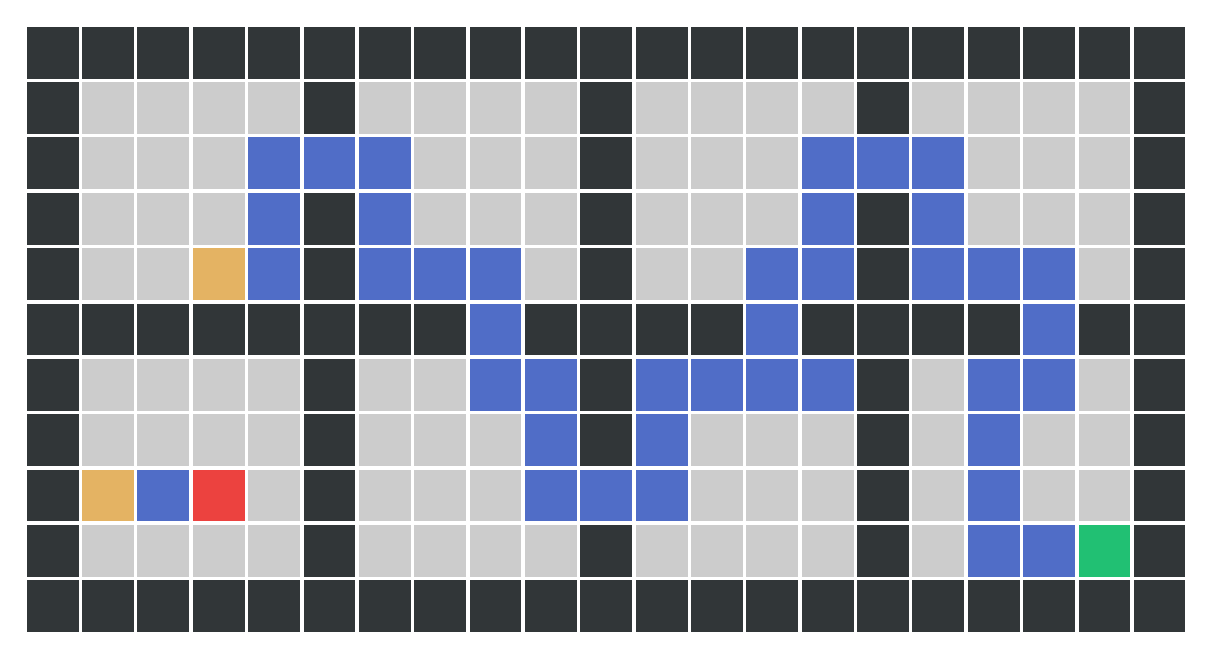}
  \caption[FourRooms and EightRooms environments]{Exemplary trajectories in the FourRooms (left) and EightRooms (right) environments. Initial position is indicated in red, the portal between the first and second room in yellow and the goal in green.}
  \label{fig:sample_trajectories}
\end{figure*}

\begin{figure*}[!ht]
  \centering
    \includegraphics[width=6cm,height=6cm]{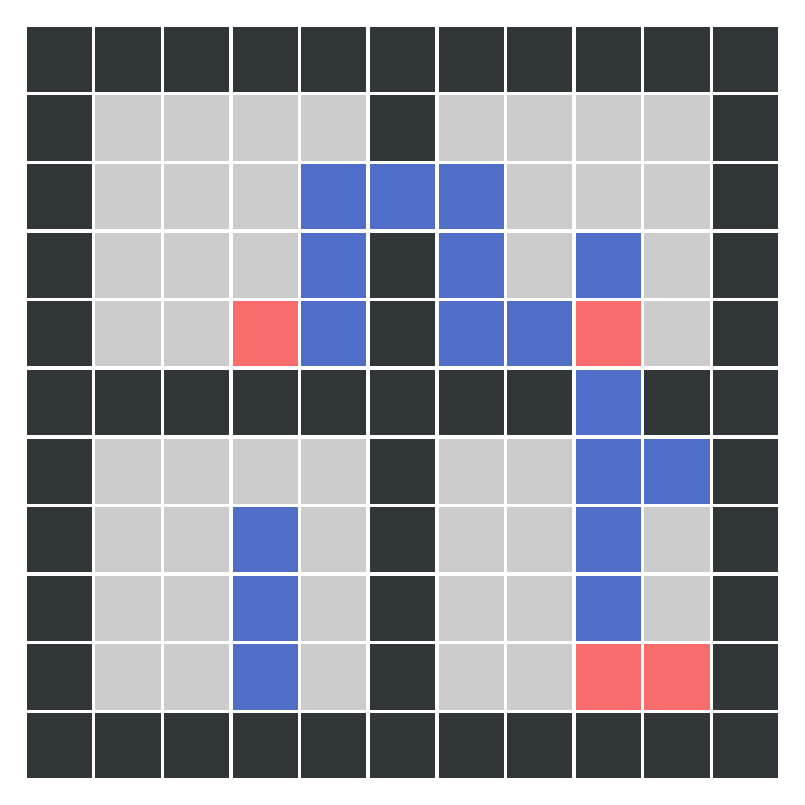}
    \includegraphics[width=0.6\textwidth,height=6cm]{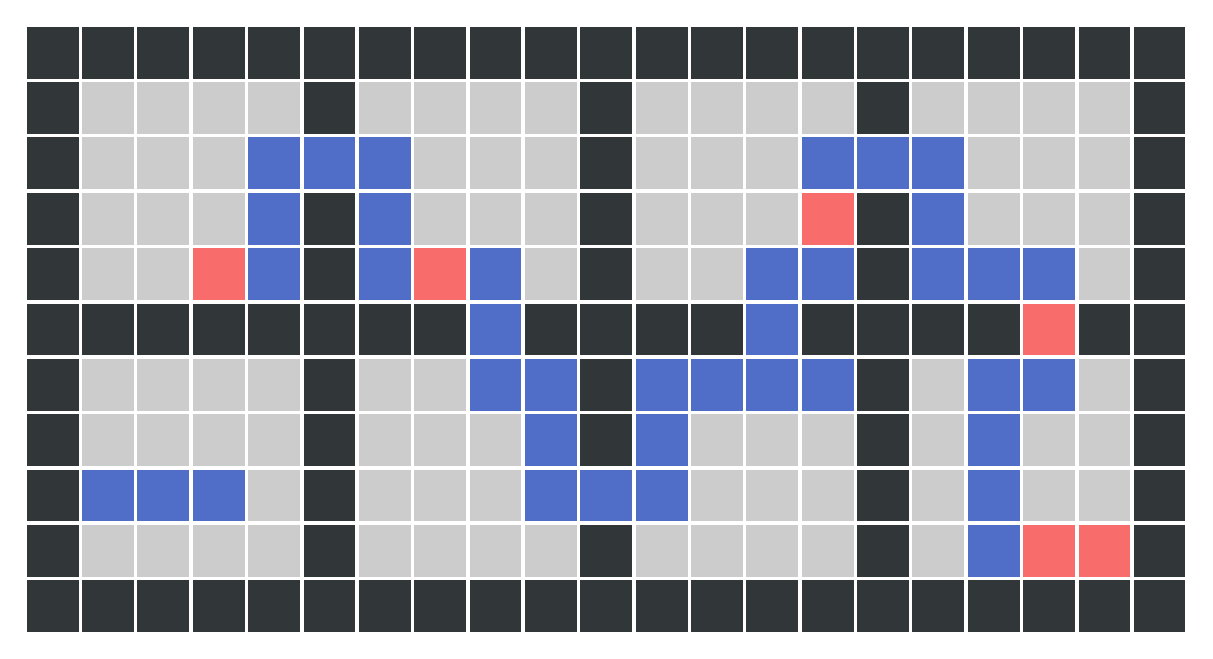}
    
  \caption[Reward redistribution for the FourRooms and EightRooms environments]{Reward redistribution for the above trajectories in the FourRooms (left) and EightRooms (right) environments.}
  \label{fig:sample_reward_redistribution}
\end{figure*}

\newpage

\subsubsection{Artificial Task p-values}
Tables~\ref{tab:p_values_fourrooms} and \ref{tab:p_values_eightrooms} show the $p$-values obtained by performing a Mann-Whitney-U test between Align-RUDDER and BC+$Q$-Learning and DQfD respectively.

\begin{table}[ht]
    \centering
\begin{tabular}{lrrrrr}
\toprule
{} &     2   &     5   &     10  &     50  &     100 \\
\midrule
Align-RUDDER vs. BC+$Q$-Learn.   & 8.8e-31 & 2.8e-30 & 1.1e-09 & 3.5e-01 & 1.6e-01 \\
Align-RUDDER vs. SQIL          & 3.6e-39 & 5.2e-39 & 3.1e-37 & 8.6e-36 & 1.9e-36 \\
Align-RUDDER vs. DQfD          & 2.7e-29 & 4.3e-30 & 1.3e-32 & 1.0e+00 & 1.0e+00 \\
Align-RUDDER vs. RUDDER (LSTM) & 1.9e-31 & 1.9e-27 & 3.7e-20 & 1.7e-15 & 7.3e-01 \\
\bottomrule
\end{tabular}
    \caption{$p$-values for Artificial Task (I), FourRooms, obtained by performing a Mann-Whitney-U test.}
    \label{tab:p_values_fourrooms}
\end{table}

\begin{figure}[h]
\centering
\begin{minipage}[t]{0.50\textwidth}
  \centering
    \raisebox{-\height}{\includegraphics[width=1\textwidth]
    {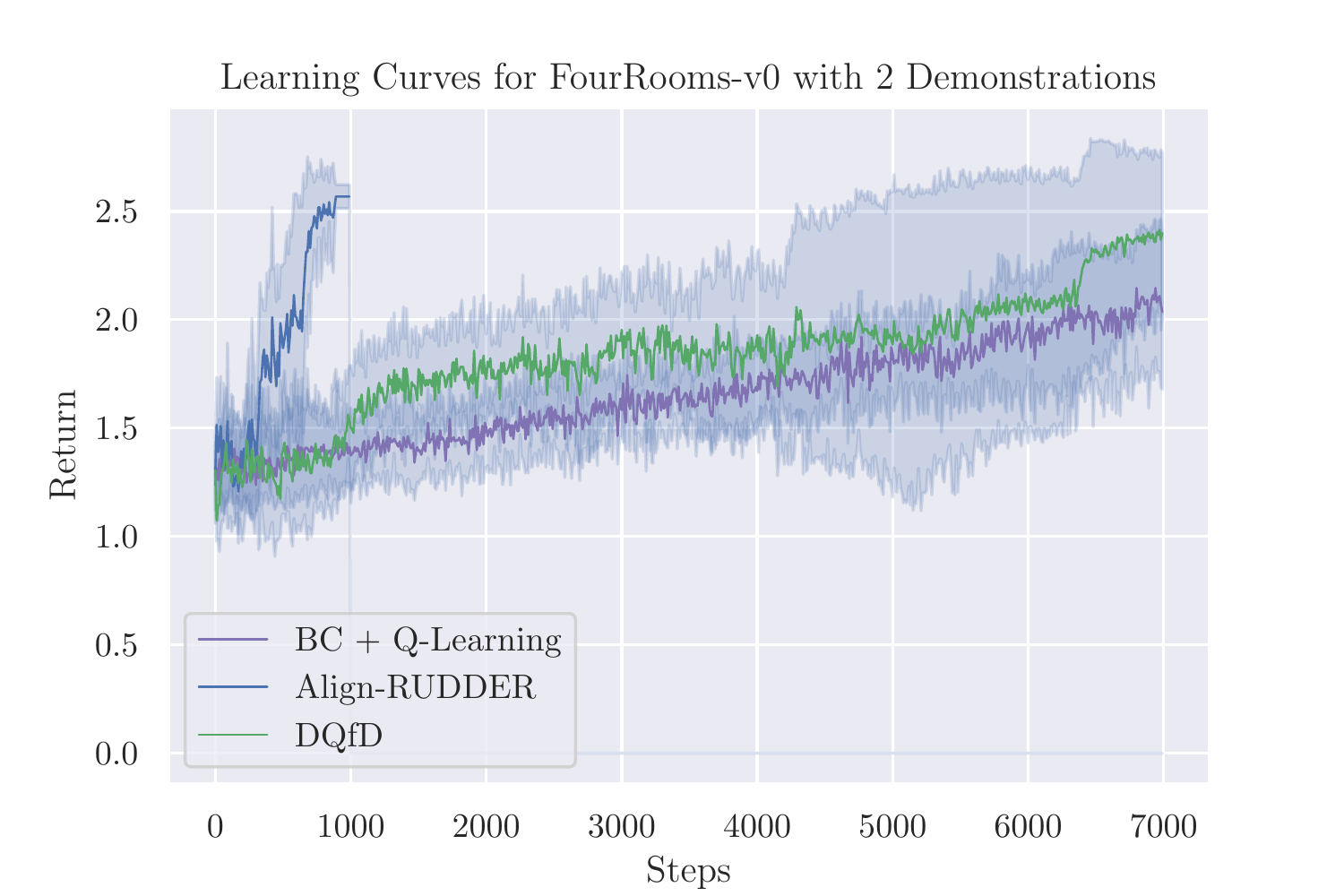}}
\end{minipage}\hfill
\begin{minipage}[t]{0.50\textwidth}
  \centering
    \raisebox{-\height}{\includegraphics[width=1\textwidth]
    {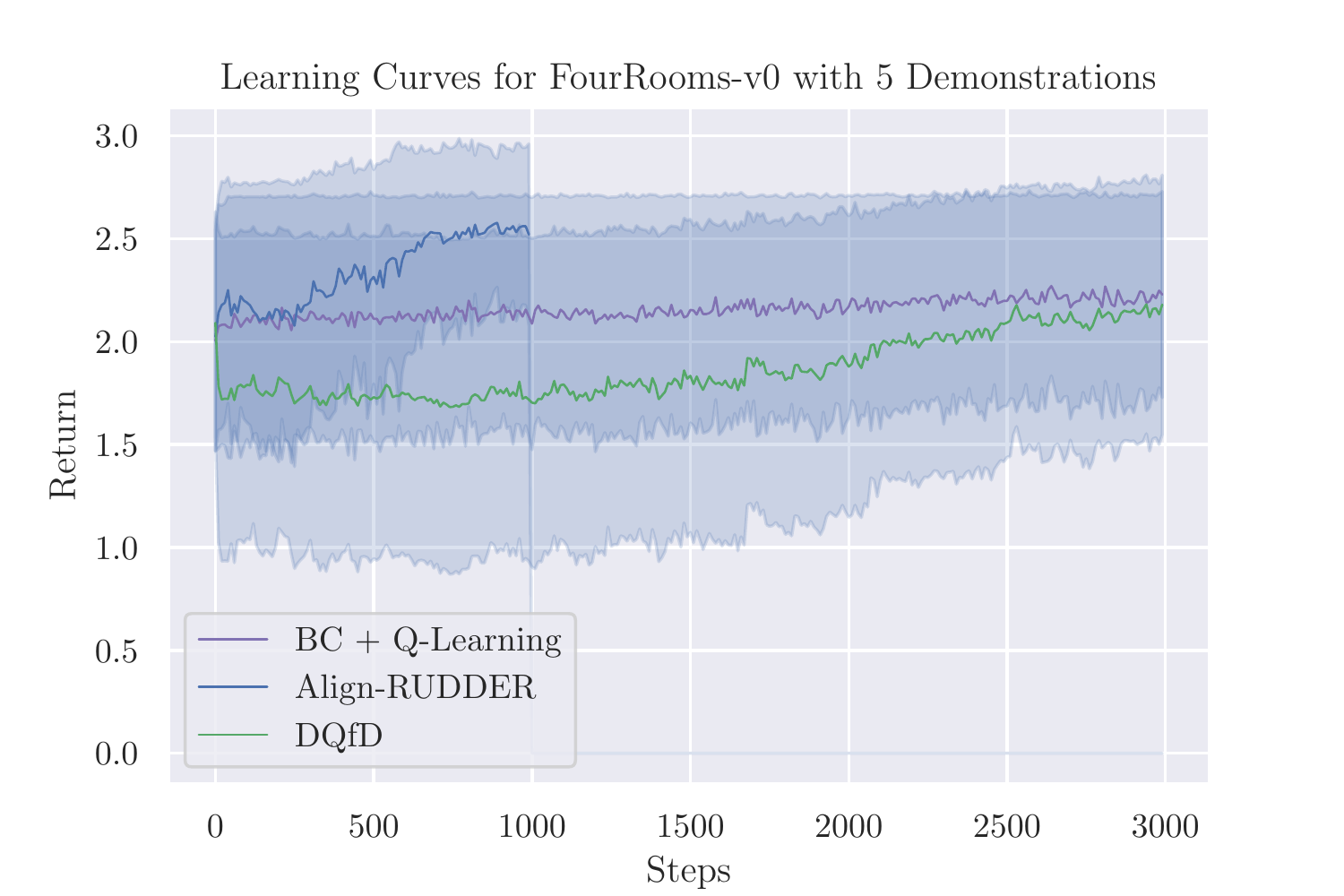}}
\end{minipage}\hfill \\
\begin{minipage}[t]{0.50\textwidth}
  \centering
    \raisebox{-\height}{\includegraphics[width=1\textwidth]
    {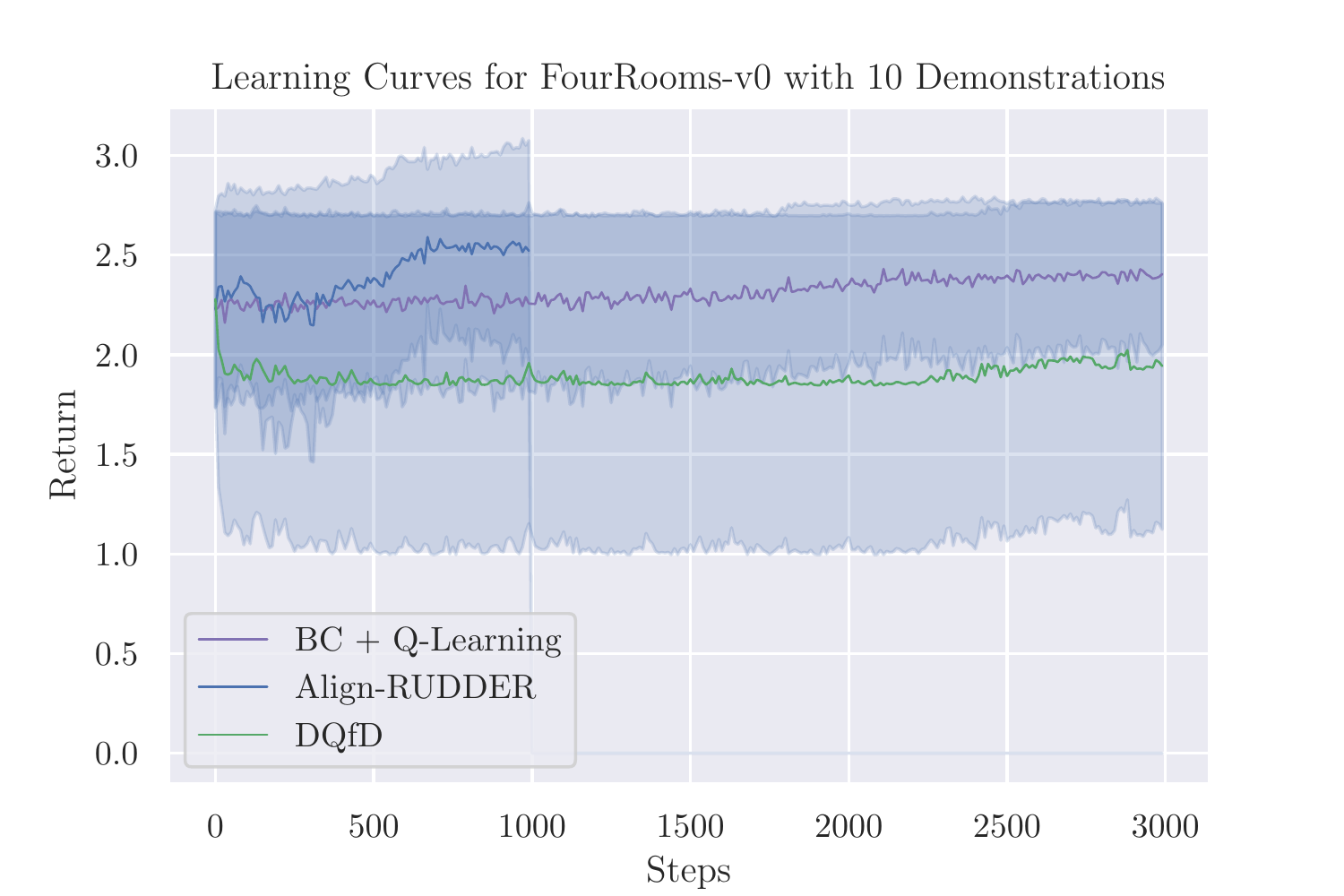}}
\end{minipage}\hfill
  \caption[FourRooms Learning Curve]{Learning curves for FourRooms-v0 environment. All experiments are stopped after reaching 80\% of the optimal return. These curves are averaged over 10 seeds. Align-RUDDER is able to reach higher return values faster than other methods.}
  \label{fig:learning_curve_four_rooms}
\end{figure}

\subsubsection{Stochastic Environments}
\label{subsec:stochastic_env}

Figure~\ref{fig:fourroom_stoc} shows results for the FourRooms environment with different levels of stochasticity (5\%, 10\%, 15\%, 25\% and 40\%) on the transitions.
Figure~\ref{fig:eightroom_stoc} shows results for the EightRooms environment with different levels of stochasticity (5\% and 10\%) on the transitions.

\begin{figure}[h]
\centering
\begin{minipage}[t]{0.50\textwidth}
  \centering
    \raisebox{-\height}{\includegraphics[width=1\textwidth]
    {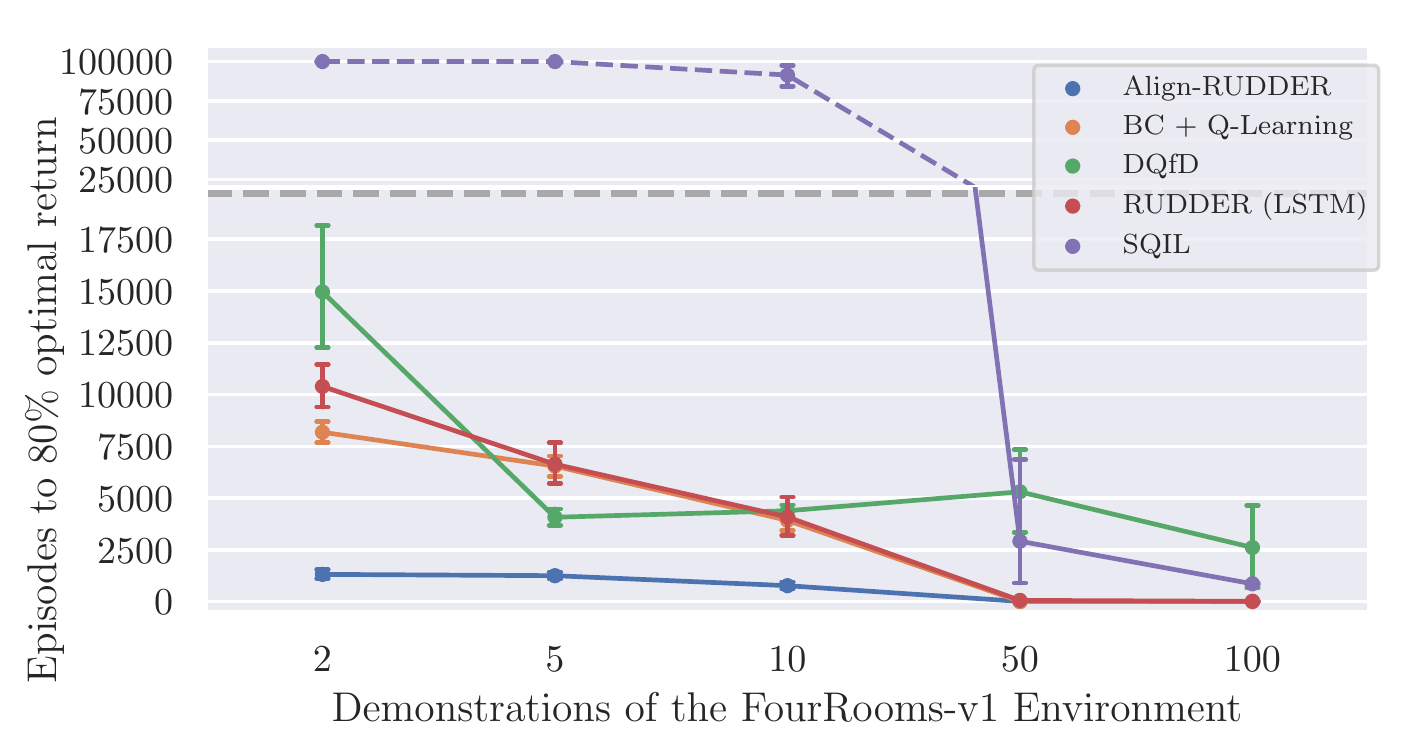}}
\end{minipage}\hfill
\begin{minipage}[t]{0.50\textwidth}
  \centering
    \raisebox{-\height}{\includegraphics[width=1\textwidth]
    {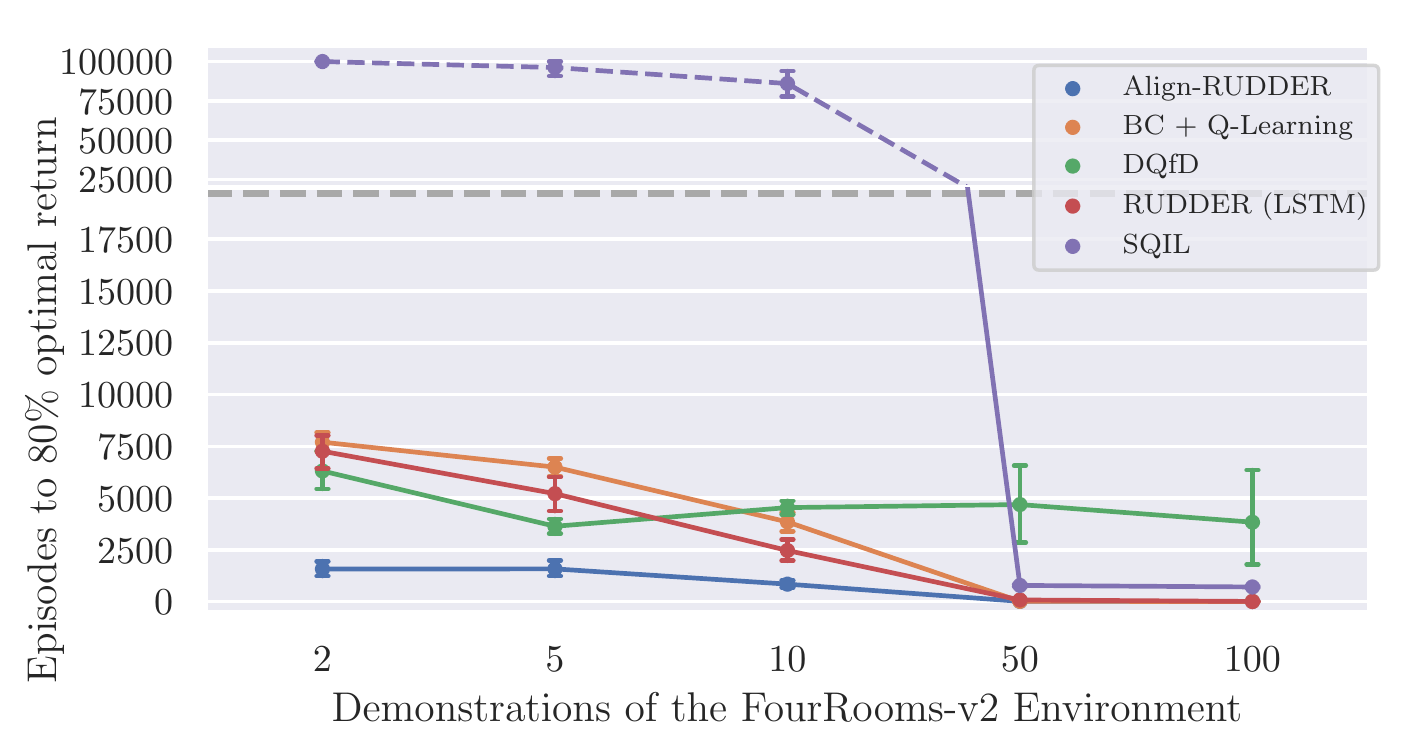}}
\end{minipage}\hfill \\
\begin{minipage}[t]{0.50\textwidth}
  \centering
    \raisebox{-\height}{\includegraphics[width=1\textwidth]
    {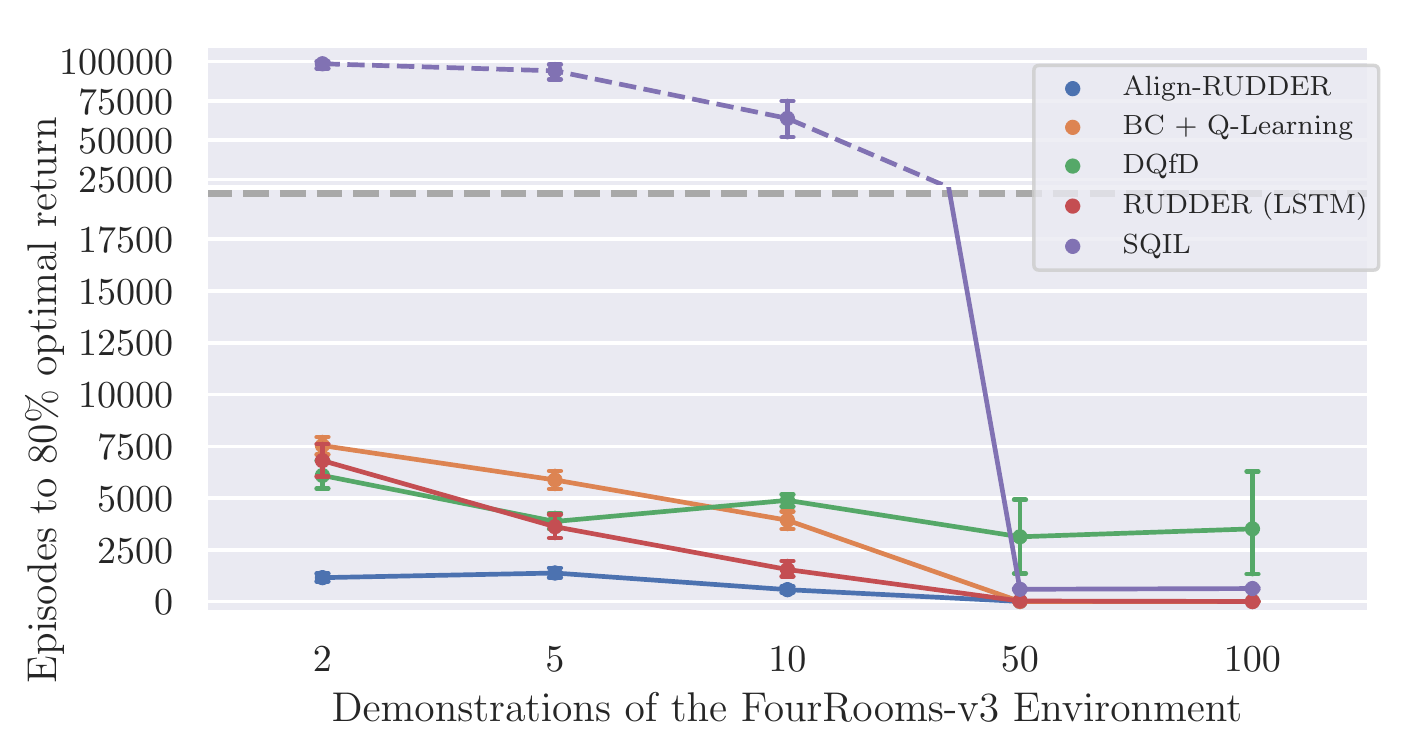}}
\end{minipage}\hfill
\begin{minipage}[t]{0.50\textwidth}
  \centering
    \raisebox{-\height}{\includegraphics[width=1\textwidth]
    {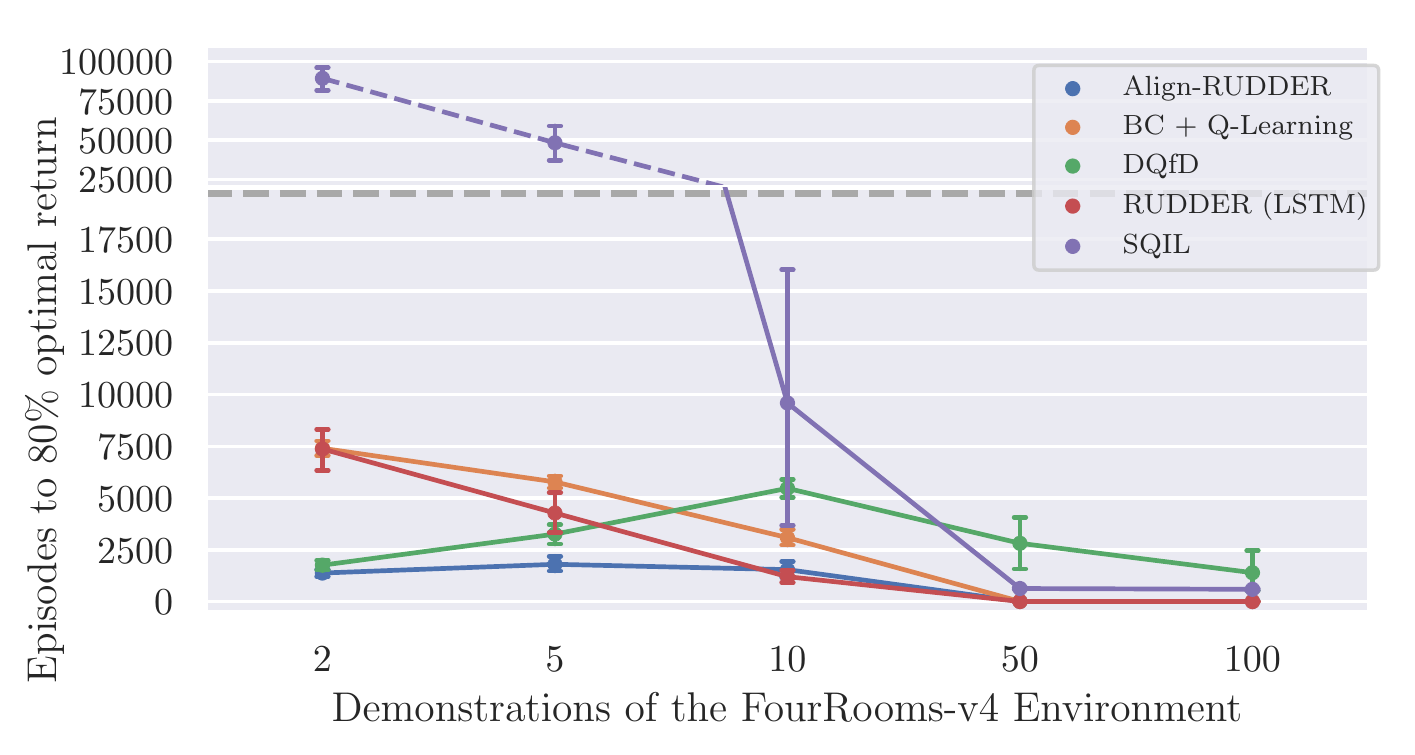}}
\end{minipage}\hfill \\
\begin{minipage}[t]{0.50\textwidth}
  \centering
    \raisebox{-\height}{\includegraphics[width=1\textwidth]
    {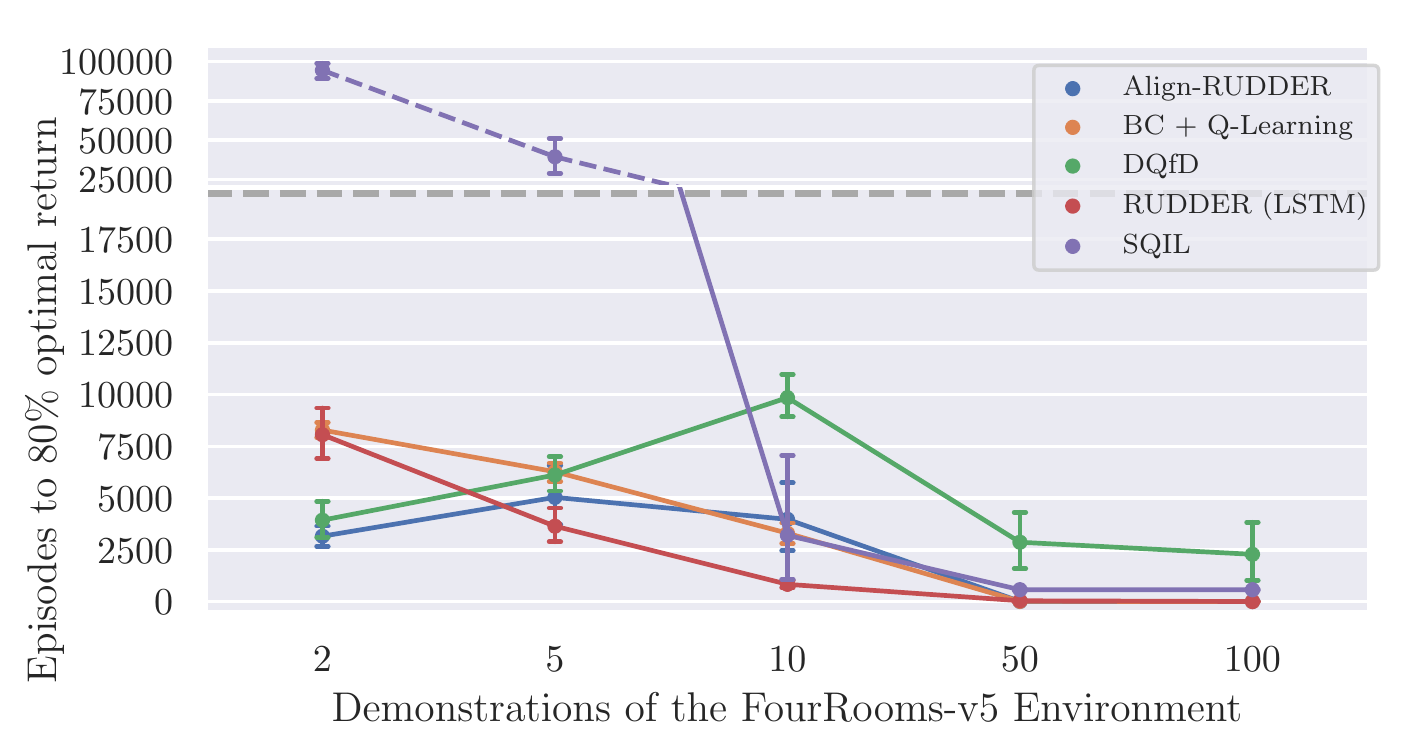}}
\end{minipage}\hfill
\caption[]{Comparison of Align-RUDDER and other methods on Task (I) (FourRooms) with increasing levels of stochasticity (from top left to bottom: 5\%, 10\%, 15\%, 25\% and 40\%). Results are the average over 50 trials.}
\label{fig:fourroom_stoc}
\end{figure}

\begin{figure}[h]
\centering
\begin{minipage}[t]{0.50\textwidth}
  \centering
    \raisebox{-\height}{\includegraphics[width=1\textwidth]
    {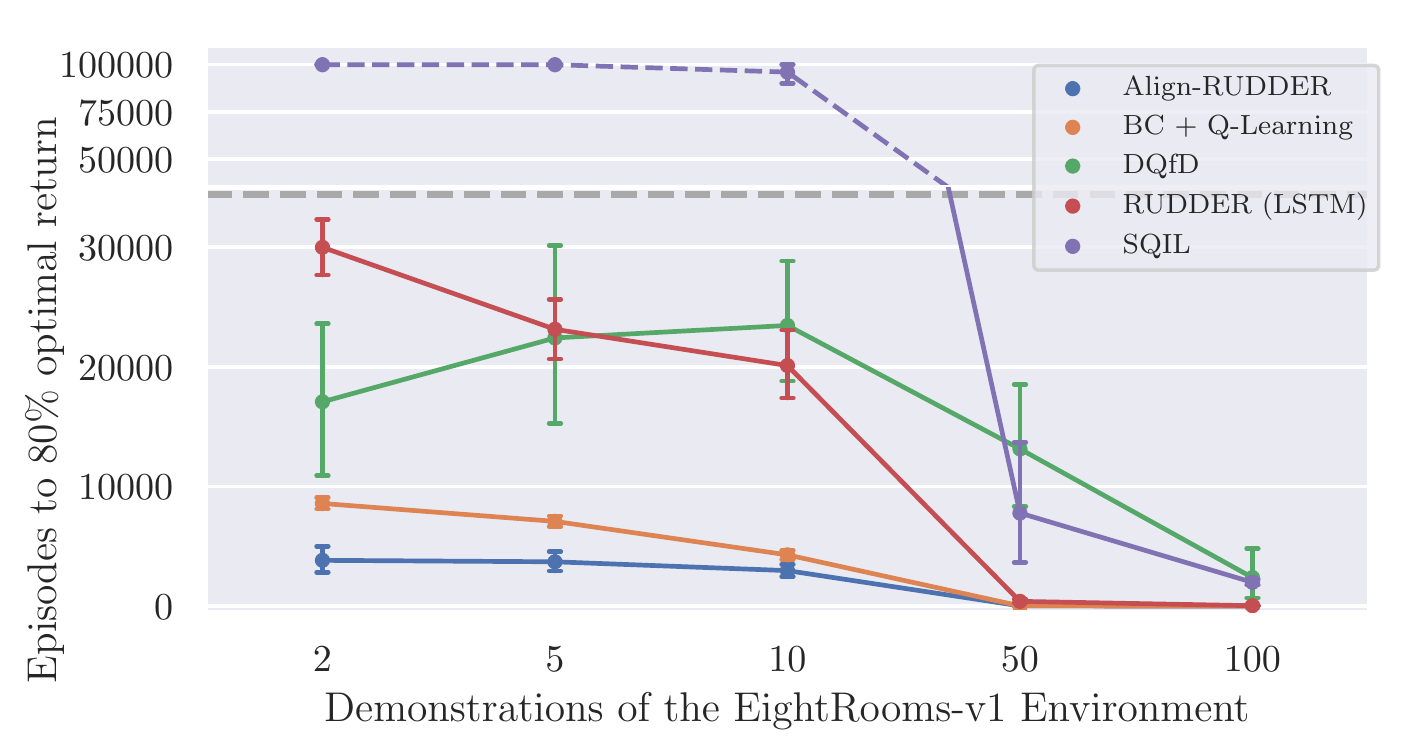}}
\end{minipage}\hfill
\begin{minipage}[t]{0.50\textwidth}
  \centering
    \raisebox{-\height}{\includegraphics[width=1\textwidth]
    {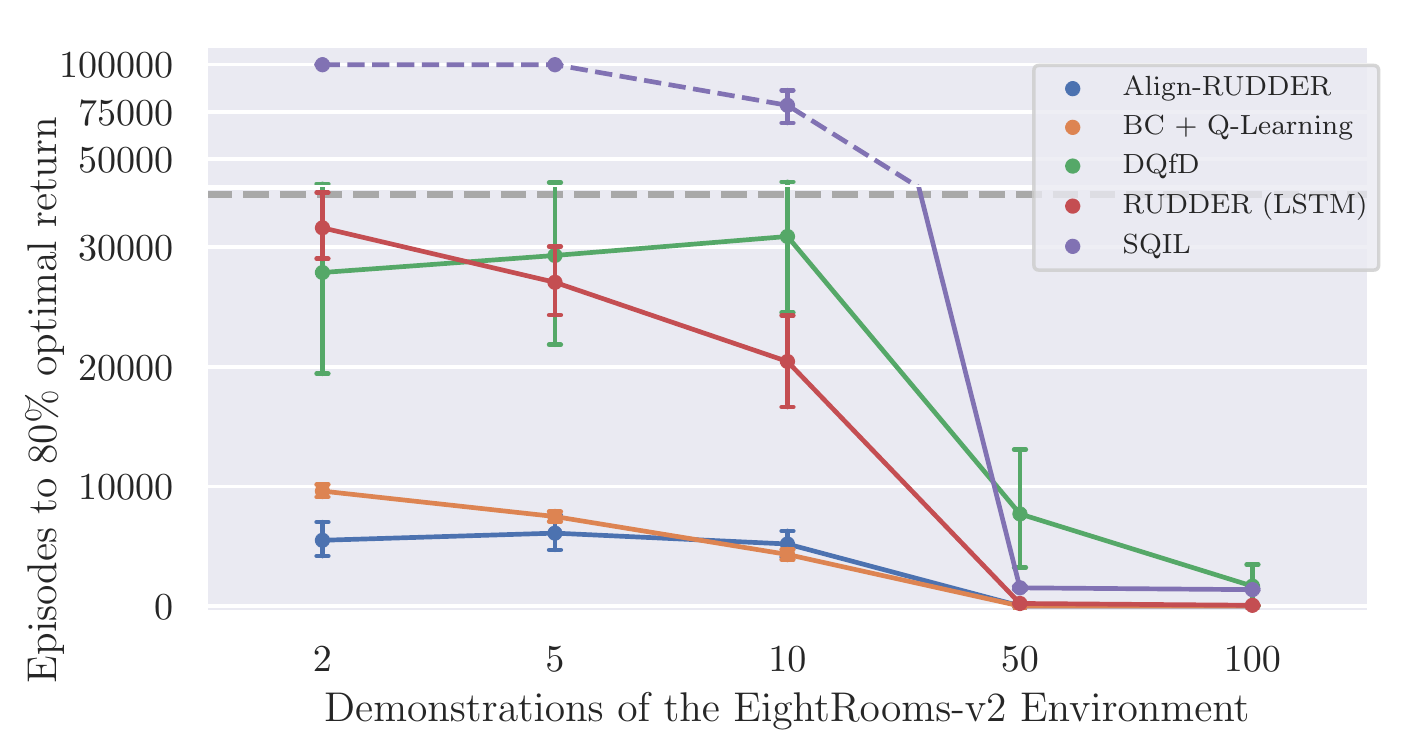}}
\end{minipage}\hfill
\caption[]{Comparison of Align-RUDDER and other methods on Task (II) (EightRooms) with increasing levels of stochasticity (from top left to bottom: 5\%, 10\%). Results are the average over 50 trials.} 

\label{fig:eightroom_stoc}
\end{figure}

\begin{table}[ht]
    \centering
\begin{tabular}{lrrrrr}
\toprule
{} &     2   &     5   &     10  &     50  &     100 \\
\midrule
Align-RUDDER vs. BC+$Q$-Learn.   & 4.5e-20 & 1.3e-34 & 4.9e-25 & 3.7e-01 & 6.1e-01 \\
Align-RUDDER vs. SQIL          & 1.8e-37 & 2.8e-39 & 1.9e-36 & 1.7e-35 & 1.9e-37 \\
Align-RUDDER vs. DQfD          & 1.2e-08 & 8.9e-20 & 5.6e-31 & 1.0e+00 & 1.0e+00 \\
Align-RUDDER vs. RUDDER (LSTM) & 1.2e-29 & 1.3e-34 & 3.9e-31 & 8.7e-22 & 1.0e-18 \\
\bottomrule
\end{tabular}
    \caption{$p$-values for Artificial Task (II), EightRooms, obtained by performing a Mann-Whitney-U test.}
    \label{tab:p_values_eightrooms}
\end{table}

\subsubsection{Changing number of Clusters}
\label{subsec:change_num_clust}

We use Affinity Propagation for clustering, 
and do not set the number of clusters. 
Although, we set the max number of clusters allowed. 
If Affinity propagation results in more clusters,
they are combined and reduced to the maximum
clusters allowed. 
This is necessary due to the limitations of the 
underlying alignment library we are using.
For the experiments on FourRooms and EightRooms 
in the main paper,
we fix the max number of clusters to 15.

We conduct an experimental study on how changing the max number of clusters changes the performance of Align-RUDDER on the FourRooms environment. The results are in table \ref{tab:change_cluster}.

\begin{table}[]
\centering
\begin{tabular}{cccccc}
\toprule
Max. \# of Clusters & 2      & 5    & 10   & 50 & 100 \\ 
\midrule
2                   & 5782.1 & 2378 & 7624 & 18 & 14  \\
5                   & 4462.1 & 1277 & 7892 & 19 & 14  \\
8                   & 985    & 1417 & 1372 & 19 & 14  \\
10                  & 985    & 1417 & 1372 & 19 & 14  \\
12                  & 985    & 1417 & 1372 & 19 & 14  \\
15                  & 985    & 1417 & 1372 & 19 & 14  \\ 
\bottomrule
\end{tabular}
    \caption{Results for different numbers of clusters for the FourRooms artificial task are shown in the table. It shows the number episodes required to reach 80\% optimal return, using a demonstrations given in column headers. These results are averaged over 10 seeds. For the results we report in the paper we set the maximum number of clusters to 15, the results show that even when reducing the number of clusters to 8, results stay similar. We only see worse performance for when only allowing 2 or 5 clusters.
    }
    \label{tab:change_cluster}
\end{table}

\subsubsection{Key-Event Detection}

{\bf 1D key-chest environment.}
\label{sec:app_1d_env}
We use a \textit{1D key-chest environment}
to show the effectiveness of sequence alignment in a low data regime compared to an LSTM model.
The agent has to collect the key and then open the chest, to get a positive reward at the 
last timestep. 
See Appendix Fig.~\ref{fig:1d_key_chest_env} for a schematic representation of the environment. 
As the key-events (important state-action pairs) in this environment are known 
we can compute the \textit{key-event detection rate} 
of a reward redistribution model.
A key event is detected if the redistributed reward of an 
important state-action pair is larger than the average redistributed reward in the sequence. 
We train the reward redistribution models with $2$, $5$ and $10$ training episodes and test on 1000 test episodes, averaged over 10 trials. 
Align-RUDDER significantly outperforms LSTM (RUDDER) for detecting these key events in all cases, with an average key-event detection rate of $0.96$ for sequence alignment
vs. $0.46$ for the LSTM models
over all dataset sizes.
See Appendix Fig.~\ref{fig:key_event} for the detailed results.

\begin{figure}
  \centering
   \includegraphics[width=0.5\textwidth, height=0.15\textwidth]{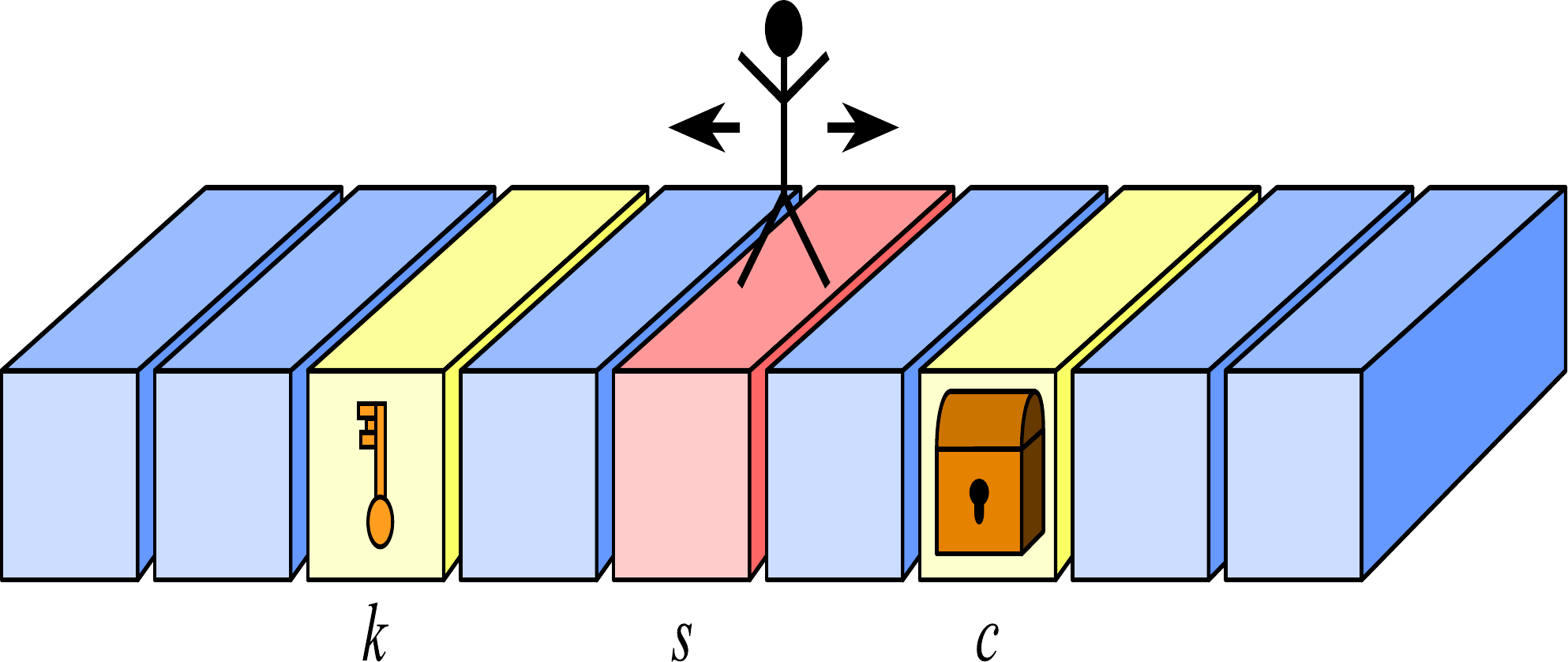}
  \caption[]{
 The agent has to collect the key and then open the chest, to get a positive reward at the 
last timestep. The environment episode runs for a fixed 32 timesteps. 
  \label{fig:1d_key_chest_env}}
\end{figure}

\begin{figure}
  \centering
   \includegraphics[width=0.6\textwidth]{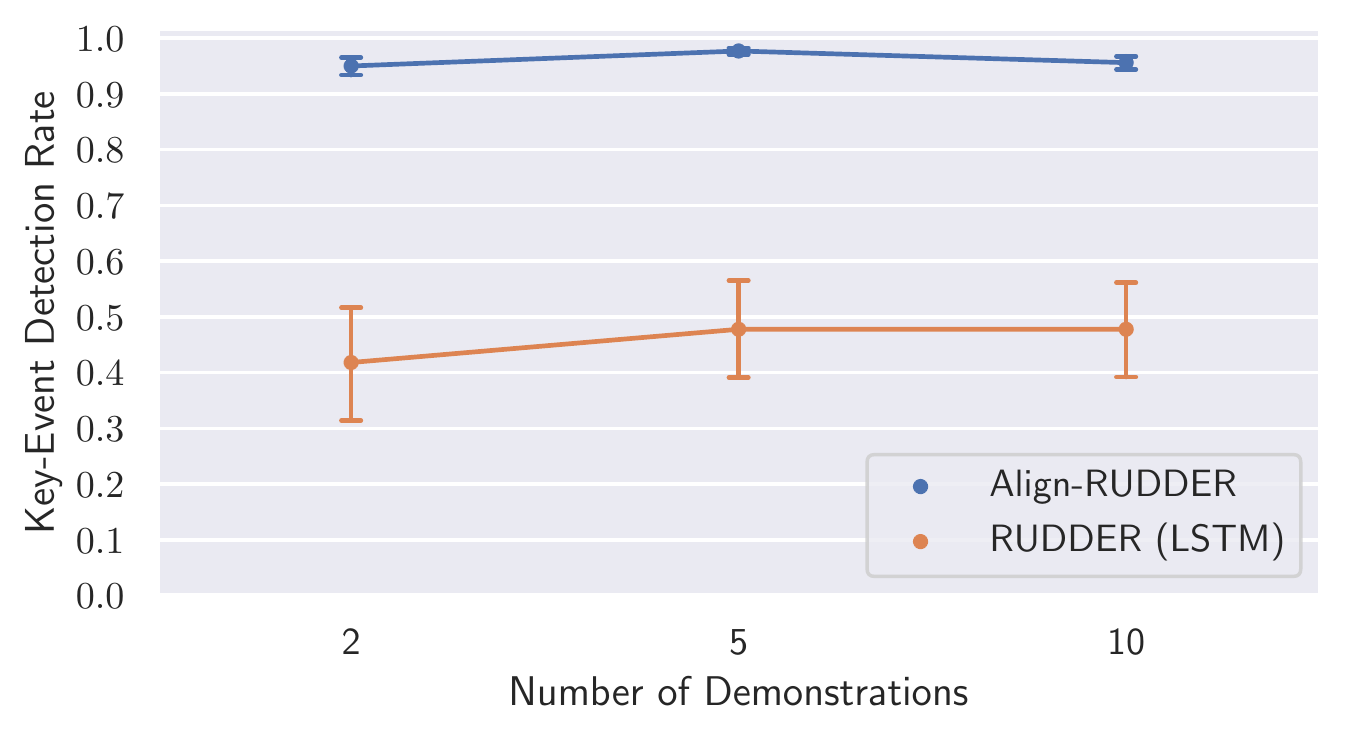}
  \caption[]{
 We use Align-RUDDER and RUDDER to detect key events for Key-Chest environment, where we already know which state-action pairs are important for return. A key event is detected if the redistributed reward at an important state-action is larger than the average redistributed reward in the sequence. We test on 1000 test episodes and average over 10 trials. Align-RUDDER outperforms LSTM (RUDDER) for detecting these key events. 
  \label{fig:key_event}}
\end{figure}

\newpage
\subsection{Minecraft Experiments}
\label{sec:minecraft}
In this section we explain in detail the implementation of Align-RUDDER for solving the task \textit{ObtainDiamond}.

\subsubsection{Minecraft}
\label{sec:minecraft_appendix}

We show that our approach can be applied to complex tasks by evaluating 
it on the MineRL Minecraft dataset \citep{Guss:19}. This dataset provides 
a large collection of demonstrations from human players solving six different
tasks in the sandbox game Minecraft. 
In addition to the human demonstrations 
the MineRL dataset also provides an 
OpenAI-Gym wrapper for Minecraft.
The dataset includes demonstrations for the following tasks: 
\begin{itemize}
    \item navigating to target location following a compass, 
    \item collecting wood by chopping trees, 
    \item obtaining an item by collecting resources and crafting, and 
    \item free play "survival" where the player is free to choose his own goal.
\end{itemize}

The demonstrations include the video 
showing the players' view (without user interface),
the players' inventory at every time step 
and the actions performed by the player.
We focus on the third task of obtaining a target item, namely a diamond.
This task is very challenging as it is necessary to obtain several different
resources and tools and has been the focus of a challenge
\citep{Guss:19comp} at NeurIPS'19.
By the end of this challenge no entry was able to obtain the diamond.

We show that our method is well suited for solving
the task of obtaining the diamond, which can be decomposed into sub-tasks by reward redistribution after aligning successful demonstrations.

\subsubsection{Related Work and Steps Towards a General Agent}

In the following, we review two approaches \citet{Skrynnik:19, Scheller:20} 
where more details are available and compare them with our approach.

\citet{Skrynnik:19} address the problem with a TD based hierarchical Deep $Q$-Network (DQN)
and by utilizing the hierarchical structure of expert trajectories
by extracting sequences of meta-actions and sub-goals.
This approach allowed them to achieve the \emph{1st} place
in the official NeurIPS'19 MineRL challenge~\citep{Skrynnik:19}.
In terms of pre-processing, our approaches have in common
that both rely on frame skipping and action space discretization.
However, they reduce the action space
to ten distinct joint environment actions (e.g. \emph{move camera \& attack})
and treat inventory actions separately
by executing a sequence of semantic actions.
We aim at taking a next step towards a more general agent by introducing an action space
preserving the agent's full freedom of action in the environment
(more details are provided below).
This allows us to
avoid the distinction between item (environment) and semantic (inventory) agents
and to train identically structured agents in the same fashion
regardless of facing a mining, crafting, placing or smelting sub-task.
\citet{Skrynnik:19} extract a sub-task chain by separately examining each expert trajectory
and by considering the time of appearance of items in the inventory in chronological order.
For agent training their approach follows a heuristic
where they distinguish between collecting the item \emph{log} and all remaining items.
The \emph{log}-agent is trained by starting with the \emph{TreeChop} expert trajectories
and then gradually injecting trajectories collected from 
interactions with the environment into the DQN's replay buffer.
For the remaining items they rely on the expert data of \emph{ObtainIronPickaxeDense}
and imitation learning.
Given our proposed sequence alignment and reward redistribution methodology
we are able to avoid this shift in training paradigm
and to leverage all available training data
(\emph{ObtainDiamond}, \emph{ObtainIronPickaxe} and \emph{TreeChop})
at the same time.
In short, we collect all expert trajectories in one pool,
perform sequence alignment yielding a common diamond consensus along
with the corresponding reward redistribution and the respective sub-task sequences.
Given this restructuring of the problem
into local sub-problems with redistributed reward
all sub-task agents are then trained in the same fashion
(e.g. imitation learning followed by RL-based fine-tuning).
Reward redistribution guarantees that the optimal policies are preserved \citep{Arjona:19}.

\citet{Scheller:20} achieved the \emph{3rd} place on the official leader board
following a different line of research and
addressed the problem with a single \emph{end-to-end} off-policy IMPALA~\citep{Espeholt:18} actor-critic agent, again utilizing experience replay to incorporate the expert data~\citep{Scheller:20}.
To prevent catastrophic forgetting of the behavior for later, less frequent sub-tasks
they introduce value clipping and apply CLEAR~\citep{Rolnick:19} to both, policy and value networks.
Treating the entire problem as a whole
is already the main distinguishable feature compared to our method.
To deal with long trajectories they rely on a trainable special form of frame skipping where the agent also has to predict how many frames to skip in each situation.
This helps to reduce the effective length (step count) of the respective expert trajectories.
In contrast to the approach of~\citep{Scheller:20} we rely on a constant frame skip
irrespective of the states and actions we are facing. 
Finally, there are also several common features including:
\begin{enumerate}
    \item a supervised BC pre-training stage prior to RL fine-tuning,
    \item separate networks for policy and value function,
    \item independent action heads on top of a sub-sequence LSTM,
    \item presenting the inventory state in a certain form to the agent and
    \item applying a value-function-burn-in prior to RL fine-tuning.
\end{enumerate}

\newpage
\subsubsection{The Five Steps of Align-RUDDER Demonstrated on Minecraft}
\label{sec:five_steps_minecraft}

In this subsection, we give an example of the five steps of Align-RUDDER using demonstrations from the MineRL \textit{ObtainDiamond} task. Figures~\ref{fig:minerl_step_1} to \ref{fig:minerl_step_5} illustrate these steps.

\begin{figure*}[h]
  \centering
   \includegraphics[page=1,width=0.9\textwidth]{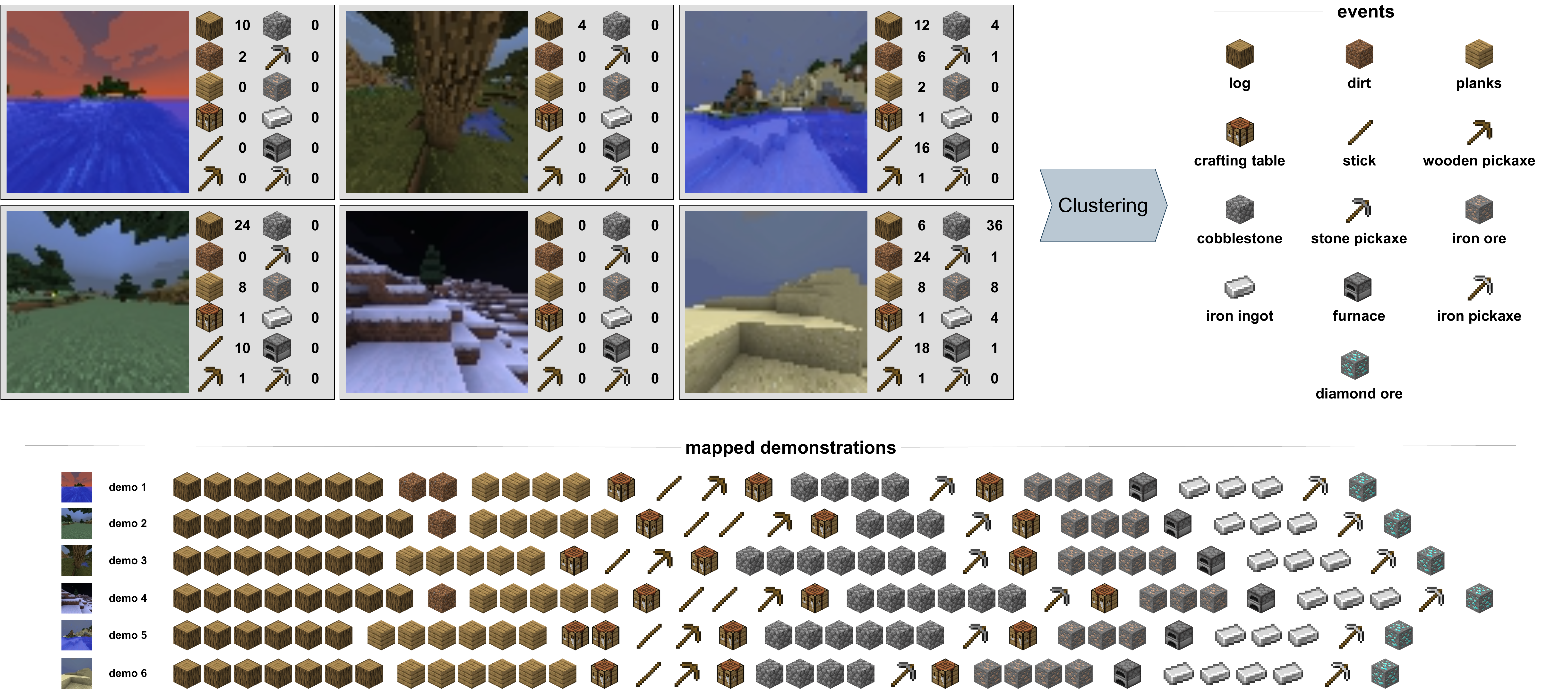}
  \caption{Step \textbf{(I)}: Define events and map demonstrations into sequences of events. First, we extract the sequence of states from human demonstrations, transform images into 
feature vectors using a pre-trained network and transform them into a sequence of consecutive state deltas (concatenating image feature vectors and inventory states). We cluster the resulting state deltas and remove clusters with a large number of members and merge smaller clusters. In the case of demonstrations for the \textit{ObtainDiamond} task in Minecraft the resulting clusters correspond to obtaining specific resources and items required to solve the task.
Then we map the demonstrations to sequences of events. }
  \label{fig:minerl_step_1}
\end{figure*}

\begin{figure*}[h]
  \centering
   \includegraphics[page=1,width=0.7\textwidth]{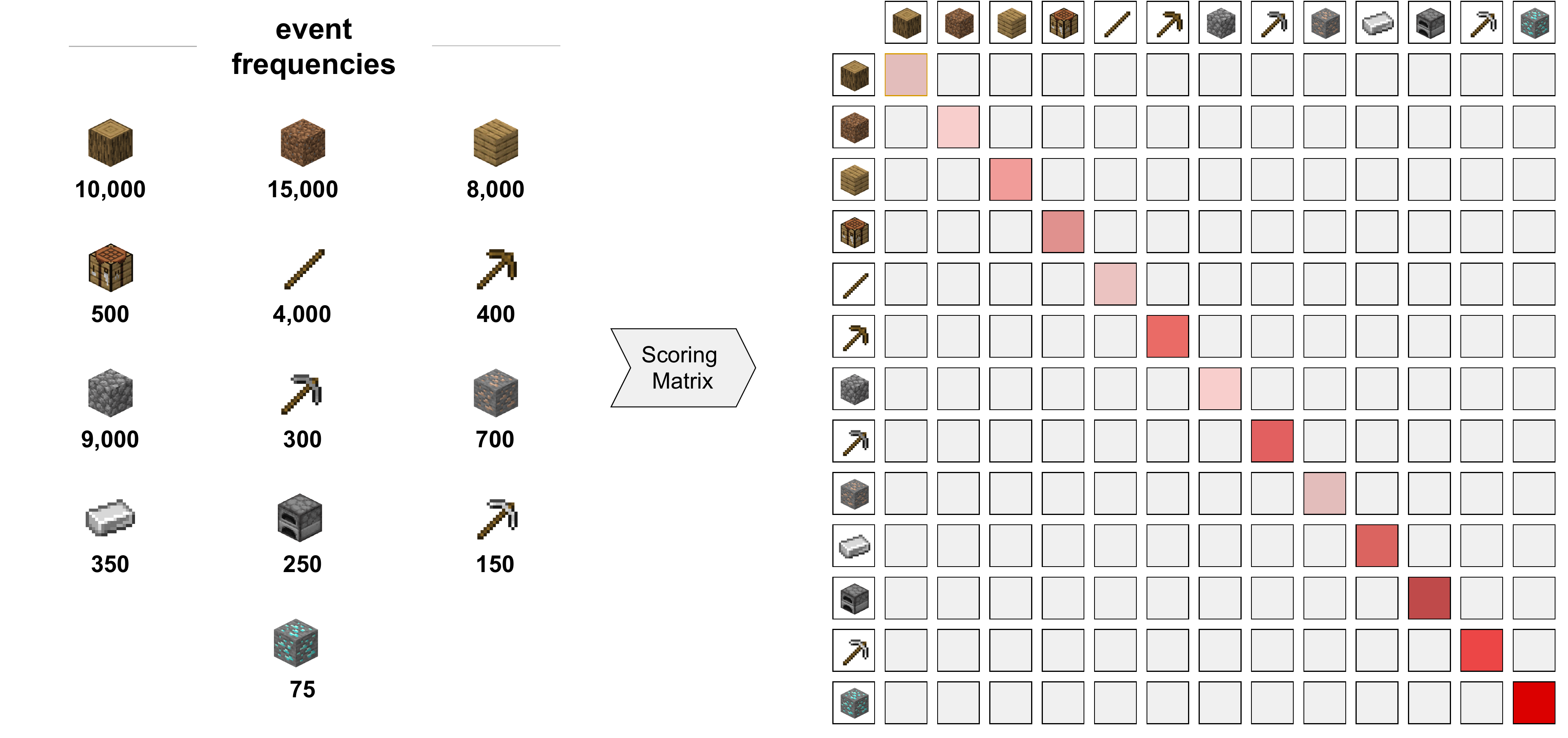}
  \caption{Step \textbf{(II)}: Construct a scoring matrix using event probabilities from demonstrations for diagonal elements and setting off-diagonal to a constant value. The scores in the diagonal position are proportional to the inverse of the event frequencies. Thus, aligning rare events has higher score. Darker colors signify higher score values.}
  \label{fig:minerl_step_2}
\end{figure*}

\clearpage

\begin{figure*}
  \centering
   \includegraphics[page=1,width=1\textwidth]{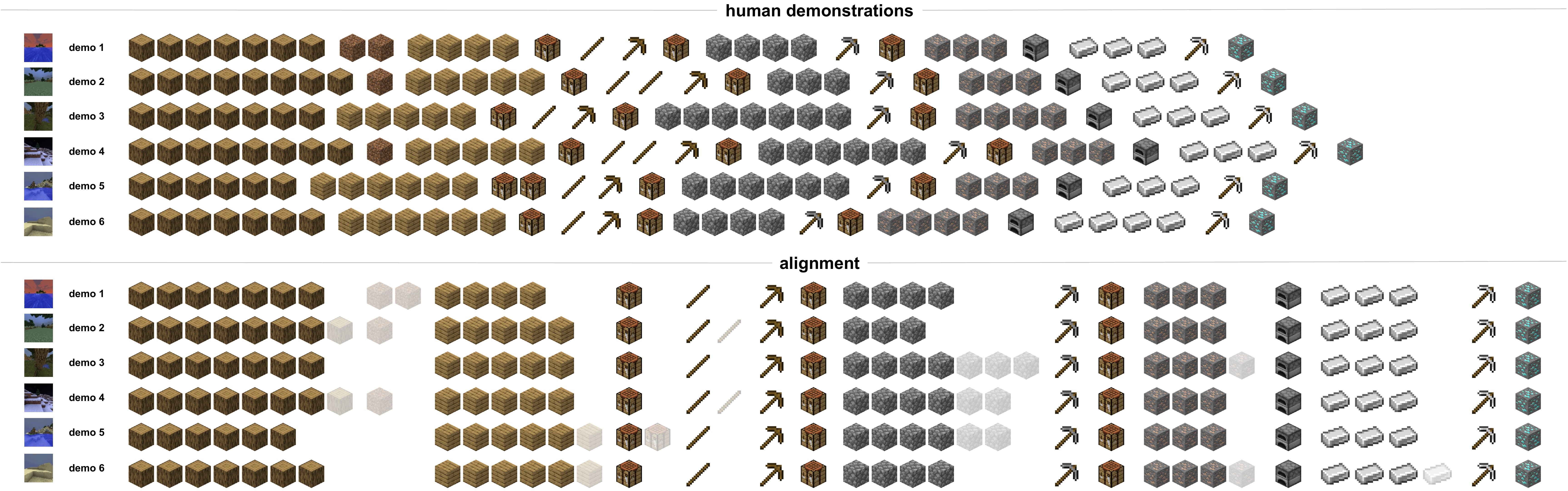}
  \caption{Step \textbf{(III)} Perform multiple sequence alignment (MSA) of the demonstrations. The MSA algorithm maximizes the pairwise sum of scores of all alignments. The score of an alignment at each position is given by the scoring matrix. As the off-diagonal entries are negative, the algorithm will always try to align an event to itself, while giving preference to events which give higher scores.}
  \label{fig:minerl_step_3}
\end{figure*}

\begin{figure*}
  \centering
   \includegraphics[page=1,width=1\textwidth]{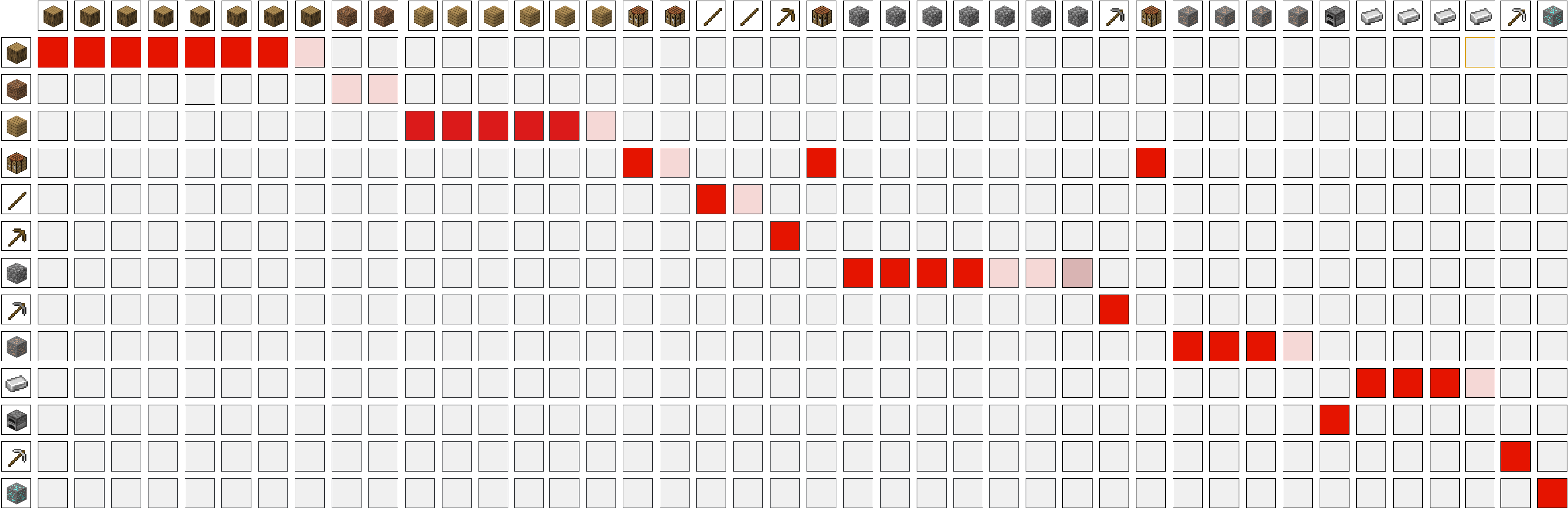}
  \caption{Step \textbf{(IV)} Compute a position-specific scoring matrix (PSSM). This matrix can be computed using the MSA (Step (III)) and the scoring matrix (Step (II)). Every column entry is for a position from the MSA. The score at a position (column) and for an event (row) depends on the frequency of that event at that position in the MSA. For example, the event in the last position is present in all the sequences, and thus gets a high score at the last position. But it is absent in the remaining position, and thus gets a score of zero elsewhere.}
  \label{fig:minerl_step_4}
\end{figure*}

\begin{figure*}
  \centering
   \includegraphics[page=1,width=1\textwidth]{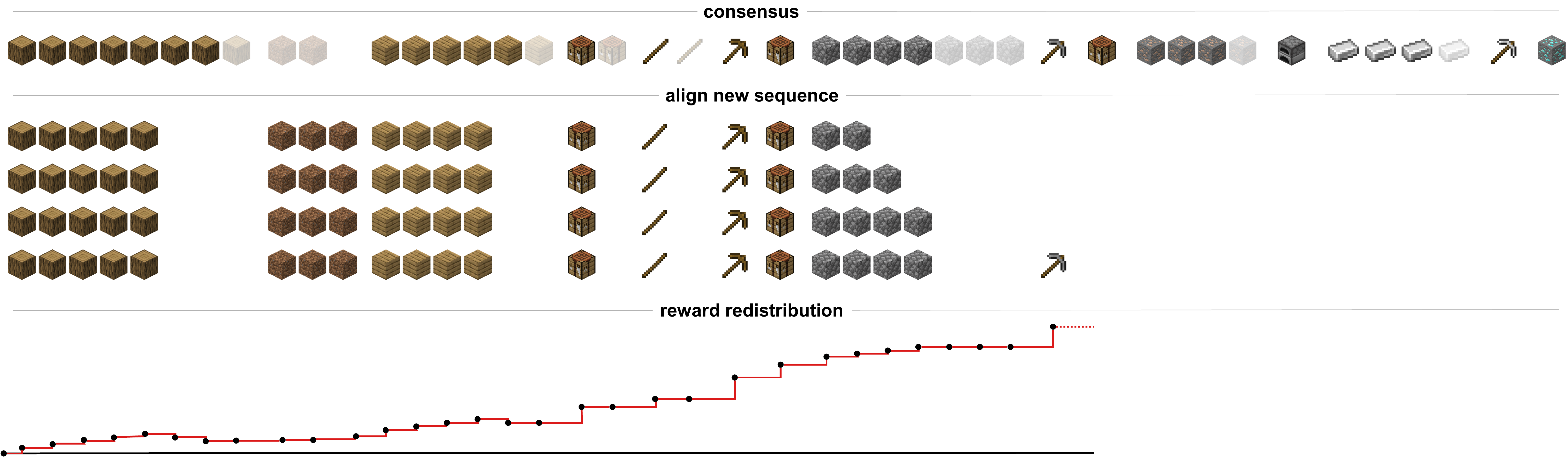}
  \caption{Step \textbf{(V)} A new sequence is aligned step by step to the profile model using the PSSM, resulting in an alignment score for each sub-sequence. The redistributed reward is then proportional to the difference of scores of subsequent alignments.}
  \label{fig:minerl_step_5}
\end{figure*}

\clearpage
\subsubsection{Implementation of our Algorithm for Minecraft}
\label{sec:minecraft_implementation}

The architecture of the training pipeline incorporates three learning stages: 
\begin{itemize}
    \item sequence alignment and reward redistribution
    \item learning from demonstrations via behavioral cloning (pre-training) and
    \item model fine-tuning with reinforcement learning.
\end{itemize}
Figure~\ref{fig:minerl_approach_overview} shows a conceptual overview of all components.

\begin{figure*}[t]
  \centering
    \includegraphics[width=0.98\textwidth]{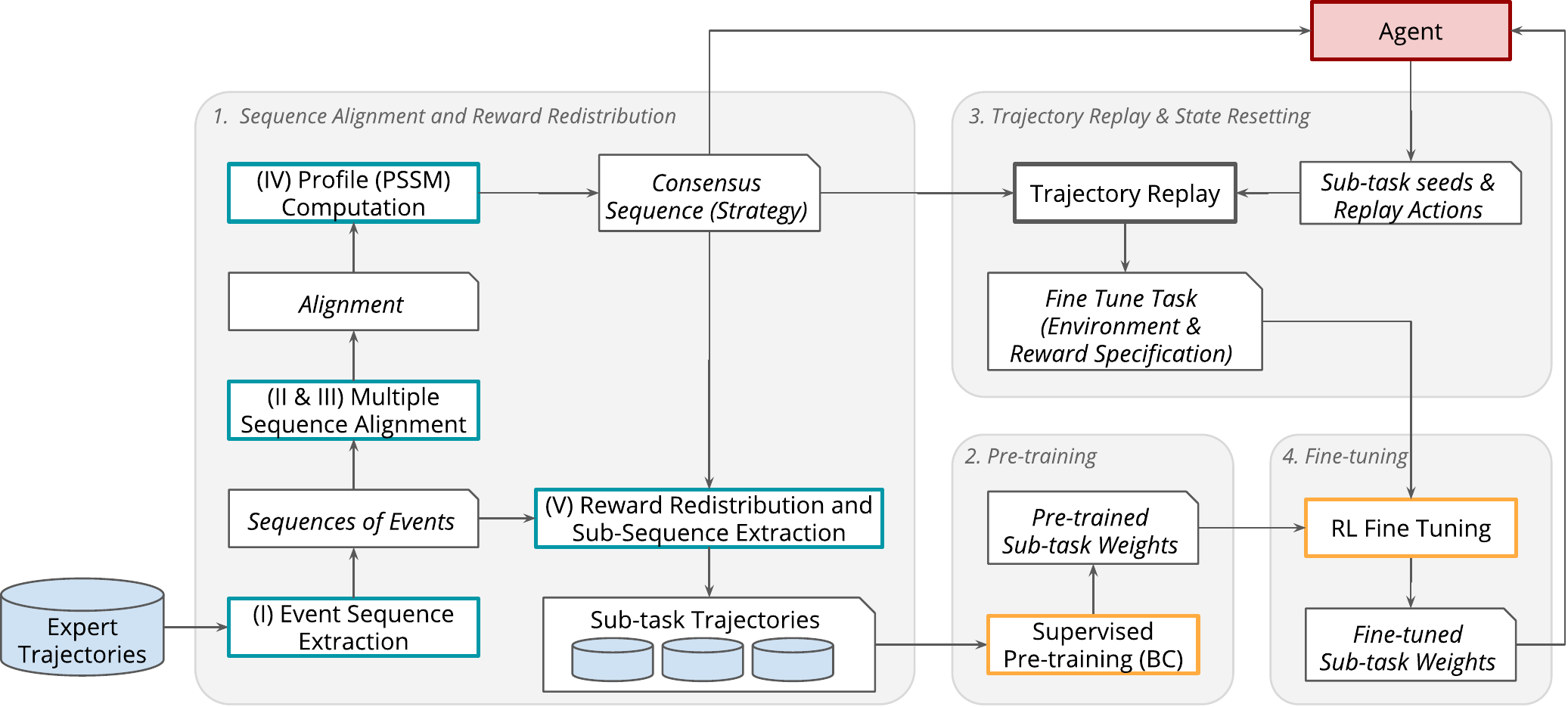}
  \caption[Conceptual overview of our MineRL agent]{Conceptual overview of our MineRL agent.}
  \label{fig:minerl_approach_overview}
\end{figure*}

\textbf{Sequence alignment and reward redistribution. } First, we extract the sequence of states from human demonstrations, transform images into 
feature vectors using a standard pre-trained network and transform them into a sequence of consecutive state deltas (concatenating image feature vectors and inventory states).
A pre-trained network can be model trained for image classification or 
an auto-encoder model trained on images. 
In our case, we used an auto-encoder model trained on 
the MineRL obtainDiamond dataset. 
We cluster the resulting state deltas and remove clusters with a large number of members and merged smaller clusters.
This results in 19 events and we map the demonstrations to sequences of events.
These events correspond to inventory changes.
For each human demonstration we get a sequence of events which we map to letters from the amino acid code, resulting in a sequence of letters.
In Fig.~\ref{fig:minerl_events} we show all events with their assigned letter encoding that we defined for the Minecraft environment. 

We then calculate a scoring matrix according to step (II) in Sec.~\ref{sec:sequence_alignment} in the main document. 
Then, we perform multiple sequence alignment to align sequences of events of 
the top $N$ demonstrations, where shorter demonstrations are ranked higher.
This results in a sequence of common events which we denote as the consensus.
In order to redistribute the reward, we use the PSSM model and assign the respective reward.
Reward redistribution allows the sub-goal definition i.e.
positions where the reward redistribution is larger than a threshold or
positions where the reward redistribution has a certain value.
In our implementation sub-goals are obtained
by applying a threshold to the reward redistribution.
The main agent is initialized by executing sub-agents according to the alignment. 
Figure~\ref{fig:minerl_dem} shows how sub-goals 
are identified using reward redistribution.

\textbf{Learning from demonstrations via behavioral cloning. } We extract demonstrations for each individual sub-task in the form of sub-sequences 
taken from all demonstrations.
For each sub-task we train an individual sub-agent via behavioral cloning.

\textbf{Model fine-tuning with reinforcement learning. } We fine-tune the agent in the environment using PPO~\citep{Schulman:17}.
During fine-tuning with PPO, an agent receives reward if it manages to reach its sub-goal.

To evaluate the performance of an agent for its current sub-goal, we average the return over multiple roll-outs.
This gives us a good estimate of the success rate and if trained models have improved during fine tuning or not.
In Fig.~\ref{fig:cons_freq}, we plot the overall success rate of all models evaluated sequentially from start to end.

In order to shorten the training time of our agent, we use trajectory replay and state resetting, similar to the idea proposed in~\citep{Ecoffet:19}, allowing us to train sub-task agents in parallel.
This is not necessary for the behavioral cloning stage, since here we can independently train agents according to the extracted sub-goals.
However, fine-tuning a sub-task agent with reinforcement learning requires agents for all previous sub-tasks. 
To fine-tune agents for all sub-tasks, we record successful experiences (states, actions, rewards) for earlier goals and use them to reset the environment where a subsequent agent can start its training.
In Fig.~\ref{fig:minerl_replay_until}, we illustrate a trajectory replay given by an exemplary consensus.

\subsubsection{Policy and Value Network Architecture}
Figure~\ref{fig:minerl_network_architecture} shows a conceptual overview of the policy and value networks used in our MineRL experiments.
\begin{figure*}
  \centering
    \includegraphics[width=0.98\textwidth]{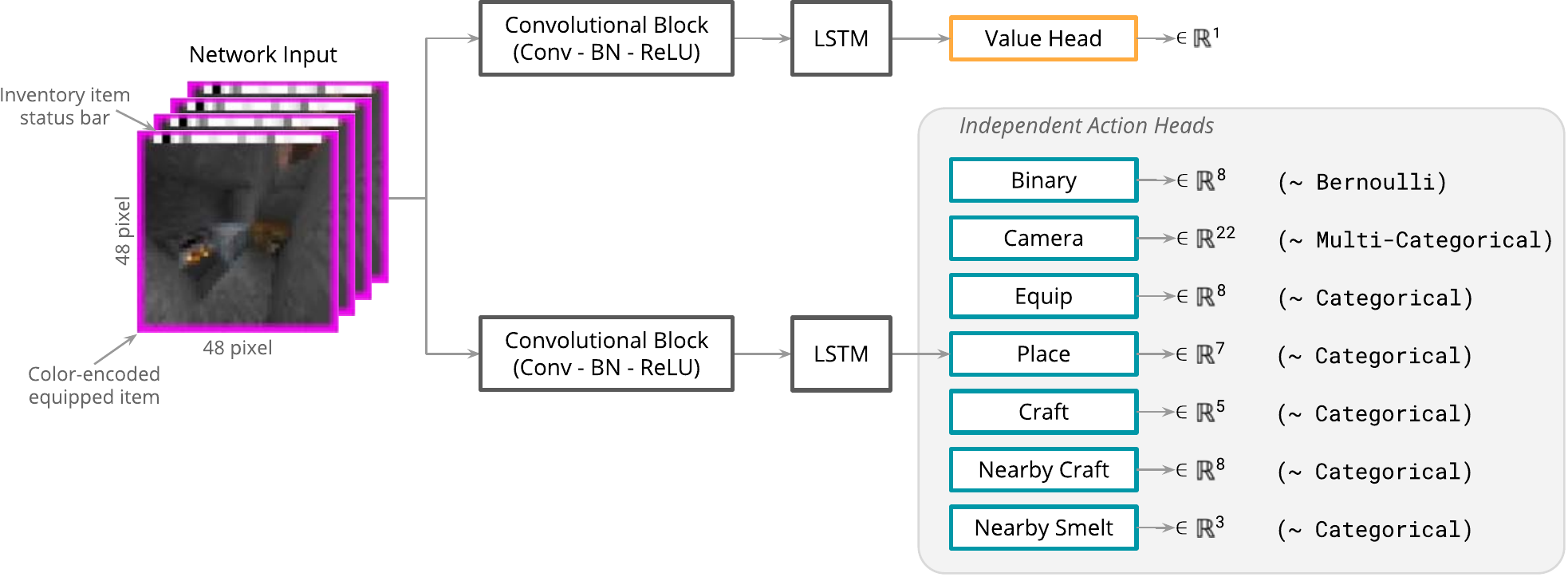}
  \caption[Conceptual architecture of Align-RUDDER MineRL policy and value networks]{Conceptual architecture of our MineRL policy and value networks.}
  \label{fig:minerl_network_architecture}
\end{figure*}
The networks are structured as two separate convolutional encoders with an LSTM layer before the respective output layer, without sharing any model parameters.

The input to the model is the sequence of the $32$ most recent frames, which are pre-processed in the following way:
first, we add the currently equipped item as a color-coded border around each RGB frame.
Next, the frames are augmented with an inventory status bar
representing all $18$ available inventory items
(each inventory item is drawn as an item-square consisting of $3 \times 3$ pixels to the frame).
Depending on the item count the respective square is drawn
with a linearly interpolated gray-scale ranging from
white (no item at all) to black (item count $> 95$).
The count of 95 is the $75$-quantile of the total amount of collected cobblestones and dirt derived from the inventory of all expert trajectories.
Intuitively, this count should be related to the current depth (level) where an agent currently is or at least has been throughout the episode.
In the last pre-processing step the frames are resized from $64 \times 64$ to $48 \times 48$ pixels
and divided by $255$ resulting in an input value range between zero and one.

The first network stage consists of four batch-normalized convolution layers with ReLU activation functions.
The layers are structured as follows:
Conv-Layer-1 (16 feature maps, kernel size 4, stride 2, zero padding 1),
Conv-Layer-2 (32 feature maps, kernel size 4, stride 2, zero padding 1),
Conv-Layer-3 (64 feature maps, kernel size 3, stride 2), and
Conv-Layer-4 (32 feature maps, kernel size 3, stride 2).
The flattened latent representation ($\in \mathbb{R}^{32 \times 288}$) of the convolution stage
is further processed with an LSTM layer with $256$ units.
Given this recurrent representation we only keep the last time step
(e.g. the prediction for the most recent frame).

The value head is a single linear layer without non-linearity
predicting the state-value for the most recent state.
For action prediction, two types of output heads are used depending on the underlying action distribution.
The binary action head represents the actions
\emph{attack}, \emph{back}, \emph{forward}, \emph{jump}, \emph{left}, \emph{right}, \emph{sneak} and \emph{sprint}
which can be executed concurrently and are therefore modeled based on a \emph{Bernoulli} distribution.
Since only one item can be equipped, placed, or crafted at a time
these actions are modeled with a \emph{categorical} distribution.
The equip head selects from \emph{none}, \emph{air}, \emph{wooden-axe}, \emph{wooden-pickaxe}, \emph{stone-axe}, \emph{stone-pickaxe}, \emph{iron-axe} and \emph{iron-pickaxe}.
The place head selects from \emph{none}, \emph{dirt}, \emph{stone}, \emph{cobblestone}, \emph{crafting-table}, \emph{furnace} and \emph{torch}.
The craft head selects from \emph{none}, \emph{torch}, \emph{stick}, \emph{planks} and \emph{crafting-table}.
Items which have to be crafted nearby are \emph{none}, \emph{wooden-axe}, \emph{wooden-pickaxe}, \emph{stone-axe}, \emph{stone-pickaxe}, \emph{iron-axe}, \emph{iron-pickaxe} and \emph{furnace}.
Finally, items which are smelted nearby are \emph{none}, \emph{iron-ingot} and \emph{coal}.
For predicting the camera angles (\emph{up/down} as well as \emph{left/right})
we introduce a custom action space outlined in Figure~\ref{fig:minerl_cameara_discretization}.
This space discretizes the possible camera angles
into 11 distinct bins for both orientations
leading to the 22 output neurons of the camera action head.
\begin{figure*}[t!]
  \centering
    \includegraphics[width=0.6\textwidth]{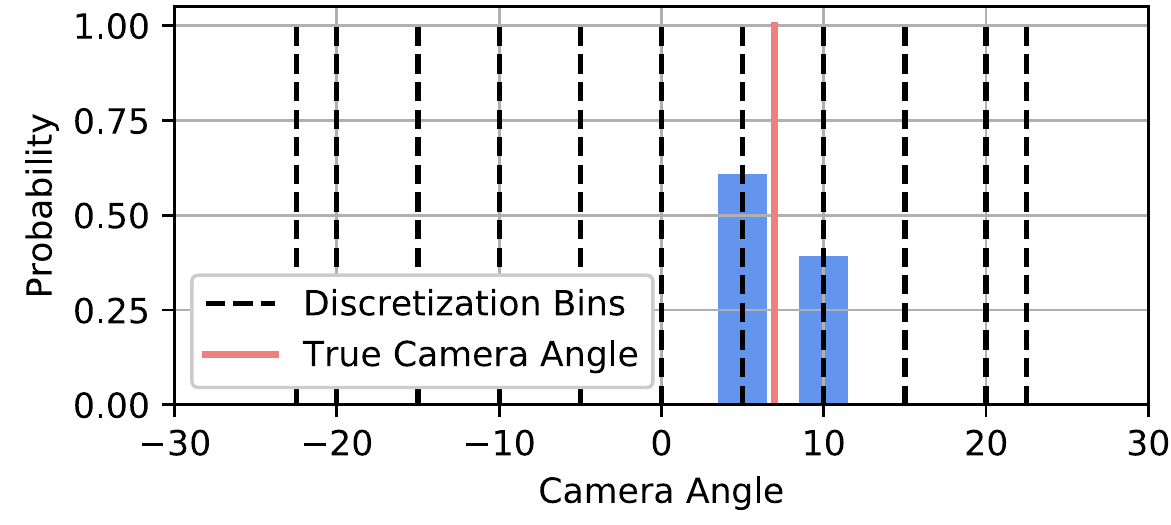}
  \caption[Discretization and interpolation of camera angles]{Discretization and interpolation of camera angles.}
  \label{fig:minerl_cameara_discretization}
\end{figure*}
Each bin holds the probability for sampling the corresponding angle as a camera action,
since in most of the cases the true camera angle lies in between two such bins.
We share the bin selection probability by linear interpolation
with respect to the distance of the neighboring bin centers to the true camera angle.
This way we are able to train the model with
standard categorical cross-entropy during behavioral cloning
and sample actions from this categorical distribution during exploration and agent deployment.

\subsubsection{Imitation Learning of Sub-Task Agents}
Given the sub-sequences of expert data separated by task
and the network architectures described above
we perform imitation learning via behavioral cloning (BC)
on the expert demonstrations.
All sub-task policy networks are trained with a cross-entropy loss on the respective action distributions
using stochastic gradient decent with a learning rate of $0.01$ and a momentum of $0.9$.
Mini-batches of size $256$ are sampled uniformly from the set of sub-task sequences.
As we have the MineRL simulator available during training
we are able to include all sub-sequences in the training set and
evaluate the performance of the model by deploying it in the environment every 10 training epochs.
Once training over 300 epochs is complete we select the model checkpoint based on the total count of collected target items over 12 evaluation trials per checkpoint.
Due to presence of only successful sequences,
the separate value network is not pre-trained with BC. 

\begin{figure*}[t!]
  \centering
   \includegraphics[page=1,width=0.8\textwidth]{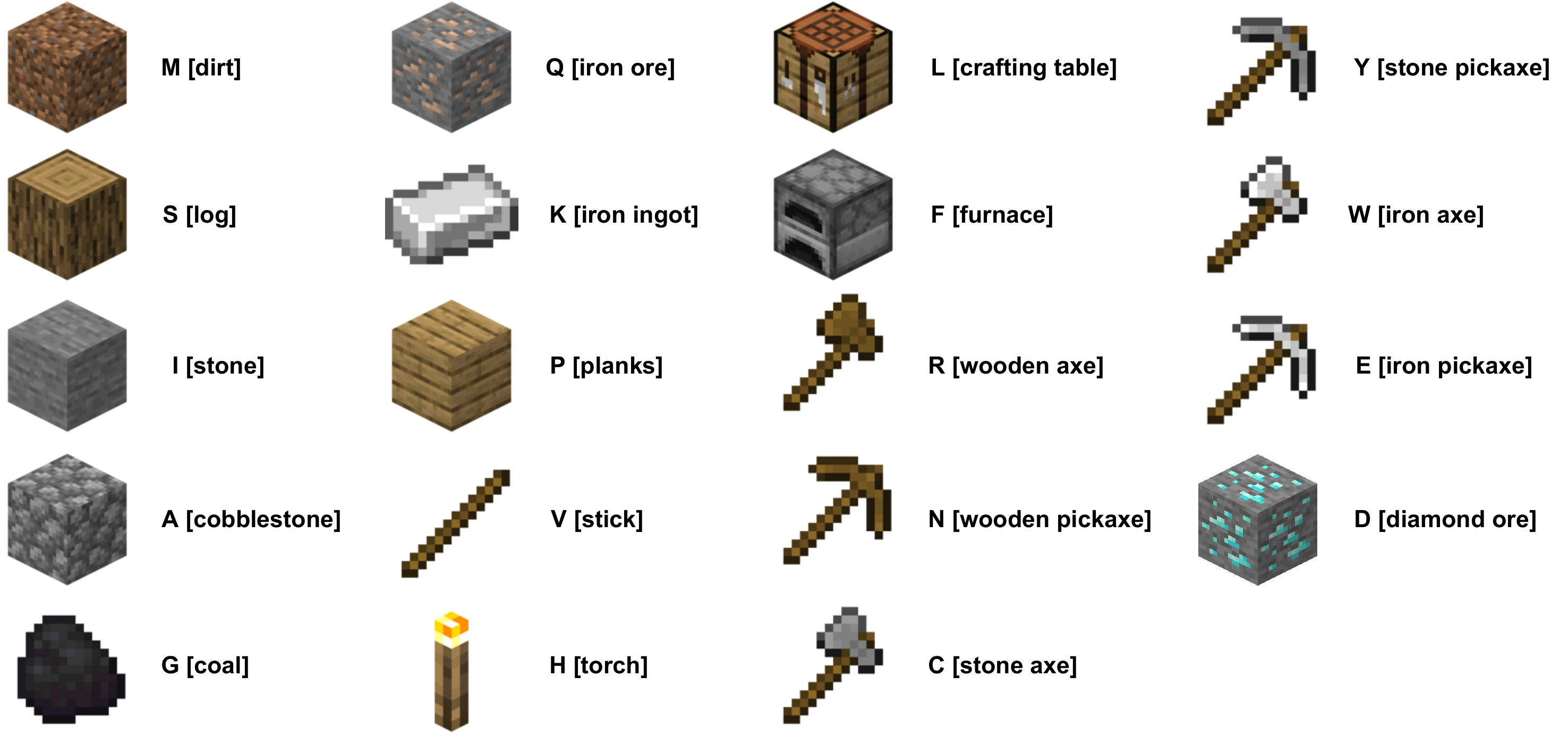}
  \caption[Mapping of clusters to letters]{Mapping of clusters to letters.}
  \label{fig:minerl_events}
\end{figure*}

\begin{figure*}[t!]
  \centering
   \includegraphics[page=1,width=.85\textwidth]{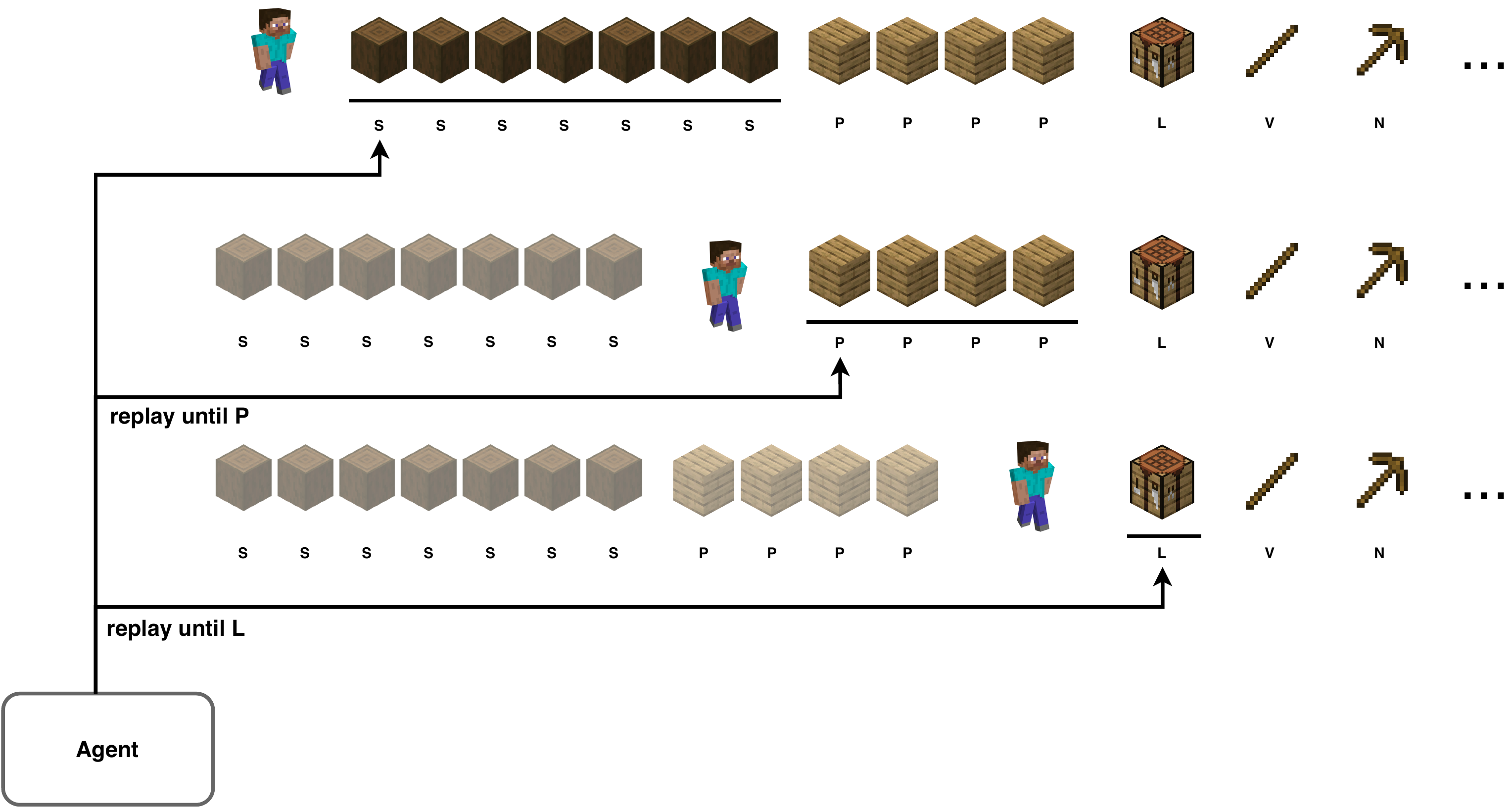}
  \caption[Trajectory replay given by an exemplary consensus]{Trajectory replay given by an exemplary consensus. The agent can execute training or evaluation processes of various sub-tasks by randomly sampling and replaying previously recorded trajectories on environment reset. Each letter defines a task. L (log), P (planks), V (stick), L (crafting table) and N (wooden pickaxe).}
  \label{fig:minerl_replay_until}
\end{figure*}

\subsubsection{Reinforcement Learning on Sub-Task Agents}
After the pretraining of the Sub-Task agents, we further fine tune the agents using PPO in the MineRL environment.
The reward is the redistributed reward given by Align-RUDDER. 
The value function is initialized in a burn-in stage
prior to policy improvement where the agent interacts with the environment for 50k timesteps and only updates the value function.
Finally, both policy and the value function are trained jointly for all sub-tasks.
All agents are trained between 50k timesteps and 500k timesteps. 
We evaluate each agent periodically during training and in the end select the best performing agent per sub-task. \ref{fig:minerl_log} - \ref{fig:minerl_iron} present evaluation curves of some sub-task agents during learning from demonstrations using behavioral cloning and learning online using PPO.

\begin{figure*}[t!]
  \centering
\begin{minipage}[t]{0.435\textwidth}
  \centering
    \raisebox{-\height}{\includegraphics[width=1\textwidth]
    {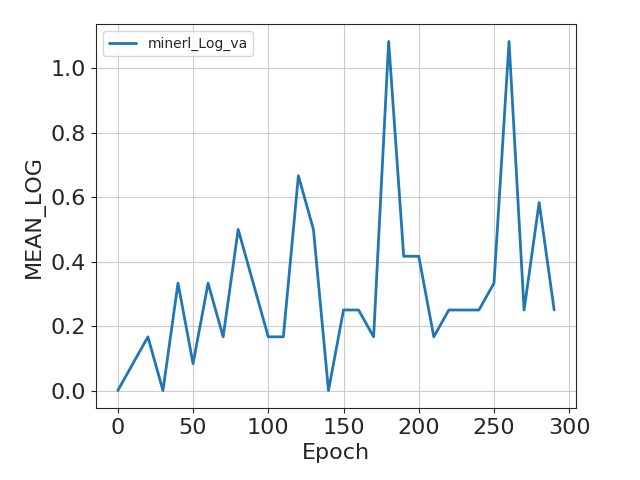}}
\end{minipage}\hfill
\begin{minipage}[t]{0.50\textwidth}
  \centering
    \raisebox{-\height}{\includegraphics[width=1\textwidth]
    {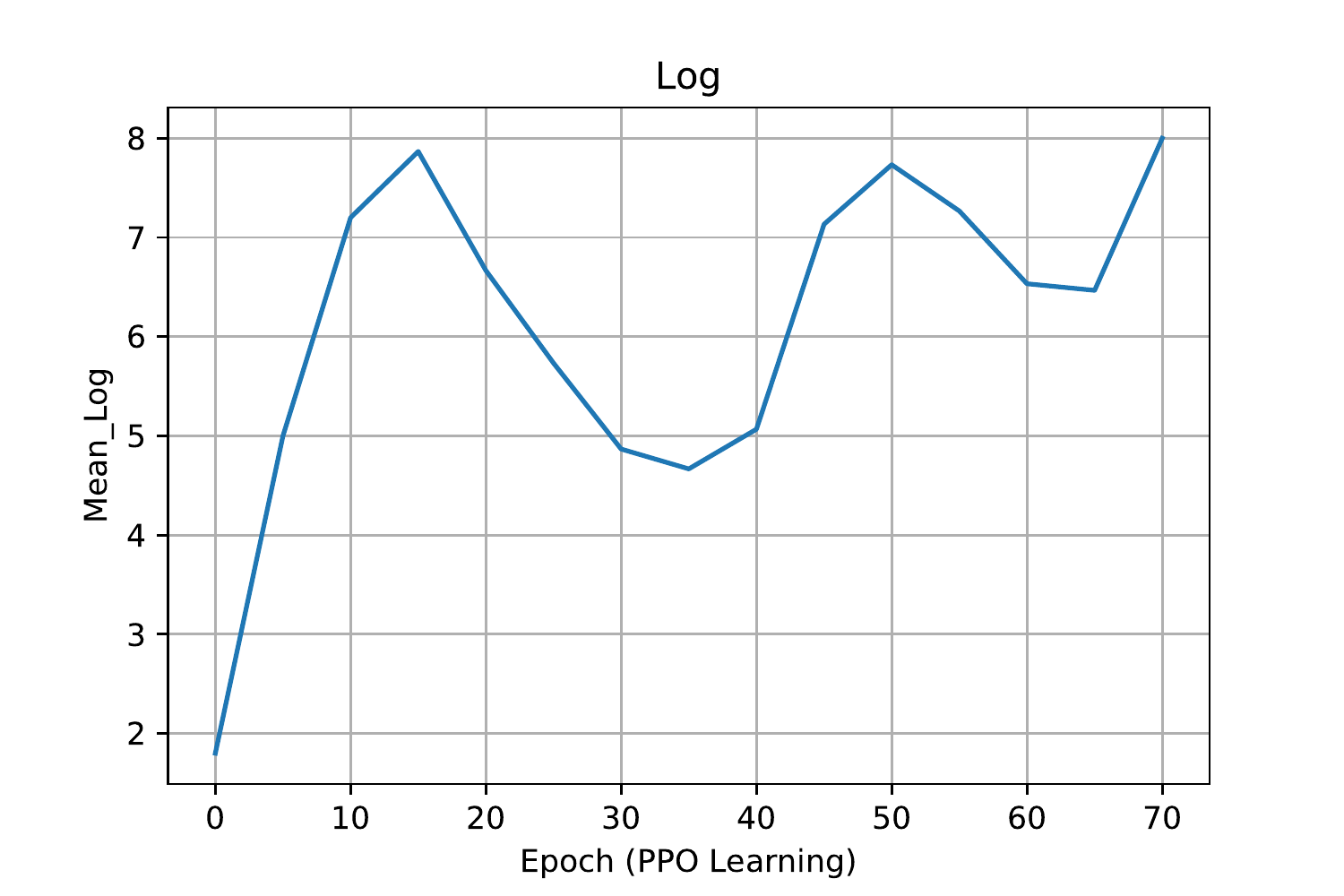}}
\end{minipage}\hfill 
  \caption[]{Average number of logs collected during training: left: Behavioral Cloning, Right: PPO Training}
    \label{fig:minerl_log}
\end{figure*}

\begin{figure*}[t!]
  \centering
\begin{minipage}[t]{0.45\textwidth}
  \centering
    \raisebox{-\height}{\includegraphics[width=1\textwidth]
    {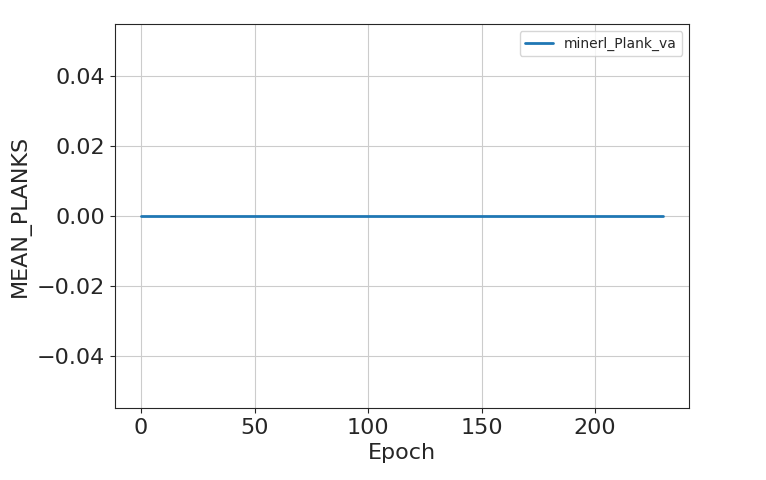}}
\end{minipage}\hfill
\begin{minipage}[t]{0.50\textwidth}
  \centering
    \raisebox{-\height}{\includegraphics[width=1\textwidth]
    {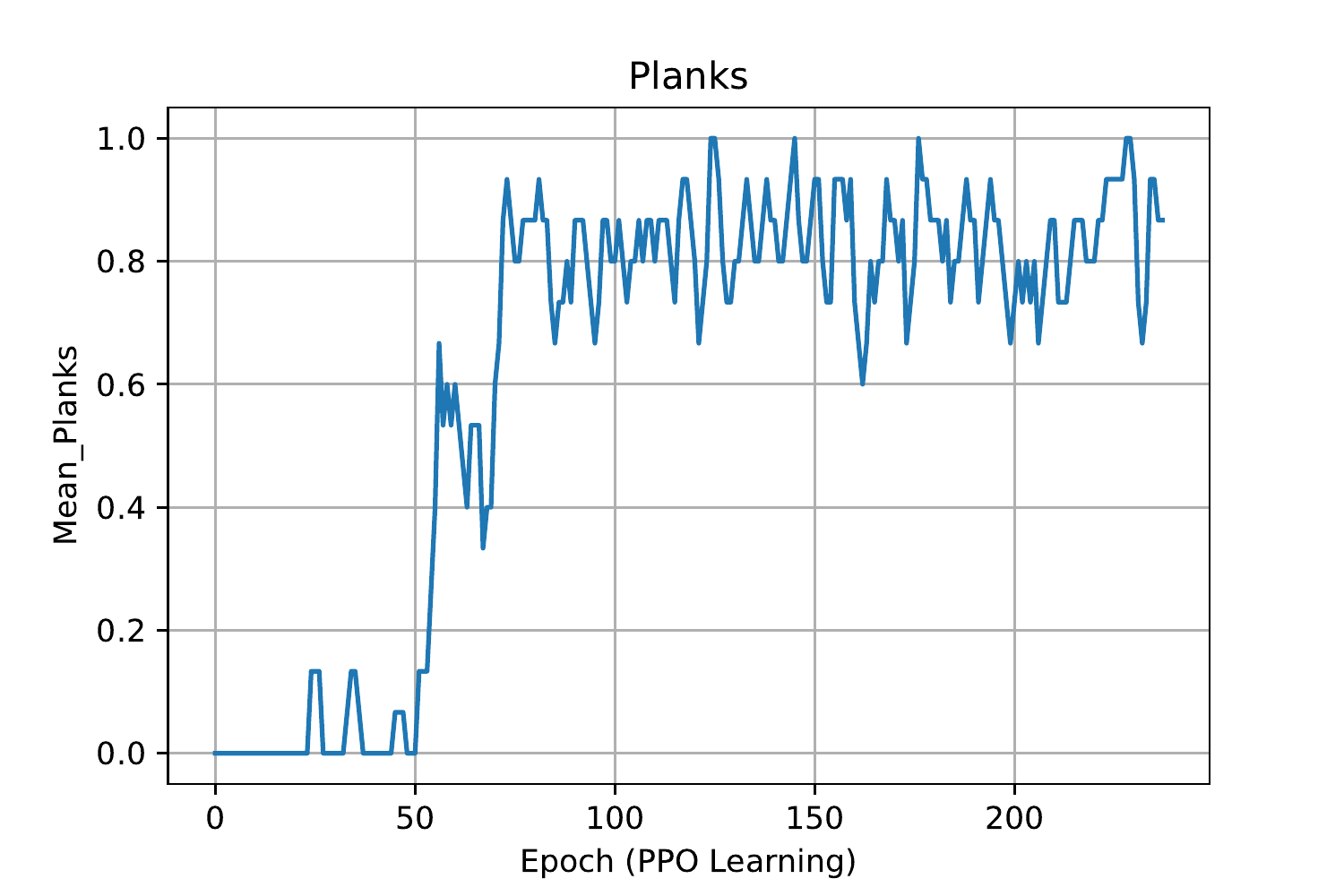}}
\end{minipage}\hfill 
  \caption[]{Average number of planks crafted during training: left: Behavioral Cloning, Right: PPO Training}
  \label{fig:minerl_plank}
\end{figure*}

\begin{figure*}[t!]
  \centering
\begin{minipage}[t]{0.45\textwidth}
  \centering
    \raisebox{-\height}{\includegraphics[width=1\textwidth]
    {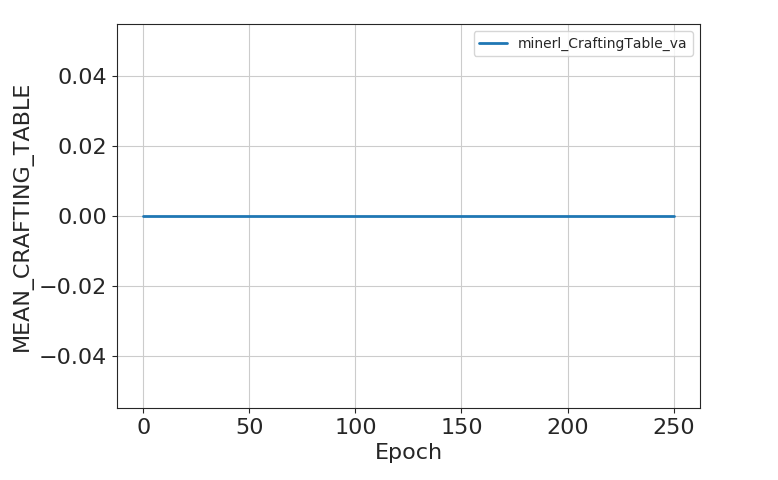}}
\end{minipage}\hfill
\begin{minipage}[t]{0.50\textwidth}
  \centering
    \raisebox{-\height}{\includegraphics[width=1\textwidth]
    {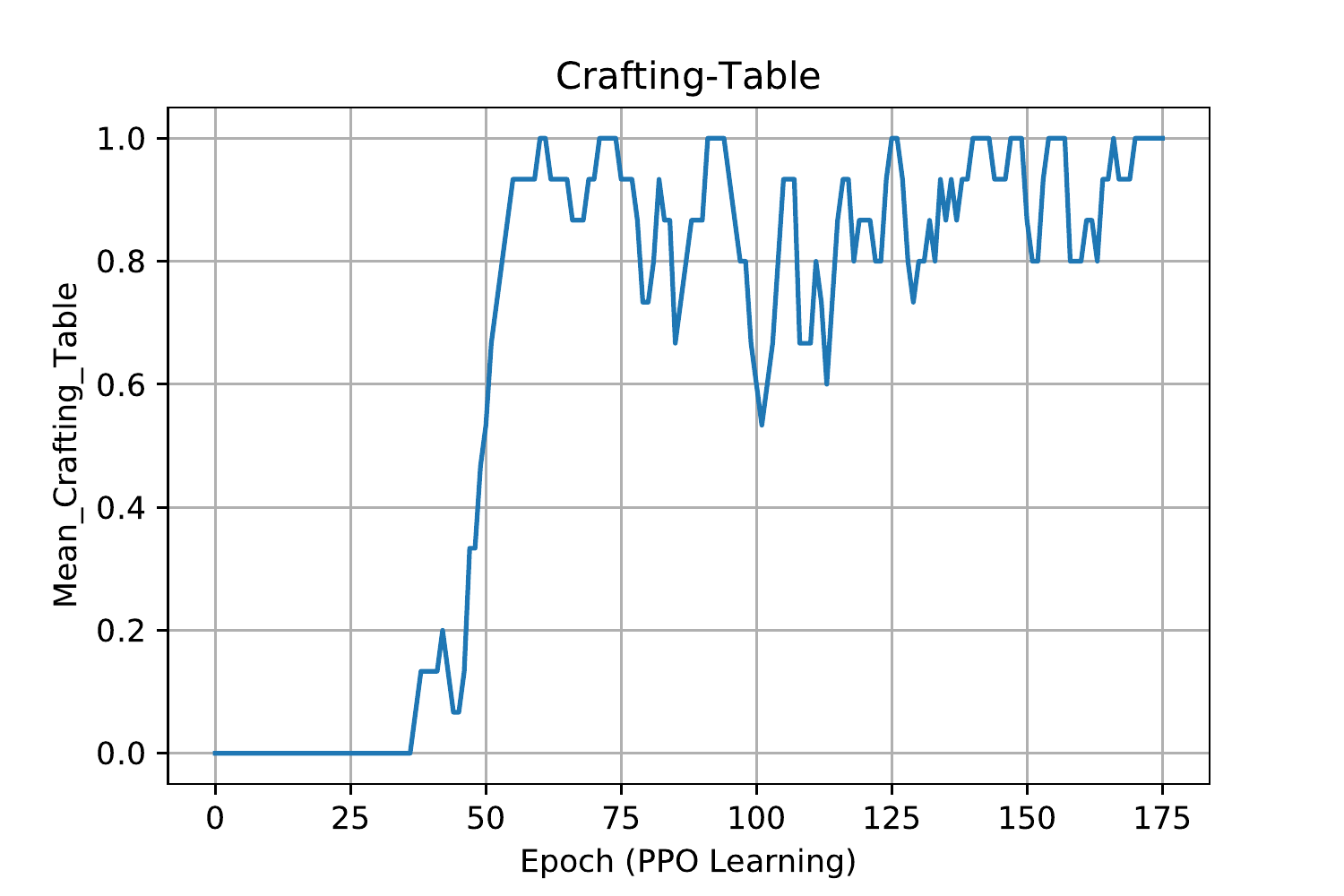}}
\end{minipage}\hfill 
  \caption[]{Average number of table crafted during training: left: Behavioral Cloning, Right: PPO Training}
  \label{fig:minerl_table}
\end{figure*}

\begin{figure*}[t!]
  \centering
\begin{minipage}[t]{0.45\textwidth}
  \centering
    \raisebox{-\height}{\includegraphics[width=1\textwidth]
    {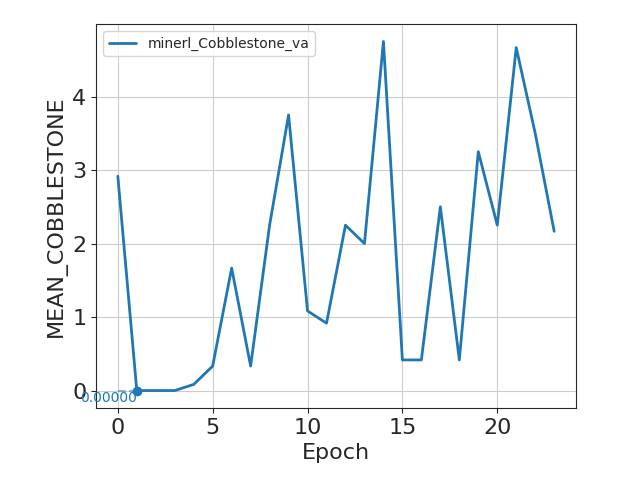}}
\end{minipage}\hfill
\begin{minipage}[t]{0.50\textwidth}
  \centering
    \raisebox{-\height}{\includegraphics[width=1\textwidth]
    {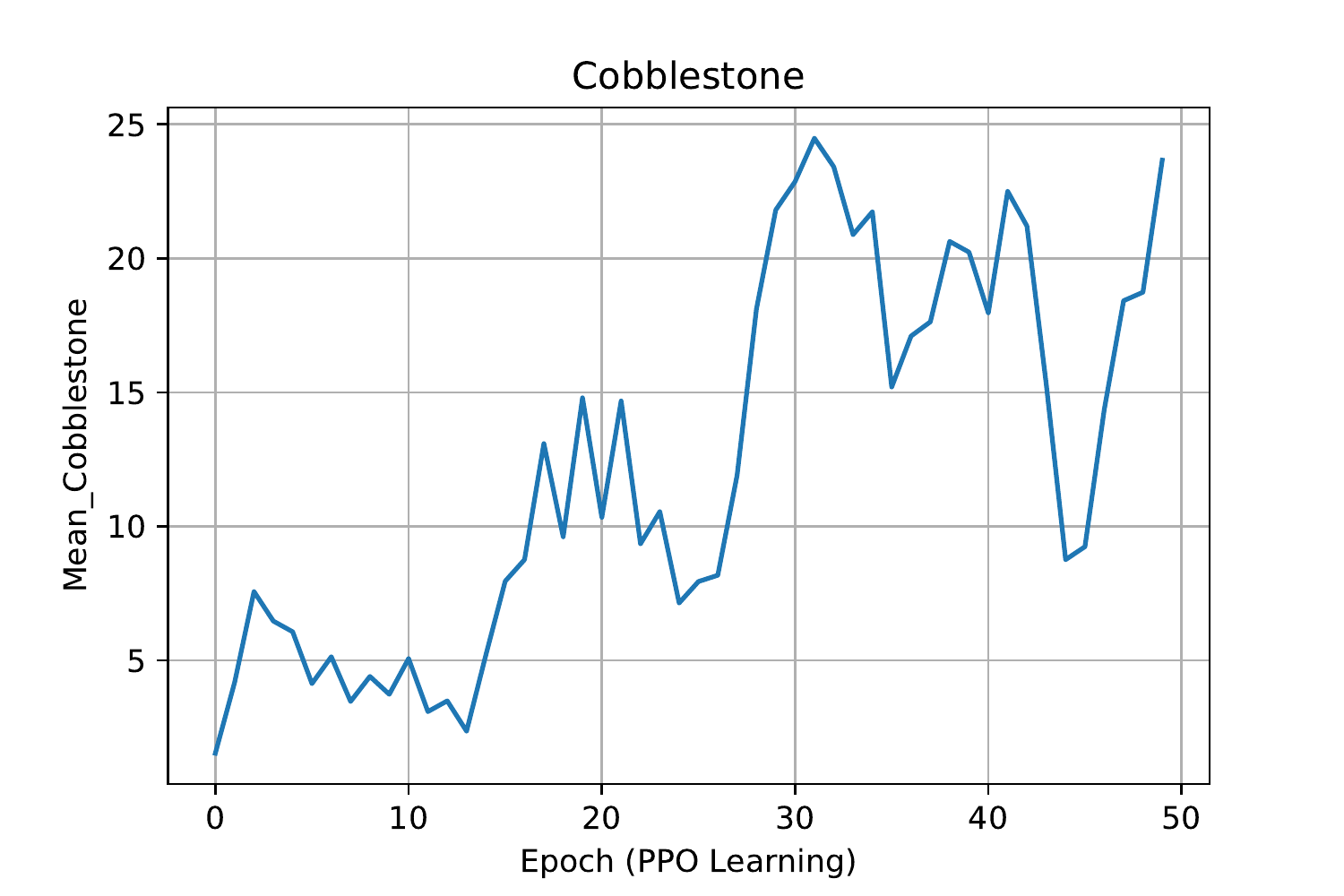}}
\end{minipage}\hfill 
  \caption[]{Average number of stone collected during training: left: Behavioral Cloning, Right: PPO Training}
  \label{fig:minerl_stone}
\end{figure*}

\begin{figure*}[t!]
  \centering
\begin{minipage}[t]{0.45\textwidth}
  \centering
    \raisebox{-\height}{\includegraphics[width=1\textwidth]
    {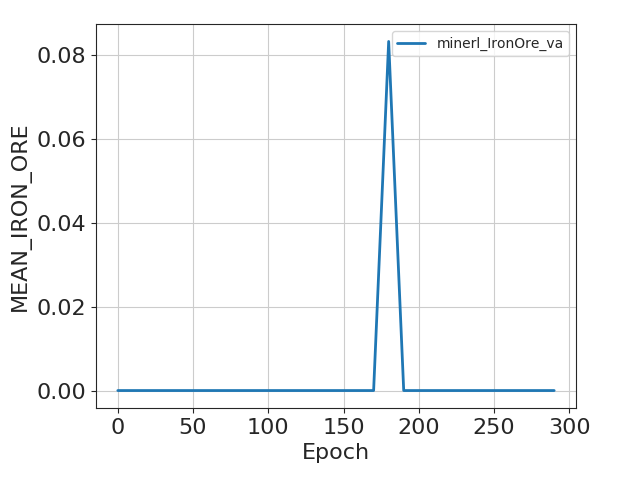}}
\end{minipage}\hfill
\begin{minipage}[t]{0.50\textwidth}
  \centering
    \raisebox{-\height}{\includegraphics[width=1\textwidth]
    {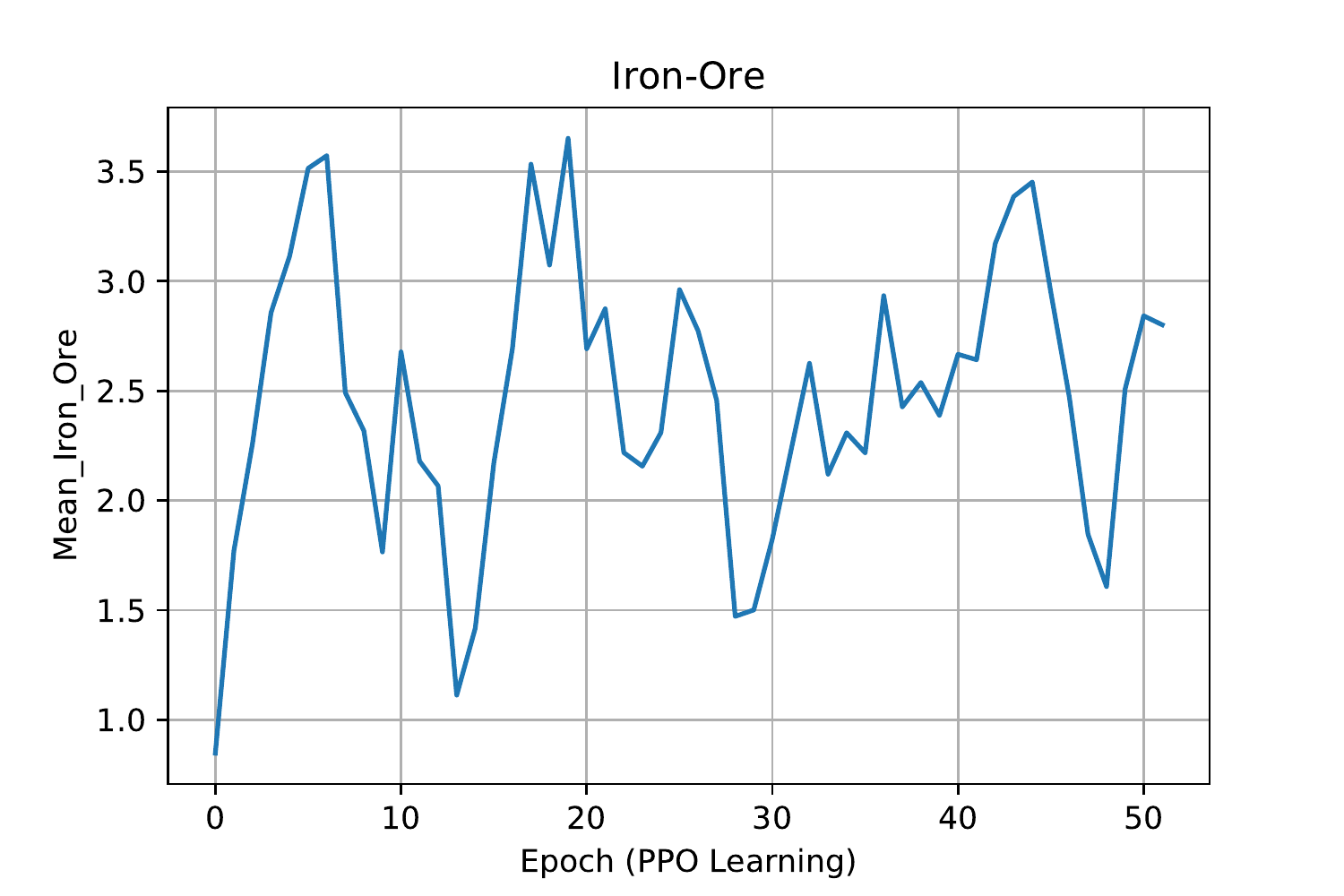}}
\end{minipage}\hfill 
  \caption[]{Average number of iron-ore collected during training: left: Behavioral Cloning, Right: PPO Training}
  \label{fig:minerl_iron}
\end{figure*}

\subsection{Reproducing the Artificial Task Results}
The code to reproduce the results and figures of both artificial tasks is provided as supplementary material.
The \texttt{README} contains step-by-step instructions to set up an environment and run the experiments. By default, instead of using 100 seeds per experiment only 10 are used in the demonstration code.

Finally, a video showcasing the Minecraft agent is also provided as supplementary material.

\subsection{Software Libraries}
We are thankful towards the developers of Mazelab \cite{mazelab}, PyTorch \cite{pytorch2019}, OpenAI Gym \cite{Brockman:16}, Numpy \cite{harris2020array}, Matplotlib \cite{Hunter:2007} and Minecraft \cite{Guss:19}.

\subsection{Compute}
Artificial task (I) and (II) experiments were performed using CPU only as GPU speed-up was negligible.
The final results for all methods were created on an internal CPU cluster with 128 CPU cores with a measured wall-clock time of ~10,360 hours. The majority of compute is spent on baseline methods.

For Minecraft, during development 6 to 8 nodes each with 4 GPUs of an 
internal GPU cluster were used for roughly 
six months of GPU compute time (Nvidia Titan V and 2080 TI). 

The compute required for training the final agent was well
within the challenge parameters (4 days on a single node with one GPU).

\end{document}